\documentclass{article}

\usepackage{microtype}
\usepackage{graphicx}
\usepackage{subcaption}
\usepackage{booktabs} % for professional tables

\usepackage{hyperref}

\usepackage[accepted]{icml2026}

\usepackage{amsmath}
\usepackage{amsfonts}
\usepackage{bm}

\usepackage[capitalize,noabbrev]{cleveref}
\usepackage{booktabs}
\usepackage{multirow}

\usepackage{amssymb}
\usepackage{mathtools}
\usepackage{amsthm}

\usepackage{tcolorbox}
\usepackage{xspace}
\usepackage[normalem]{ulem}

\usepackage{graphicx}
\usepackage{wrapfig}

\usepackage{units} % for \nicefrac

\makeatletter
\expandafter\def\expandafter\normalsize\expandafter{%
  \normalsize
  \setlength\abovedisplayskip{7pt plus 2pt minus 2pt}%
  \setlength\belowdisplayskip{6pt plus 2pt minus 2pt}%
  \setlength\abovedisplayshortskip{4pt}%
  \setlength\belowdisplayshortskip{3pt}%
  \setlength\thm@preskip{7pt plus 2pt minus 2pt}%
  \setlength\thm@postskip{6pt plus 2pt minus 2pt}%
}
\makeatother

\theoremstyle{plain}
\newtheorem{theorem}{Theorem}[section]

\newtheorem{corollary}[theorem]{Corollary}
\theoremstyle{definition}

\newtheorem{assumption}[theorem]{Assumption}
\theoremstyle{remark}

\def\eqref#1{equation~\ref{#1}}
\def\1{\bm{1}}

\DeclareMathAlphabet{\mathsfit}{\encodingdefault}{\sfdefault}{m}{sl}
\SetMathAlphabet{\mathsfit}{bold}{\encodingdefault}{\sfdefault}{bx}{n}

\newcommand{\pdata}{p_{\rm{data}}}
\newcommand{\R}{\mathbb{R}}

\DeclareMathOperator*{\argmax}{arg\,max}
\DeclareMathOperator*{\argmin}{arg\,min}

\newcommand{\N}{\mathbb{N}}

\newcommand{\Ncal}{\mathcal{N}}

\newcommand{\Xcal}{\mathcal{X}}

\newcommand{\Xbf}{\mathbf{X}}

\def\bmat{\left[ \begin{array}}
\def\emat{\end{array} \right]}

\newcommand{\Esymb}{\mathbb{E}}

\newcommand{\norm}[1]{\left\lVert#1\right\rVert}

\newcommand{\maxbig}[1]{\left\langle #1 \right\rangle_+}

\newcommand{\pbase}{p^{\text{base}}}
\newcommand{\ppre}{p^{\text{pre}}}

\newcommand{\pipre}{\pi^{\text{pre}}}

\newcommand{\diff}{\text{d}}
\newcommand{\dt}{\text{d}t}

\DeclareUnicodeCharacter{0301}{\'{e}}
 \usepackage{enumitem}
\usepackage{pifont}
\newcommand{\cmark}{\ding{51}}%
\newcommand{\xmark}{\ding{55}}%

\usepackage[capitalize,noabbrev]{cleveref}

\usepackage[textsize=tiny]{todonotes}

\icmltitlerunning{Constrained Flow Optimization via Sequential Fine-Tuning for Molecular Design}

\DeclareRobustCommand{\ie}{i.e.,\@\xspace}
  
\DeclareRobustCommand{\eg}{e.g.,\@\xspace}
\DeclareRobustCommand{\wrt}{w.r.t.\@\xspace}

\newcommand{\X}{\mathcal{X}}
\newcommand{\mypar}[1]{\textbf{#1.}}
\newcommand{\A}{\mathcal{A}}

\newcommand{\AlgNameLong}{\textbf{C}onstrained \textbf{F}low \textbf{O}ptimization\xspace}
\newcommand{\AlgNameShort}{\textsc{\small{CFO}}\xspace}
\newcommand{\AlgNameShortNormal}{\textsc{CFO}\xspace}
\newcommand{\AlgNameDef}{\textbf{C}onstrained \textbf{F}low \textbf{O}ptimization (\textsc{\small{CFO}})\xspace}

\newcommand{\Solver}{\textsc{\small{FineTuningSolver}}\xspace}
\newcommand{\SolverBig}{\textsc{FineTuningSolver}}
\definecolor{pastelblueold}{RGB}{56,146,236}
\definecolor{pastelblue}{RGB}{43,115,187}
\definecolor{pastelgreen}{RGB}{63,159,95}

\definecolor{myviolet}{rgb}{0.6, 0.4, 0.8}
\definecolor{mygreen}{rgb}{0.0, 0.5, 0.0}
\definecolor{myorange}{rgb}{1.00, 0.65, 0.0}
\definecolor{myblue}{rgb}{0.27, 0.5, 0.7}

\hypersetup{
  colorlinks=true,
  citecolor=pastelgreen,
  linkcolor=pastelblue,
  urlcolor=pastelblue
}

\begin{document}

\twocolumn[
  \icmltitle{Constrained Flow Optimization via Sequential Fine-Tuning for Molecular Design}

  % It is OKAY to include author information, even for blind submissions: the
  % style file will automatically remove it for you unless you've provided
  % the [accepted] option to the icml2026 package.

  % List of affiliations: The first argument should be a (short) identifier you
  % will use later to specify author affiliations Academic affiliations
  % should list Department, University, City, Region, Country Industry
  % affiliations should list Company, City, Region, Country

  % You can specify symbols, otherwise they are numbered in order. Ideally, you
  % should not use this facility. Affiliations will be numbered in order of
  % appearance and this is the preferred way.
  \icmlsetsymbol{equal}{*}

    \begin{icmlauthorlist}
        \icmlauthor{Sven Gutjahr}{equal,cs}
        \icmlauthor{Riccardo De Santi}{equal,cs,aicenter,nccr}
        \icmlauthor{Luca Schaufelberger}{equal,nccr,chem}
        \icmlauthor{Kjell Jorner}{nccr,chem}
        \icmlauthor{Andreas Krause}{cs,aicenter,nccr}
    \end{icmlauthorlist}

    \icmlaffiliation{cs}{Department of Computer Science, ETH Zurich}
    \icmlaffiliation{chem}{Institute of Chemical and Bioengineering, Department of Chemistry and Applied Biosciences, ETH Zurich}
    \icmlaffiliation{aicenter}{ETH AI Center}
    \icmlaffiliation{nccr}{NCCR Catalysis, Switzerland}
    
  \icmlcorrespondingauthor{Sven Gutjahr, Riccardo De Santi, Luca Schaufelberger}{{sgutjahr, rdesanti, schaluca}@ethz.ch}

  % You may provide any keywords that you find helpful for describing your
  % paper; these are used to populate the "keywords" metadata in the PDF but
  % will not be shown in the document
  \icmlkeywords{Machine Learning, ICML}

  \vskip 0.3in
]

% this must go after the closing bracket ] following \twocolumn[ ...

% This command actually creates the footnote in the first column listing the
% affiliations and the copyright notice. The command takes one argument, which
% is text to display at the start of the footnote. The \icmlEqualContribution
% command is standard text for equal contribution. Remove it (just {}) if you
% do not need this facility.

% Use ONE of the following lines. DO NOT remove the command.
% If you have no special notice, KEEP empty braces:

 \printAffiliationsAndNotice{\icmlEqualContribution}

\begin{abstract}
\looseness -1 Adapting generative foundation models, in particular diffusion and flow models, to optimize given reward functions (\eg binding affinity) while satisfying constraints (\eg molecular synthesizability) is fundamental for their adoption in real-world scientific discovery applications such as molecular design or protein engineering.
While recent works have introduced scalable methods for reward-guided fine-tuning of such models via reinforcement learning and control schemes, it remains an open problem how to algorithmically trade-off reward maximization and constraint satisfaction in a reliable and predictable manner.
Motivated by this challenge, we first present a rigorous framework for \emph{Constrained Generative Optimization}, which brings an optimization viewpoint to the introduced adaptation problem and retrieves the relevant task of constrained generation as a sub-case. Then, we introduce \AlgNameDef, an algorithm that automatically and provably balances reward maximization and constraint satisfaction by reducing the original problem to sequential fine-tuning via established, scalable methods.
We provide convergence guarantees for constrained generative optimization and constrained generation via \AlgNameShort.
Ultimately, we present an experimental evaluation of \AlgNameShort on both synthetic, yet illustrative, settings, and a molecular design task.
Across these evaluations, \AlgNameShort achieves consistent increases in reward while ensuring high constraint satisfaction, showcasing its practical utility for constrained generative optimization.
\end{abstract}
 
\vspace{-15pt}
\section{Introduction}\vspace{-2pt}
\label{sec:intro}

\looseness -1 Recent advances in generative modeling, particularly the advent of diffusion \citep{ho_denoising_2020, song_score-based_2021, song_denoising_2022} and flow models \citep{lipman_flow_2023}, have led to state-of-the-art performance in several fields such as image synthesis \citep{rombach_high-resolution_2022}, biology \citep{corso_diffdock_2023, wohlwend_boltz-1_2024}, and chemistry \citep{hoogeboom2022equivariant}. In particular, they have been applied for the design of protein structures \citep{wu_protein_2024}, drug-like molecules \citep{dunn_mixed_2024}, and DNA sequences \citep{stark_dirichlet_2024}, among others.
These generative models excel at capturing complex data distributions and generating realistic samples. However, approximately sampling from the data distribution is insufficient for most real-world discovery applications, where one typically wishes to generate candidates maximizing task-specific \emph{rewards}, a problem recently denoted by \emph{generative optimization} \citep{santi2025flow, li2024diffusion}.
Examples of rewards of interest include binding affinity in drug discovery \citep{pantsar_binding_2018}, or drug-likeness \citep{bickerton_quantifying_2012}.
To tackle the generative optimization problem, recent works have introduced scalable fine-tuning methods that adapt a pre-trained flow or diffusion model to maximize a given reward function under KL-regularization from the pre-trained model, via reinforcement learning (RL) or control theoretic methods \citep[\eg][]{domingo-enrich_adjoint_2025, uehara_fine-tuning_2024, tang_fine-tuning_2024}.

\mypar{The importance of known constraints in generative optimization}
\looseness -1 Many generative design and scientific discovery problems require generated samples to satisfy explicit, domain-specific constraints, \eg bounded toxicity \citep{amorim_advancing_2024}, synthetic accessibility \citep{ertl_estimation_2009, neeser_fsscore_2024}, or biophysical plausibility of docking poses \citep{buttenschoen_posebusters_2024}.
Even though current fine-tuning schemes regularize toward a pre-trained model \citep{domingo-enrich_adjoint_2025, uehara_fine-tuning_2024, tang_fine-tuning_2024}, which limits the distributional drift, they cannot certify hard constraints to be satisfied \citep{uehara_understanding_2024}. This limitation arises because task-specific constraints may not be encoded in the original dataset or may be learned only imperfectly from finite training data. 
A naive approach to address such explicit constraints would be to include them as rewards, i.e., as another term in a manually weighted objective function. However, this approach is unreliable in practice, as the appropriate weighting between rewards and constraints varies across tasks and training phases, and needs to be determined through inefficient trial and error. Furthermore, as optimization explores high-reward regions, the chosen weights can unexpectedly favor reward at the expense of constraint satisfaction, yielding samples with attractive rewards, which, however, violate the domain-specific constraints. Driven by these limitations of current flow adaptation methods for constraint satisfaction, we pose the following question:

\begin{center}
    \emph{How can we fine-tune a pre-trained flow or diffusion model to reliably and predictably trade-off reward optimization and constraint satisfaction?}
\end{center}

\mypar{Our approach}
\looseness -1 A growing body of work demonstrates that classical optimization ideas can be meaningfully adapted to the fine-tuning of flow and diffusion models, including formulations motivated by mirror descent \citep{nemirovskii_problem_1983, santi2025flow}, chance constraints \citep{ben2000robust, zhang_gradient_2025}, and bilevel optimization \citep{bracken1973mathematical, xiao_first-order_2025}. 
Analogously, in this work, we aim to tackle this question by introducing a formal framework for \emph{Constrained Generative Optimization} (Sec.\ \ref{sec:problem_setting}) via flow model fine-tuning, which entails adapting a pre-trained flow model to generate samples maximizing a reward function while satisfying arbitrary constraints. Moreover, the proposed formulation retrieves the relevant task of constrained generative modeling as the sub-case where the reward function is constant. Next, we introduce \AlgNameDef, a dual approach based on the augmented Lagrangian scheme \citep{birgin_practical_2014} that turns the constrained objective into a sequence of ordinary generative optimization subproblems. At a high level, \AlgNameShort alternates between two steps: solving a KL-regularized fine-tuning problem \citep{domingo-enrich_adjoint_2025, uehara_fine-tuning_2024} to maximize an augmented reward function, and updating the parameters of the augmented reward using estimated constraint violations on generated samples (see Sec.\ \ref{sec:algo}). This sequentially tunes the penalty on constraint violations, thereby avoiding the need to manually trade-off weight selection. \AlgNameShort renders it possible to adapt a pre-trained flow model to maximize expected rewards while enforcing satisfaction of arbitrary constraints and preserving closeness to the pre-trained model. We provide guarantees that ensure constraint satisfaction under the realistic assumptions of an approximate solver, and that achieve reward maximization under a more idealized setting (Sec.\ \ref{sec:guarantees}). Finally, we evaluate \AlgNameShort for both constrained generative optimization and modeling problems, showcasing its performance in both visually interpretable settings and in molecular design tasks, showing constrained optimization of quantum mechanical properties (Sec.\ \ref{sec:results}).

\mypar{Our contributions}
We present the following contributions:
\begin{itemize}[noitemsep,topsep=-5pt,parsep=2pt,leftmargin=6pt]
    \item \looseness -1 We formulate \emph{Constrained Generative Optimization} via flow fine-tuning, capturing the practically relevant task of reward-guided adaptation under given constraints (Sec. \ref{sec:problem_setting}).
    \item \looseness -1 We introduce \AlgNameDef, an augmented Lagrangian-based method that provably tackles the introduced problem via sequential fine-tuning (Sec. \ref{sec:algo}).
    \item \looseness -1 We provide guarantees for constrained generation and optimization via \AlgNameShort under two oracle assumptions, leveraging augmented Lagrangian theory (Sec.~\ref{sec:guarantees}).
    \item \looseness -1 We demonstrate \AlgNameShort's ability to trade-off reward maximization and constraint satisfaction in both a visually interpretable setting and a high-dimensional molecular design task (Sec. \ref{sec:results}).
\end{itemize}

\vspace{-2pt}
\begin{figure*}[t]
    \centering
    \begin{subfigure}[b]{0.55\textwidth} 
        \centering
        \includegraphics[width=\textwidth, keepaspectratio]{figs/constrained_gen_process.pdf}
        \caption{Constrained generative optimization via fine-tuning problem.}
        \label{fig:constrained_gen_left}
    \end{subfigure}%
    \hfill
    \begin{subfigure}[b]{0.40\textwidth} 
        \centering
        \includegraphics[width=\textwidth, keepaspectratio]{figs/constrained_gen.pdf}
        \caption{Adaptation to low-cost area within black line.}
        \label{fig:constrained_gen_right}
    \end{subfigure}
    \caption{(\ref{fig:constrained_gen_left}) Pre-trained and fine-tuned policies inducing densities $\ppre_1$ and optimal density $p_1^*$ \wrt reward $r$ increasing downwards and in red a high-cost area. (\ref{fig:constrained_gen_right}) Pre-trained model $\ppre_1$ adapts into $p_1^*$ to maximize $r$ and stay within the constraint region inside the black line.}
    \label{fig:joint_top_figures}
    \vspace{-8pt}
\end{figure*} \section{Background and Notation}\vspace{-2pt}
\label{sec:background}

\mypar{Flow Models}
\looseness -1 Flow-based generative models constitute a prominent class of approaches for transforming a simple base $\pbase$ distribution (\eg Gaussian) into a complex data distribution $\pdata$ \citep{song_denoising_2022, song_score-based_2021, lipman_flow_2023}.
Formally, a flow is a time-dependent map $\psi:[0,1] \times \R^d \to \R^d$, where $\psi_t(x_0)$ denotes the position at time $t$ of a sample that started at $x_0$. 
The trajectory of $x_t$ is governed by a time-dependent velocity field $u : [0,1] \times \R^d \to \R^d$ through the ordinary differential equation (ODE):
\begin{equation} \label{eq:flow}
    \tfrac{\diff}{\dt}\psi_t(x_0) = u_t(\psi_t(x_0)), \quad \psi_0(x_0) = x_0.
\end{equation}
A \emph{generative} flow model defines a continuous-time Markov process $\{X_t\}_{t \in [0,1]}$, by sampling an initial value $X_0 \sim \pbase$ and evolving it according to the flow map, $X_t = \psi_t(X_0)$. 
The terminal state $X_1 = \psi_1(X_0)$ is required to follow the target distribution, i.e., $X_1 \sim \pdata$. 
Equivalently, the flow induces a family of intermediate marginal densities $p_t$ describing the law of $X_t$ at each time $t \in [0,1]$. 
We say that a velocity field $u$ generates the probability path $\{p_t\}_{t \in [0,1]}$ if the random variable $X_t \! = \! \psi_t(X_0) \! \sim \! p_t$ for all $t\!<\!1$. 
In practice, choosing $\pbase = \Ncal(0,I)$ makes sampling tractable while $u_t$ provides the complexity needed to reach $\pdata$.

\mypar{Flow Matching}
\looseness -1
Flow Matching \citep{lipman_flow_2023} is a simulation-free algorithm to learn a vector field $u_\theta$, such that the induced marginal densities $p^{u_\theta}_t$ coincide with a prescribed probability path $\{p_t\}_{t\in[0,1]}$ and satisfying $p^{u_\theta}_0 = \pbase$ and $p^{u_\theta}_1 = \pdata$. \citet{lipman_flow_2023} demonstrate that the Flow Matching and Conditional Flow Matching objectives share identical gradients, ensuring they converge to the same optimal vector field. In practice, this is achieved by introducing a reference flow and regressing the learned field $u_\theta(x_t, t)$ against the reference velocity:
\begin{equation} \label{def:flow_matching_loss}
    \min_\theta \Esymb_{t, p(x_0,x_1)} 
    \left[ 
        \norm{ u_\theta(x_t, t) - \tfrac{\diff}{\dt}\psi^\text{ref}_t(x) }^2 
    \right].
\end{equation}
With an appropriate choice of the reference flow, specifically one that follows a diffusion trajectory, the Flow Matching framework recovers diffusion models as a particular case, showing that diffusion training objectives can be viewed as special instances of flow-based learning \citep{lipman_flow_2023, domingo-enrich_adjoint_2025}.
In practice, $u_\theta$ is parameterized by a neural network, and sampling from $p^{u_\theta}_1$ $(\approx \pdata)$ is performed via simulating the ODE in Eq. \ref{eq:flow}.

\mypar{Reinforcement Learning in continuous-time}
\looseness -1 Finite-horizon continuous-time reinforcement learning (RL) \citep{wang2020reinforcement, treven2023efficient, zhao_scores_2025} provides a framework for decision-making in dynamical systems and can be cast as an instance of optimal control. 
The state space is $\X \! \coloneqq \! \R^d \times \! [0,1]$ and actions are taken from an action space $\A$. 
A policy $\pi:\X \to \A$ prescribes an action for each state $(x,t) \in \X$, yielding the dynamics:
\begin{equation}\label{eq:continuous_RL}
    \tfrac{\diff}{\dt}\psi_t(x) \;=\; a_t(\psi_t(x)), 
    \; a_t=\pi(X_t,t), \; X_0\sim\pbase.
\end{equation}
The resulting process $\{X_t\}_{t\in[0,1]}$ induces a family of marginals $\{p_t^\pi\}_{t\in[0,1]}$. 
The aim is to optimize the expected performance, typically expressed through an integral reward accumulated along the trajectory and a terminal reward at $t=1$ \citep{wang2020reinforcement}. 
In our setting, we focus solely on the terminal reward. We use RL notation to emphasize its generality and connection to standard practice, while noting that the setting coincides with deterministic optimal control since both the dynamics and the objective are known.

\mypar{Pre-trained Flow Models as RL Policy}
\looseness -1 A pre-trained flow can be viewed as a feedback policy: at each time $t$ and state $x$, the velocity field $u^{\text{pre}}(x,t)$ prescribes the action that determines how the system evolves. 
Defining $a_t = \pi^{\text{pre}}(X_t,t) \coloneqq u^{\text{pre}}(X_t,t)$ for a policy $\pi^{\text{pre}}:\X \!\to\! \A$ \citep{de2025provable}, and substituting into Eq. ~\ref{eq:continuous_RL}, yields deterministic closed-loop dynamics.
Starting from $X_0 \!\sim\! p_0$, rolling out $\pi^{\text{pre}}$ produces a trajectory $\{X_t\}_{t\in[0,1]}$ with induced marginals $\{p_t^{\pi^{\text{pre}}}\}_{t\in[0,1]}$. 
Intuitively, the policy selects the direction and speed to steer samples so that their distribution progressively matches the data, with the terminal marginal $p_1^{\text{pre}} \coloneqq p_1^{\pi^{\text{pre}}} \approx p_{\text{data}}$. 
Viewing flow models through this policy lens not only unifies flow-based generation and control theory but also enables downstream fine-tuning as policy improvement with a terminal reward. For brevity, we refer to the pre-trained flow by its implicit policy $\pipre$.

\vspace{-2pt}
\section{Constrained Generative Optimization via Flow Fine-Tuning}\vspace{-2pt}
\label{sec:problem_setting}

\looseness -1 In this work, we aim to fine-tune a pre-trained flow model $\pipre$ to obtain a new model $\pi^*$ inducing a process:
\begin{equation} \label{def:ODE_process}
    \tfrac{\diff}{\dt} \psi_t(x) = a_t^\text{fine}(\psi_t(x)), \text{ with } a_t^\text{fine} = \pi^*(x_t,t).
\end{equation}
\looseness -1 such that its induced distribution $p^*_1 \coloneqq p_1^{\pi^*}$ maximizes the expected value of a property of interest, while satisfying arbitrary constraints and preserving prior information from $\pipre$. We denote this problem by \emph{constrained generative optimization via fine-tuning}, illustrated in Figure \ref{fig:joint_top_figures} and defined as:
\begin{tcolorbox}[colframe=white!, top=2pt,left=2pt,right=2pt,bottom=2pt]
\centering \textbf{Constrained Generative Optimization via Flow Fine-Tuning}
\begin{equation} 
    \begin{split}
        \argmax_\pi \; & \Esymb_{x \sim p^\pi_1} [r(x)] - \alpha D_{KL}(p^\pi_1 || \ppre_1) \\
        \text{s.t.} \; & \Esymb_{x \sim p^\pi_1} [c(x)] \leq B \label{eq:constrained_gen_problem}
    \end{split}
\end{equation}
\end{tcolorbox}

\looseness -1 Where $r \! : \! \Xcal \! \to \! \R$ and $c \! : \! \Xcal \! \to \! \R$ are a scalar reward and constraint functions, $\alpha \! \in \! \R_+$ determines the KL-regularization strength, and $B \! \in \! \R$ is the upper bound on the constraint.
At the level of the problem statement, no differentiability of $r$ or $c$ is assumed: Eq.~\ref{eq:constrained_gen_problem} is posed for arbitrary $r, c$. Any additional regularity arises only from the particular inner solver we instantiate later, not from the problem itself.
Setting the reward term $r$ to be constant (\eg $r \! = \! 0$) in Eq.~\ref{eq:constrained_gen_problem}, reduces the objective to a formulation of \emph{constrained generation} as minimization of a KL divergence between the fine-tuned model density $p^\pi_1$ and the pre-trained model (\ie $\ppre_1$), while satisfying the expected constraint bound in Eq.~\ref{eq:constrained_gen_problem}:
\begin{equation} 
    \argmin_\pi \; \alpha D_{KL}(p^\pi_1 || \ppre_1) \quad \text{s.t.} \quad  \Esymb_{x \sim p^\pi_1} [c(x)] \leq B \label{eq:constrained_gen_subcase}
\end{equation}
This problem has been studied before by \citet{chamon2025constrainedsamplingprimalduallangevin, khalafi2025compositionalignmentdiffusionmodels}. 
A first approach to tackle Eq.~\ref{eq:constrained_gen_problem} is to optimize a fixed-weight Lagrangian \citep{ chamon2025constrainedsamplingprimalduallangevin, zhang2025constraineddiffuserssafeplanning}:
\begin{equation} \label{eq:fixed} 
    \begin{split}
        \max_{\pi}\; \mathcal{L}_\mu(\pi) \; = \; \Esymb_{x \sim p^\pi_1} [r(x)] - \alpha D_{KL}(p^\pi_1 || \ppre_1) \\ - \mu \left( \Esymb_{x \sim p^\pi_1} [c(x)] -B \right) \quad \text{s.t.} \;\; \mu \geq 0
    \end{split}
\end{equation}
\looseness -1 Here, $\mu \! \in \! \R_{\geq0}$ denotes the Lagrange multiplier that penalizes constraint violations. However, optimizing $\mathcal{L}_\mu$ with a fixed $\mu$ is unreliable for enforcing the constraint. First, feasibility (\ie $\Esymb_{x\sim p^{\pi}_1}[c(x)] \leq B$) is not guaranteed for any given $\mu$, unless it exceeds an unknown, problem-dependent threshold. 
Second, $\mu$ must be tuned by hand, and there is no guaranteed or monotone mapping from $\mu$ to the resulting violation, so trial-and-error often leads to either infeasible or overly conservative solutions.
Finally, if $r$ is unbounded or approximate (\eg a learned proxy reward function), maximizing $\mathcal{L}_{\mu}$ may shift probability mass toward high-reward regions, yielding invalid designs.

\looseness -1 Toward overcoming such limitations, in the next section, we propose an algorithm that can provably tackle the constrained generative optimization problem introduced in Eq.~\ref{eq:constrained_gen_problem} by sequentially fine-tuning the initial pre-trained model via established methods \citep[\eg][]{domingo-enrich_adjoint_2025}.

\vspace{-2pt}
\section{\AlgNameDef} \vspace{-2pt}
\label{sec:algo}

\looseness -1 In the following, we introduce \AlgNameLong (Alg.\ \ref{alg:cfo}), which addresses the \emph{constrained generative optimization} problem as formulated in Eq.~\ref{eq:constrained_gen_problem} by solving a sequence of unconstrained entropy-regularized fine-tuning subproblems, each with a different objective function computed via an augmented Lagrangian (AL) scheme \citep{rockafellar1976augmented, fortin_minimization_1975, birgin_practical_2014}. Intuitively, \AlgNameShort tackles the problem by embedding the given constraint into an \emph{augmented} reward via two adaptive dual parameters, so that at each iteration, a standard entropy-regularized fine-tuning solver steers the model toward feasibility while improving reward.
Concretely, \AlgNameShort maintains two dual variables, the penalty parameter $\rho_k$ and the Lagrange multiplier $\lambda_k$, whose updates effectively realize a proximal-point-style scheme on the dual variables \citep{birgin_practical_2014}.

\mypar{Overview of the Algorithm}
\looseness -1 \AlgNameShort (Alg.\ \ref{alg:cfo}) takes as input a pre-trained model $\pi_\text{pre}$, a number of iterations $K$, a minimal Lagrange multiplier $\lambda_\text{min}\!<\!0$, an initial penalty parameter $\rho_\text{init}\!>\!0$, a penalty growth rate $\eta\!\geq\!1$, and a contraction value $0\!<\!\tau\!<\!1$. At each iteration $k$, \AlgNameShort performs $5$ main steps:

\begin{algorithm}[!ht]
    \caption{\AlgNameDef}
    \label{alg:cfo}
    \begin{algorithmic}[1]
        \STATE \textbf{Input:}
            $\pi_\text{pre}$: pre-trained model, $K$: number of iterations, $\lambda_\text{min} < 0$: min. Lagrange multiplier, $\rho_{\text{init}} > 0$: initial penalty parameter, $\eta \geq 1$: growth rate, $0 < \tau < 1$: contraction value 
        \STATE \textbf{Init:} Set initial Lagrange multiplier $\lambda_1 = 0$ and penalty $\rho_1 = \rho_{\text{init}}$ parameters
        \FOR{$k = 1, 2, \dots, K$}
            \STATE \textbf{Step 1:} Update fine-tuning AL objective:\vspace{-1.5mm}
            \begin{equation} \label{eq:f_k}
                f_k(x) \coloneqq r(x) - \frac{\rho_k}{2}\left[ \max\left( 0, c(x) - B - \frac{\lambda_k}{\rho_k} \right) \right]^2
            \end{equation}
            \STATE \textbf{Step 2:} Compute $\pi_k$ via fine-tuning:\vspace{-1.5mm}
            \begin{equation} \label{eq:solver}
                \pi_k \gets \Solver(f_k, \pi_\text{pre})\vspace{-1.5mm}
            \end{equation}
            \STATE \textbf{Step 3:} Set the empirical constraint gap $G_k$ and contraction statistic $V_k$:
            \begin{equation} \label{eq:stats}
                \begin{split}
                    G_k & = \Esymb_{x \sim p^{\pi_k}_1} [c(x)] - B \\ 
                    V_k & = \min \left\{ G_k, - \lambda_k / \rho_k \right\}
                \end{split}
            \vspace{-1mm} 
            \end{equation}
            \STATE \textbf{Step 4:} Compute Lagrange multiplier proposal:\vspace{-1mm}
            \begin{equation} \label{eq:lambda_update}
                \lambda_{k+1} \gets \max\left\{ \lambda_\text{min}, \min \left\{ 0 , \lambda_k - \rho_k G_k \right\} \right\}
            \end{equation}
            \STATE \textbf{Step 5:} Set the new penalty:\vspace{-1.5mm}
            \begin{equation} \label{eq:rho_update}
                \rho_{k+1} = 
                \begin{cases}
                    \rho_k, & \text{if } k = 1 \text{ or } V_k \leq \tau V_{k-1}, \\
                    \eta \rho_k, & \text{otherwise}
                \end{cases}
            \end{equation}
        \ENDFOR
        \STATE \textbf{Return:} $\pi_K$
    \end{algorithmic}
\end{algorithm}

\looseness -1 \textbf{Step~1:} An augmented objective $f_k$ (Eq.~\ref{eq:f_k}) is formed as the difference between the reward and a penalty term:
\begin{equation*}
    f_k(x) \; = \; r(x)- \frac{\rho_k}{2} \left[\max \left(0,c(x)-B-\frac{\lambda_k}{\rho_k}\right)\right]^2,
\end{equation*}
where the offset $\lambda_k/\rho_k \le 0$ shifts the term toward the current expected constraint  \citep{birgin_practical_2014}. 

\looseness -1 \textbf{Step~2:} A \Solver \citep[\eg][]{domingo-enrich_adjoint_2025} computes $\pi_k$ by solving a standard KL-regularized control (or RL) subproblem, with the current objective $f_k$:
\begin{equation} \label{eq:oracle}
    \pi_k \;\in \;\arg\max_\pi \; \Esymb_{x \sim p^\pi_1} \left[ f_k(x) \right] - \alpha D_{KL}(p^\pi_1 || \ppre_1),
\end{equation}
For completeness, we report a detailed implementation of this \emph{oracle} step in Appendix \ref{apx-sec:am_implementation}.
Other established fine-tuning schemes can be used in place of Adjoint Matching, including gradient-free choices such as DiffusionNFT~\citep{zheng2025diffusionnft} and Flow-GRPO~\citep{liu2026flow}, which additionally enable non-differentiable rewards and constraints.

\looseness -1 \textbf{Step~3:} \AlgNameShort computes a Monte Carlo estimate of the constraint $c$ under the current policy $\pi_{k}$ (see Eq.~\ref{eq:stats}), and subtracts the user-defined bound B, thus obtaining the \emph{empirical constraint gap} $G_k$. Furthermore, a \emph{contraction statistic} $V_k$ is computed, which measures the current progress toward feasibility by comparing the recent estimate $G_k$ of the constraint gap with the $\lambda_k/\rho_k \le 0$ offset term.

\looseness -1 \textbf{Step~4:} Next, \AlgNameShort uses the empirical constraint gap $G_k$ to apply a projected dual update to the Lagrange multiplier (Eq.~\ref{eq:lambda_update}). If $G_k\!>\!0$, \ie the constraint is violated, the multiplier $\lambda_{k+1}$ is decreased, this strengthens the penalty.
Instead, if $G_k\!<\!0$, \ie the constraint is fulfilled, the Lagrange multiplier $\lambda_{k}$ is increased toward 0 to relax the penalty strength. 

\looseness -1 \textbf{Step~5:} The contraction statistic $V_k$ (Eq.~\ref{eq:stats}) assesses progress toward feasibility. If $V_k$ does not contract sufficiently, i.e., $V_k\!>\!\tau V_{k-1}$, where $\tau$ is a user-defined contraction rate, then \AlgNameShort infers that the penalty is not sufficiently high and thus increases it by a factor $\eta$. Instead, if $V_k$ is contracting, $\rho$ is kept fixed (Eq.~\ref{eq:rho_update}).
Ultimately, \AlgNameShort returns the fine-tuned policy $\pi_K$.

\looseness -1 A discussion of hyperparameters appears in Appendix \ref{apx-sec:parameter}. Nevertheless, it is a priori unclear whether \AlgNameShort is guaranteed to solve the \emph{constrained generative optimization} problem (Eq.~\ref{eq:constrained_gen_problem}). In the next section, we answer this affirmatively by showing that, under oracle assumptions, \AlgNameShort achieves reward optimality and arbitrary constraint satisfaction.

\vspace{-2pt}

\setlength{\abovecaptionskip}{0pt}
\begin{figure*}
    \centering
    \begin{subfigure}[b]{0.22\textwidth}
        \centering
        \includegraphics[width=\textwidth]{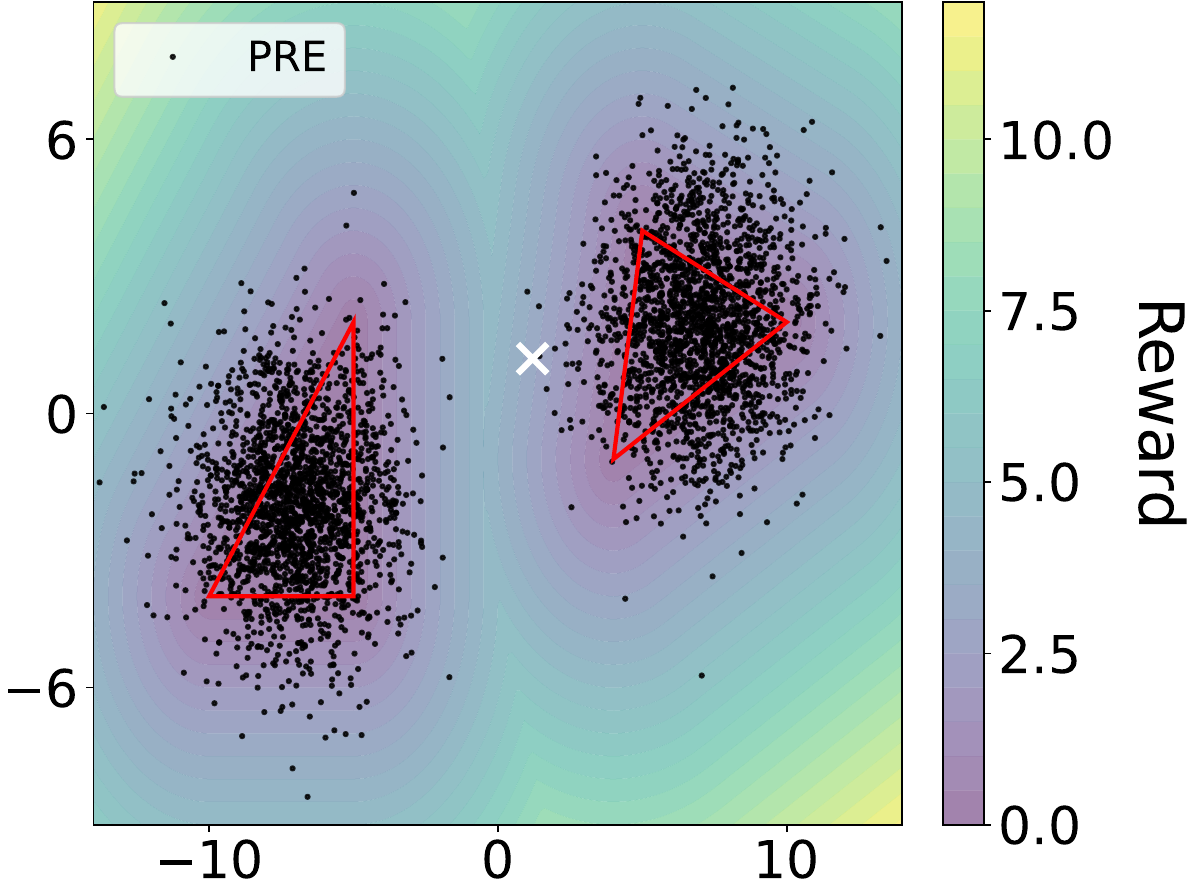}
        \caption{$\ppre_1$ samples}
        \label{fig:mog-pre}
    \end{subfigure}
    \hfill
    \begin{subfigure}{0.22\textwidth}
        \centering
        \includegraphics[width=\textwidth]{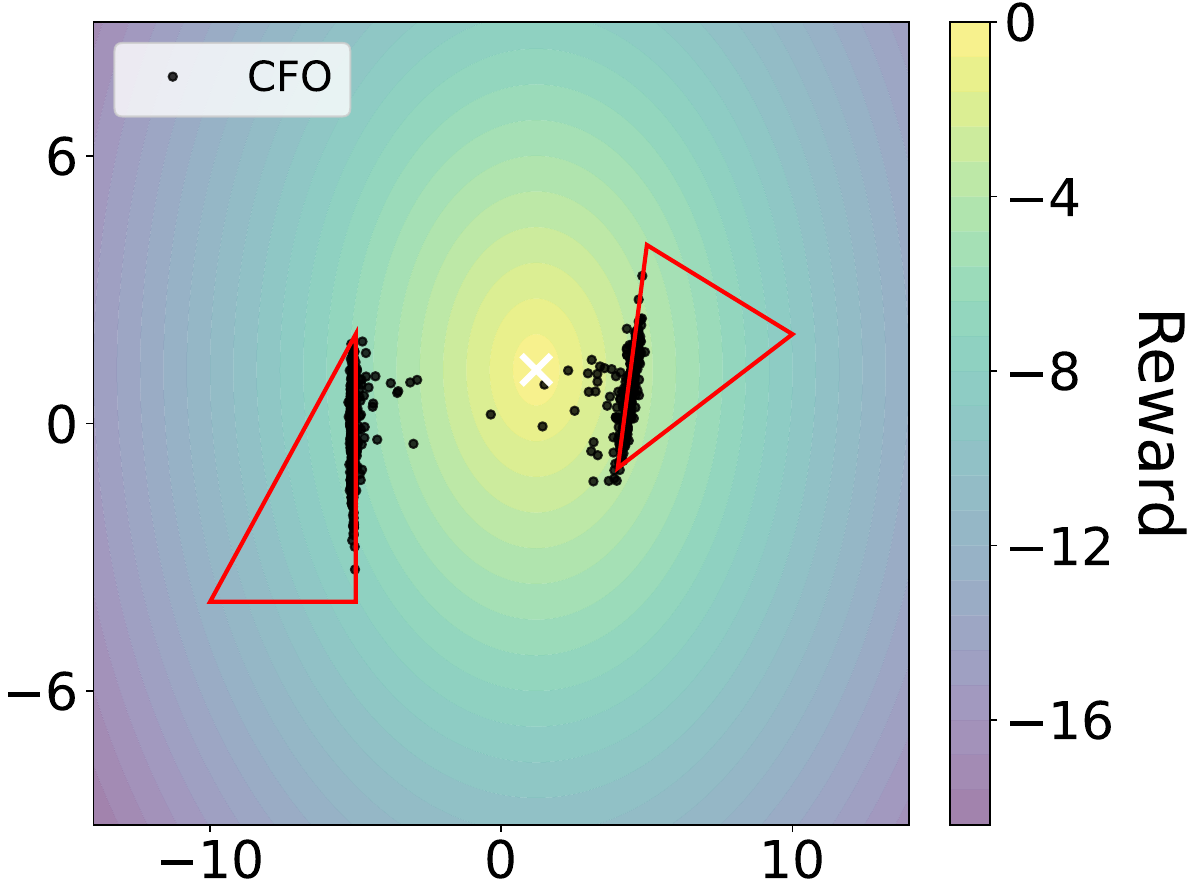}
        \caption{\AlgNameShort samples}
        \label{fig:mog-cfo}
    \end{subfigure}
    \hfill
    \begin{subfigure}{0.22\textwidth}
        \centering
        \includegraphics[width=\textwidth]{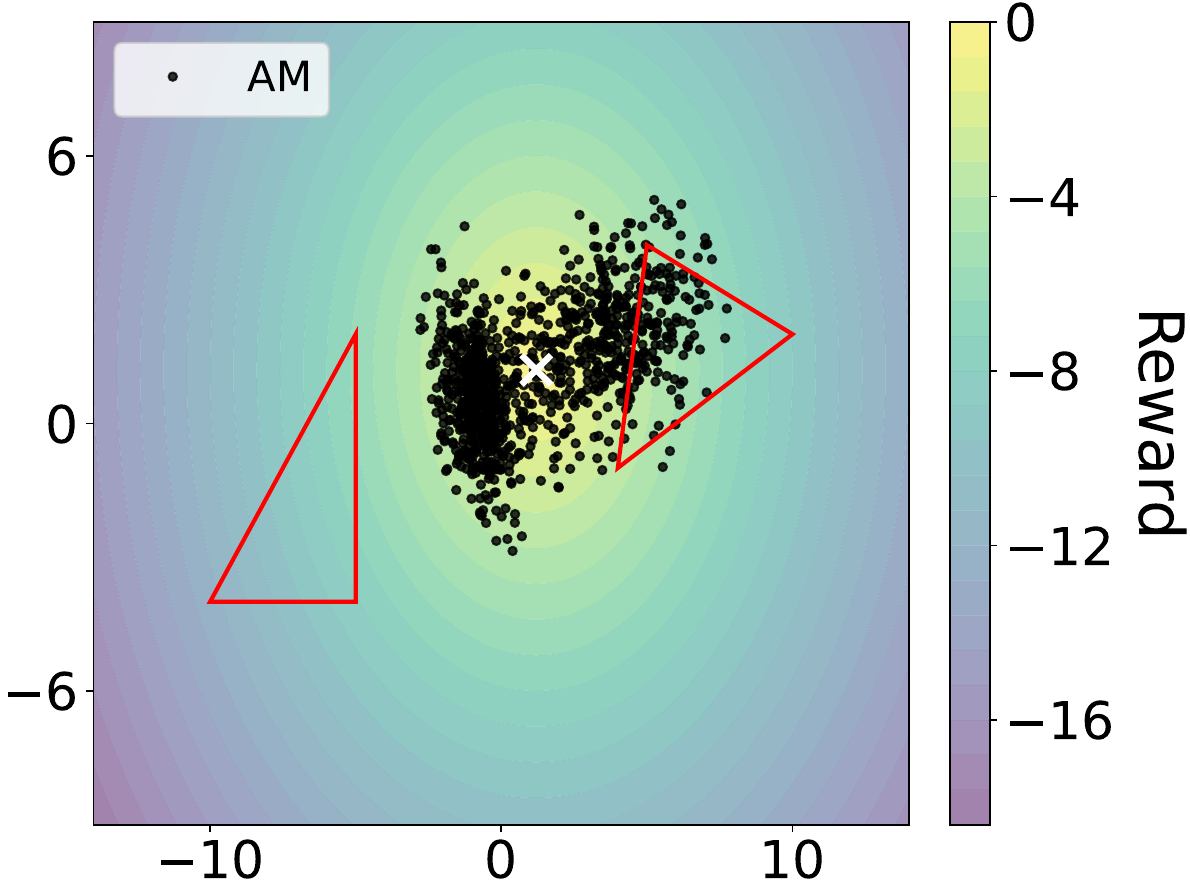}
        \caption{AM samples}
        \label{fig:mog-am}
    \end{subfigure}
    \hfill
    \begin{subfigure}{0.30\textwidth}
        \centering
        \resizebox{1\textwidth}{!}{%
            \begin{tabular}{lcc}
            \toprule
            & $\mathbb{E}[r(x)]\uparrow$ & $\mathbb{E}[c(x)]\downarrow$ \\
            \midrule
            PRE & $-7.62 \pm 0.03$ & $0.58 \pm 0.07$ \\
            \midrule
            \AlgNameShort$_\text{AM}$ & $-4.75 \pm 0.04$ & $0.12 \pm 0.06$ \\
            \midrule
            AM & \boldmath$-2.93 \pm 0.03$ & $2.47 \pm 0.11$ \\
            \midrule
            \AlgNameShort$_\text{NFT}$ & $-5.28 \pm 0.14$ & \boldmath$0.06 \pm 0.01$ \\
            \midrule
            DiffusionNFT & $-3.59 \pm 0.1$ & $1.76 \pm 0.05$ \\
            \bottomrule
            \end{tabular}
        }
        \vspace{8pt}
        \caption{Evaluation}
        \label{fig:mog-evaluation}
    \end{subfigure}
    \\[0.3em]
    \begin{subfigure}[b]{0.22\textwidth}
        \centering
        \includegraphics[width=\textwidth]{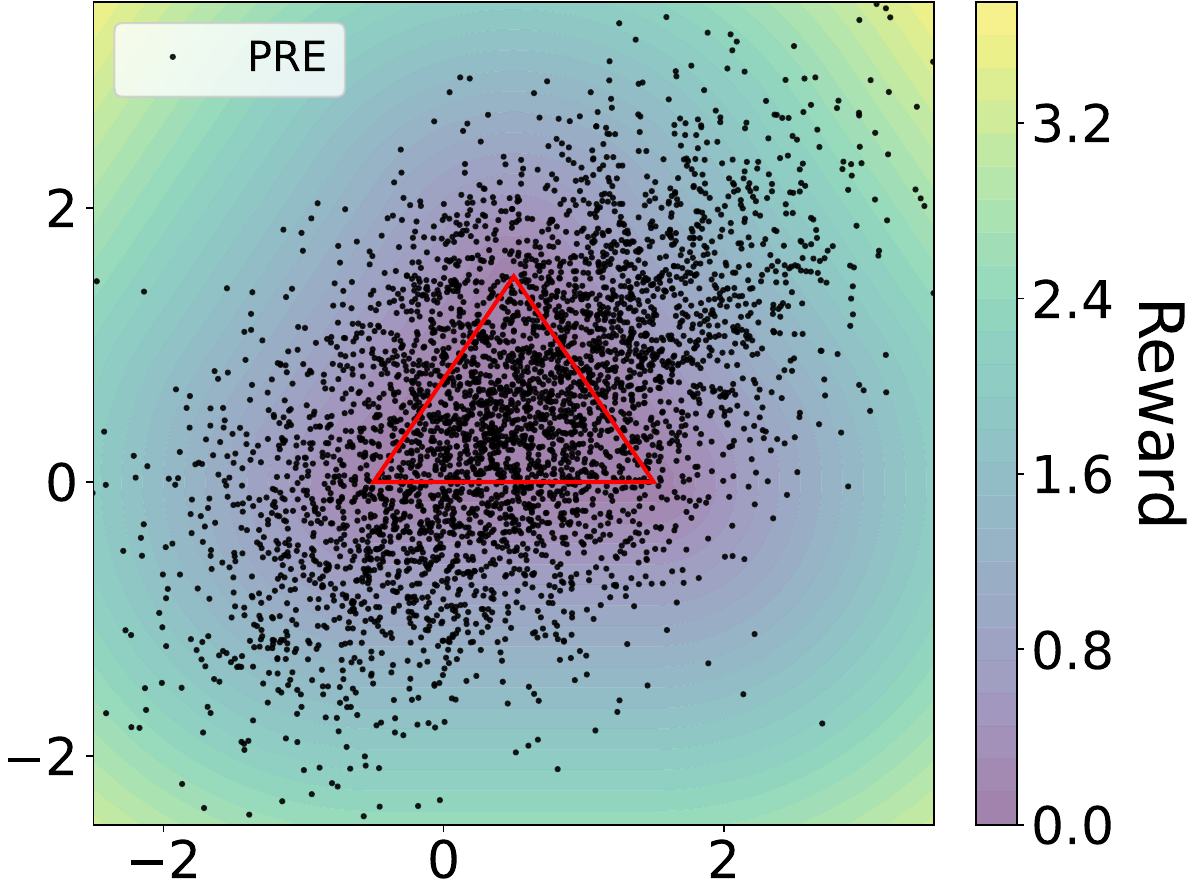}
        \caption{$\ppre_1$ samples}
        \label{fig:sg-pre}
        \vspace{5pt}
    \end{subfigure}
    \hfill
    \begin{subfigure}{0.22\textwidth}
        \centering
        \includegraphics[width=\textwidth]{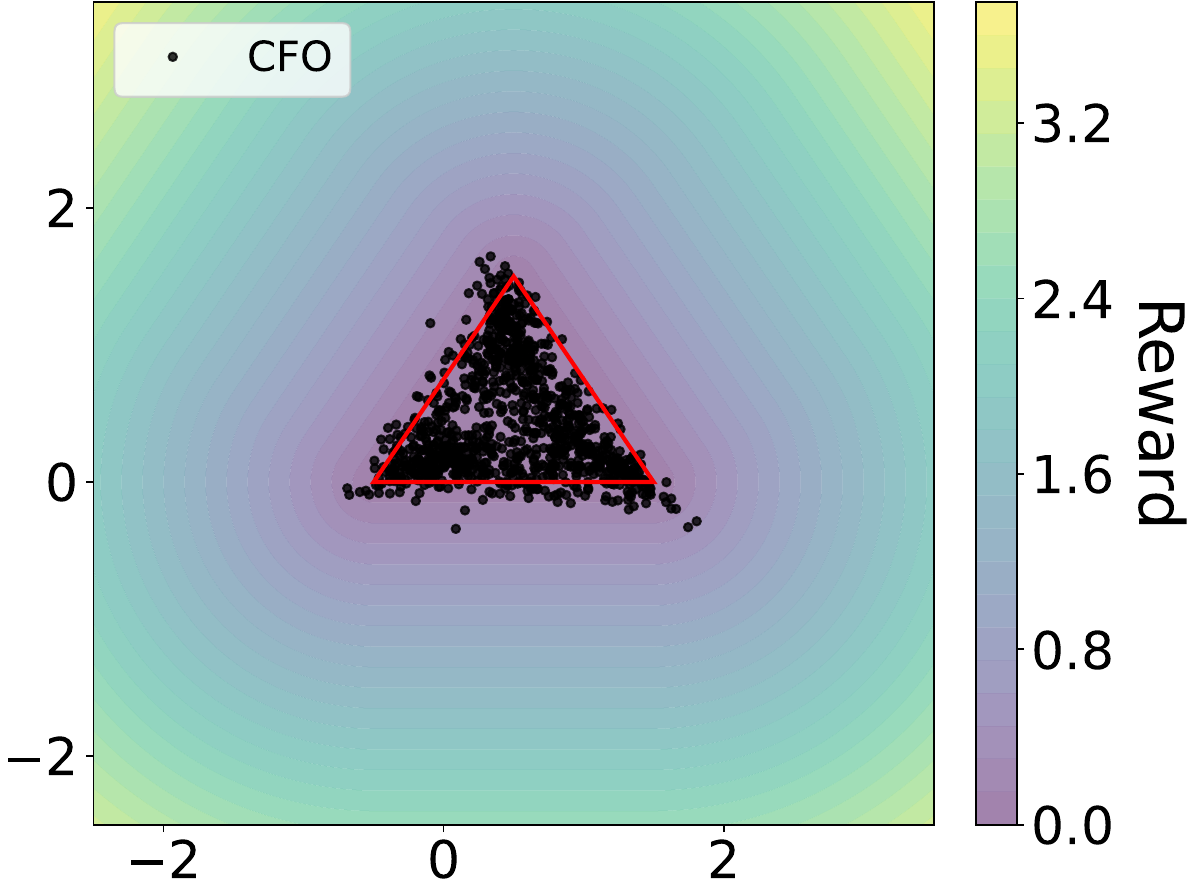}
        \caption{\AlgNameShort with $B{=}0$}
        \label{fig:sg-cfob0}
        \vspace{5pt}
    \end{subfigure}
    \hfill
    \begin{subfigure}{0.22\textwidth}
        \centering
        \includegraphics[width=\textwidth]{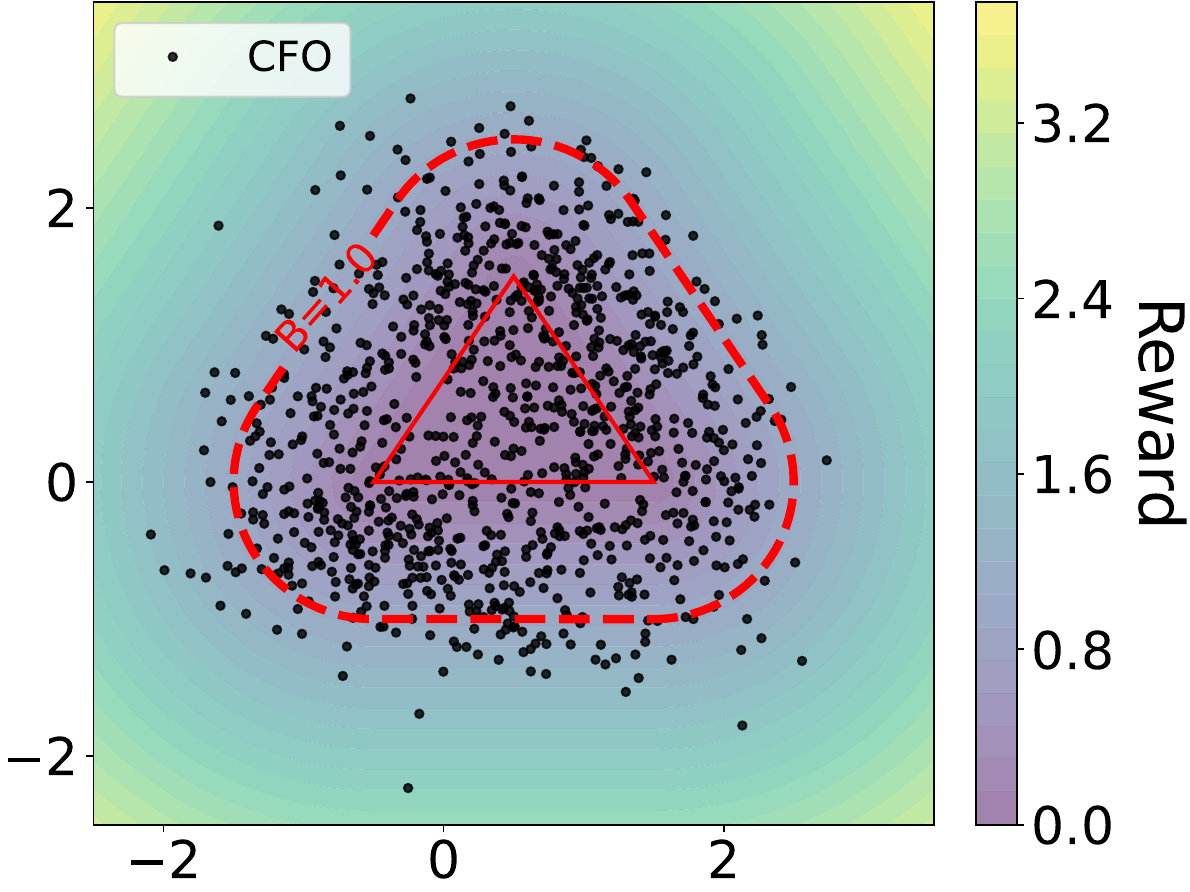}
        \caption{\AlgNameShort with $B{=}1$}
        \label{fig:sg-cfob1}
        \vspace{5pt}
    \end{subfigure}
    \hfill
    \begin{subfigure}{0.30\textwidth}
        \centering
        \resizebox{1\textwidth}{!}{%
            \begin{tabular}{lcc}
            \toprule
            \textbf{} & $\mathbb{E}[c(x)]$ & Violation Rate \\
            \midrule
            PRE          &  $0.57$  & - \\
            \midrule
            $B{=}0$  & $0.01 \pm 0.002$  & $8.5 \pm 2.5$\%\\
            $B{=}1$  & $0.48 \pm 0.3$   & $6.1 \pm 1.5$\%\\
            \bottomrule
            \end{tabular}
        }
        \vspace{10pt}
        \caption{Evaluation}
        \label{fig:sg-evaluation}
        \vspace{5pt}
    \end{subfigure}
    \caption{
        \textbf{Top} Constrained Generative Optimization:
        Samples from the pre-trained policy (\ref{fig:mog-pre}) and policies fine-tuned with \AlgNameShort (with AM as \Solver) (\ref{fig:mog-cfo}) and Adjoint Matching (\ref{fig:mog-am}) (AM)~\citep{domingo-enrich_adjoint_2025}.
        \textbf{Bottom} Constrained generation:
        Samples from the pre-trained policy (\ref{fig:sg-pre}) and policies fine-tuned with \AlgNameShort for $B{=}0$ (\ref{fig:sg-cfob0}) and $B{=}1$ (\ref{fig:sg-cfob1}).
        The constraint-free area is inside the red triangles. Tables \ref{fig:mog-evaluation} and \ref{fig:sg-evaluation} present numerical results for their row.
    }
    \vspace{-10pt}
    \label{fig:syn}
\end{figure*} \section{Guarantees for \AlgNameShortNormal}\vspace{-2pt}
\label{sec:guarantees}

\looseness -1 Before presenting the convergence properties of \AlgNameShort, we first establish a mild and realistic assumption on the \Solver used in Alg.~\ref{alg:cfo}, which formalizes the approximate nature of its optimization steps and serves as the foundation for the theoretical guarantees that follow.

\begin{assumption}[Approx.\ Solver] \label{assumption:solver}
At every iteration~$k$, the solver outputs a policy~$\pi_k$ satisfying:
\begin{equation} \label{eq:assumption}
    L_{\rho_k}(\pi_k, \lambda_k) \geq L_{\rho_k}(\pi, \lambda_k) - \epsilon_k, \quad \forall \pi
\end{equation}
where  $L_{\rho_k}(\pi_k, \lambda_k) \! = \! \Esymb_{x \sim p^{\pi_k}_1} \left[ f_k(x) \right] \! - \! \alpha D_{KL}(p^\pi_1 || \ppre_1)$ and the sequence $\{\epsilon_k\} \subseteq \R_+$ is bounded.
\vspace{-5pt}
\end{assumption}

\looseness -1 This assumption captures the approximate nature of practical fine-tuning or optimization oracles, and is standard in augmented Lagrangian (AL) frameworks. It has been adopted in recent works \citep[e.g.,][]{de2025provable}. The key requirement is that the approximation error remains bounded.

\looseness -1 We define the infeasibility of a policy $\pi$ as:
\begin{equation} \label{def:infeasibility}
    G(\pi) = \Esymb_{x \sim p^\pi_1} [c(x)] - B.
\end{equation}
If the infeasibility $G(\pi)$ of a given policy is positive, the policy is infeasible, i.e., its average constraint is larger than the permissible bound. If $G(\pi)$ is negative, the policy is feasible and thus fulfills the constraint. Using Assumption \ref{assumption:solver} and  Eq.~\ref{def:infeasibility}, we state our main convergence results for \AlgNameShort. The proofs are in Appendix \ref{apx-sec:proofs} and draw on the analysis developed by \citet{birgin_practical_2014}.

\begin{theorem}[Feasibility of \AlgNameShort] \label{thm:feasibility}
    Let $\{\pi_k\}$ be a sequence generated by Alg.~\ref{alg:cfo} under Assumption \ref{assumption:solver} on the \Solver.
    Let $\bar{\pi}$ be a limit of the sequence $\{\pi_k\}$.
    Then, we have:
    \vspace{-5pt}
    \begin{equation}
        \maxbig{G(\bar{\pi})} \leq \maxbig{G(\pi)} \quad \forall \pi
    \end{equation}
    where $G(\pi)$ is defined in Eq.~\ref{def:infeasibility} and  $\maxbig{\cdot} := \max \{ 0 , \cdot \}$
\vspace{-5pt}
\end{theorem}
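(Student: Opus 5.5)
The plan follows the standard feasibility argument for safeguarded augmented Lagrangian methods in \citet{birgin_practical_2014}, adapted to maximization and to the sign convention $\lambda_k\in[\lambda_\text{min},0]$. Throughout I write the augmented Lagrangian in the form used by that analysis,
\begin{equation*}
L_{\rho}(\pi,\lambda)=\Esymb_{x\sim p^\pi_1}[r(x)]-\alpha D_{KL}(p^\pi_1\|\ppre_1)-\tfrac{\rho}{2}\maxbig{G(\pi)-\lambda/\rho}^2 .
\end{equation*}
I also assume that $r$ is bounded and that $\pi\mapsto G(\pi)$ is continuous along the sequence, so that $G(\pi_k)\to G(\bar\pi)$ on the subsequence converging to $\bar\pi$. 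I work along that subsequence and split into two cases according to whether $\{\rho_k\}$ stays bounded.

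\textbf{Case 1: $\rho_k$ bounded.} By the update rule (Eq.~\ref{eq:rho_update}), $\rho$ is multiplied by $\eta$ whenever contraction fails. So if $\eta>1$ and $\rho_k$ stays bounded, the test $V_k\le\tau V_{k-1}$ must hold for all large $k$.
\begin{itemize}[noitemsep,topsep=2pt,leftmargin=12pt]
\item Since $-\lambda_k/\rho_k\ge 0$, we have $V_k\ge \min\{G_k,0\}$, and $V_k>0$ exactly when $G_k>0$.
\item If infinitely many $V_k$ were positive, then each positive $V_k$ is bounded by $\tau^m$ times an earlier positive value, so these $V_k$ tend to $0$. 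The same holds trivially for the nonpositive ones.
\item Hence $\limsup_k \maxbig{G_k}\le\limsup_k\maxbig{V_k}=0$, where I use $\maxbig{G_k}=\maxbig{V_k}$ whenever $G_k>0$.
\end{itemize}
By continuity, $G(\bar\pi)\le 0$. Then $\maxbig{G(\bar\pi)}=0\le\maxbig{G(\pi)}$ for every $\pi$, which is the claim. The degenerate choice $\eta=1$ falls into this case only if contraction eventually holds. I would either handle it the same way or explicitly require $\eta>1$, as Birgin does.

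\textbf{Case 2: $\rho_k\to\infty$.}
\begin{itemize}[noitemsep,topsep=2pt,leftmargin=12pt]
\item Start from Assumption~\ref{assumption:solver} with an arbitrary comparator $\pi$ of finite KL; if the KL is infinite the claim is trivial for that $\pi$.
\item On the left-hand side, drop $-\alpha D_{KL}(p^{\pi_k}_1\|\ppre_1)\le 0$.
\item Move the reward terms to one side and divide by $\rho_k/2$. This gives
\begin{equation*}
\maxbig{G(\pi_k)-\lambda_k/\rho_k}^2\le \maxbig{G(\pi)-\lambda_k/\rho_k}^2+\tfrac{2}{\rho_k}\big(\Esymb_{p^{\pi_k}_1}[r]-\Esymb_{p^{\pi}_1}[r]+\alpha D_{KL}(p^\pi_1\|\ppre_1)+\epsilon_k\big).
\end{equation*}
\item The bracket on the right is bounded, because $r$ is bounded, the comparator has finite KL, and $\{\epsilon_k\}$ is bounded. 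Since $\lambda_k\in[\lambda_\text{min},0]$ is bounded, $\lambda_k/\rho_k\to0$.
\item Taking the limit along the subsequence, using continuity of $G$ and of $t\mapsto\maxbig{t}^2$, yields $\maxbig{G(\bar\pi)}^2\le\maxbig{G(\pi)}^2$. Taking square roots gives the theorem.
\end{itemize}

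\textbf{Main obstacle.} The step I expect to be hardest is reconciling the penalty with $G$. As written, $f_k$ penalizes the pointwise quantity $\maxbig{c(x)-B-\lambda_k/\rho_k}^2$ inside the expectation, not $\maxbig{G(\pi)-\lambda_k/\rho_k}^2$. Case 2 run with the pointwise penalty only proves that $\bar\pi$ minimizes $\Esymb_{p^\pi_1}\maxbig{c(x)-B}^2$. Jensen's inequality relates the two quantities in one direction only: $\Esymb\maxbig{c-B}^2\ge\maxbig{G}^2$, which does not close the chain. I would therefore first check that the appendix's $L_{\rho}$ is the expectation-level augmented Lagrangian displayed above, as in Birgin's framework where the constraint function is $G(\pi)$ itself. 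If it is not, I would weaken the statement to the pointwise infeasibility measure. A secondary technical point is the meaning of the limit $\bar\pi$: I would need enough topology on policies (for instance weak convergence of $p^{\pi_k}_1$ with $c$ bounded and continuous) to justify $G(\pi_k)\to G(\bar\pi)$.
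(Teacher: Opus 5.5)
\textbf{Overall.} Your Case 2 is the paper's argument run directly instead of by contradiction. The paper controls $F(\theta_k)$ by continuity of $F$ along the convergent subsequence; you control it by boundedness of $r$ and $D_{KL}\ge0$. Your suspicion about the penalty is also right. The appendix (Eq.~\ref{apx:aug_lag}) replaces the pointwise penalty in $f_k$ by $\frac{\rho}{2}\maxbig{G(\theta)-\lambda/\rho}^2$, so the paper's proof also covers only the expectation-level augmented Lagrangian.

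\textbf{The gap is in Case 1.} With $V_k=\min\{G_k,-\lambda_k/\rho_k\}$, two of your claims are false: that $V_k>0$ exactly when $G_k>0$, and that $\maxbig{G_k}=\maxbig{V_k}$ whenever $G_k>0$. For $G_k>0$, $V_k$ is $0$ if $\lambda_k=0$, and strictly below $G_k$ whenever $-\lambda_k/\rho_k<G_k$. So neither $\maxbig{V_k}\to0$ nor $V_k\le0$ controls $\maxbig{G_k}$ on its own. The paper's ``$|V_k|\to0$, so $\maxbig{G(\theta_k)}\to0$'' skips the same step.

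The missing ingredient is the multiplier update, Eq.~\ref{eq:lambda_update}. Once $\rho_k\equiv\rho$ and $V_k\le\tau V_{k-1}$ for $k\ge k_0$, a negative $V$ stays negative, and after a zero all later $V$ are nonpositive. This leaves three regimes.
\begin{itemize}
\item \emph{Eventually $V_k<0$.} Then $G_k<0$, because $-\lambda_k/\rho\ge0$.
\item \emph{Eventually $V_k=0$.} Any $k$ with $G_k>0$ must have $\lambda_k=0$, so $\lambda_{k+1}<0$. Then $V_{k+1}=0$ forces $G_{k+1}=0$, which freezes $\lambda$ at that negative value and gives $G_j=0$ for all later $j$.
\item \emph{$V_k>0$ for all $k\ge k_0$.} Then $G_k>0$ throughout, so $\lambda_k$ is nonincreasing and $-\lambda_k/\rho\ge-\lambda_{k_0}/\rho>0$. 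Since $V_k\le\tau^{k-k_0}V_{k_0}\to0$, eventually $V_k=G_k$, so $G_k\to0$.
\end{itemize}
In all three regimes $\limsup_k\maxbig{G(\pi_k)}=0$, hence $G(\bar\pi)\le0$.

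\textbf{Requiring $\eta>1$ is not optional.} With $\eta=1$ you have $\rho_k\equiv\rho_{\text{init}}$ and $|\lambda_k|\le|\lambda_{\min}|$. The marginal penalty just past the boundary is then at most $|\lambda_{\min}|$. An objective that increases faster than that makes every subproblem solution infeasible, even when feasible policies exist. The paper uses $\eta>1$ tacitly when it infers, from bounded $\rho_k$, that the contraction test eventually always passes.

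\textbf{An infinite-KL comparator is not trivial.} Assumption~\ref{assumption:solver} says nothing about such a comparator, so your Case 2 only compares against finite-KL policies. The paper sidesteps this by working over $\theta\in\R^m$, where the KL is the control energy $\Esymb\int_0^1\frac12\|u_\theta(X_t,t)\|^2\,dt$, tacitly taken to be finite.
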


\looseness -1 Concretely, Theorem \ref{thm:feasibility} states that \AlgNameShort returns a policy that minimizes the introduced infeasibility measure (Eq.~\ref{def:infeasibility}).
Thus, returning either a feasible policy or one which minimizes the infeasibility as far as possible.

\begin{corollary}[Feasibility of the Limiting Policy] \label{cor:limiting_feasibility}
Under the same conditions as Theorem \ref{thm:feasibility}, if the underlying problem admits a feasible policy, then the limiting policy $\bar{\pi}$ is feasible, \ie it satisfies the constraint (\ie $G(\bar{\pi}) \leq 0$).
\vspace{-5pt}
\end{corollary}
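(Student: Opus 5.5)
The corollary follows directly from Theorem~\ref{thm:feasibility}; the plan is to instantiate its minimality property at a feasible comparator.

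Suppose the problem admits a feasible policy $\hat{\pi}$, i.e.\ $G(\hat{\pi}) = \Esymb_{x\sim p^{\hat{\pi}}_1}[c(x)] - B \le 0$. Then $\maxbig{G(\hat{\pi})} = \max\{0, G(\hat{\pi})\} = 0$.

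Next, take the conditions of Theorem~\ref{thm:feasibility}: $\{\pi_k\}$ is generated by Alg.~\ref{alg:cfo} under Assumption~\ref{assumption:solver}, and $\bar{\pi}$ is a limit of this sequence. The theorem states that $\maxbig{G(\bar{\pi})} \le \maxbig{G(\pi)}$ for every policy $\pi$. Choosing $\pi = \hat{\pi}$ gives
\begin{equation*}
0 \le \maxbig{G(\bar{\pi})} \le \maxbig{G(\hat{\pi})} = 0 .
\end{equation*}
Hence $\max\{0, G(\bar{\pi})\} = 0$, which is equivalent to $G(\bar{\pi}) \le 0$. So $\bar{\pi}$ is feasible.

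There is essentially no obstacle here. The only point to check is that the quantifier ``$\forall \pi$'' in Theorem~\ref{thm:feasibility} ranges over the same policy class in which feasibility is posed. If the feasible set were understood over a broader class than the one in which $\bar{\pi}$'s optimality is established, the comparison would fail. I would therefore note explicitly that both range over all policies $\pi$ considered in Eq.~\ref{eq:constrained_gen_problem}, so the feasible $\hat{\pi}$ is a valid comparator. All the substantive work, namely the augmented Lagrangian analysis à la \citet{birgin_practical_2014}, is contained in Theorem~\ref{thm:feasibility}.
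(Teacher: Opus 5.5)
Your proposal is correct and matches the paper's argument: the paper likewise obtains the corollary directly from Theorem~\ref{thm:feasibility} by comparing $\bar{\pi}$ against a feasible policy, for which $\maxbig{G(\cdot)} = 0$, which forces $\maxbig{G(\bar{\pi})} = 0$ and hence $G(\bar{\pi}) \le 0$. Your caveat about the policy class is apt: in the appendix the theorem's quantifier ranges over the parametrized class $\theta \in \R^m$, so the feasible comparator must be taken from that class.
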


\looseness -1 Theorem \ref{thm:feasibility} and Corollary \ref{cor:limiting_feasibility} establish constraint satisfiability of \AlgNameShort but do not yet show optimality of the returned policy. To achieve optimality, \AlgNameShort requires a stronger assumption on the \Solver, namely that the approximation error vanishes asymptotically, i.e., $\epsilon_k \to 0$.

\begin{theorem}[Optimality of \AlgNameShort]
\label{thm:optimality}
    Let $\{\pi_k\}$ be the sequence generated by Alg.~\ref{alg:cfo} under Assumption~\ref{assumption:solver} with $\lim_{k \to \infty} \epsilon_k = 0$ (see Eq.~\ref{eq:assumption}). Let $\bar{\pi}$ be a limit of the sequence $\{\pi_k\}$. If the problem in Eq.~\ref{eq:constrained_gen_problem} is feasible, i.e., $\maxbig{G(\bar{\pi})} = 0$, then the limiting policy $\bar{\pi}$ is a global maximizer.
\vspace{-5pt}
\end{theorem}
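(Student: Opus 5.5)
We follow the classical Birgin--Martínez argument for augmented Lagrangian methods with $\epsilon_k\to 0$ (Theorem 5.2 in \citet{birgin_practical_2014}). We work in the space of induced terminal densities $p^\pi_1$. We use the standard identity
\[
\tfrac{\rho}{2}\maxbig{G+\tfrac{-\lambda}{\rho}}^2-\tfrac{\lambda^2}{2\rho},
\]
which writes the penalty term as $\tfrac{\rho}{2}\maxbig{G-\lambda/\rho}^2$. Since $\lambda_k\in[\lambda_{\min},0]$ is bounded, the correction term $\lambda_k^2/(2\rho_k)$ is bounded. Fix an arbitrary feasible policy $\pi$, so $G(\pi)\le 0$. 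Because $\lambda_k\le 0$, the penalty of $\pi$ satisfies $\frac{\rho_k}{2}\maxbig{c(x)-B-\lambda_k/\rho_k}^2$, and in expectation (with pointwise bounds) it is controlled by $\lambda_k^2/(2\rho_k)$ plus terms vanishing with feasibility. Applying Assumption~\ref{assumption:solver} at $\pi$ then gives
\[
J(\pi_k) - P_k(\pi_k) \;\ge\; J(\pi) - P_k(\pi) - \epsilon_k,
\]
where $J(\pi)=\Esymb_{p^\pi_1}[r]-\alpha D_{KL}(p^\pi_1\|\ppre_1)$ and $P_k$ is the expected penalty.

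We split into two cases according to the behaviour of $\rho_k$. In the first case $\rho_k\to\infty$. Then $P_k(\pi)\le \lambda_k^2/(2\rho_k)\to 0$ for the feasible $\pi$, provided we use the pointwise inequality $\maxbig{c-B-\lambda_k/\rho_k}^2\le(\maxbig{c-B}+|\lambda_k|/\rho_k)^2$ together with feasibility in the appropriate sense. Meanwhile $P_k(\pi_k)\ge -\lambda_k^2/(2\rho_k)$ after the correction. Hence $J(\pi_k)\ge J(\pi)-\epsilon_k-o(1)$. In the second case $\rho_k$ stays bounded. Then the contraction test ensures that $V_k\to 0$ geometrically, so $G(\pi_k)\to$ feasibility and complementarity $\lambda_k G_k\to 0$. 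In this regime the term $\lambda_k G(\pi_k)$ vanishes, and the Lagrangian inequality again yields $J(\pi_k)\ge J(\pi)-\epsilon_k-o(1)$, using $\lambda_kG(\pi)\ge 0$ for feasible $\pi$.

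Next we pass to the limit along the subsequence converging to $\bar\pi$. By upper semicontinuity of $J$ (the expectation of $r$ is continuous under the topology in which $\bar\pi$ is a limit, and $-D_{KL}$ is upper semicontinuous by lower semicontinuity of KL), we get $J(\bar\pi)\ge\limsup_k J(\pi_k)\ge J(\pi)$. Theorem~\ref{thm:feasibility} and Corollary~\ref{cor:limiting_feasibility}, together with the hypothesis $\maxbig{G(\bar\pi)}=0$, give that $\bar\pi$ is feasible. Since $\pi$ was an arbitrary feasible policy, $\bar\pi$ is a global maximizer of Eq.~\ref{eq:constrained_gen_problem}.

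The main obstacle is the penalty bound for the comparison policy $\pi$. Feasibility is only in expectation, $\Esymb[c]\le B$, whereas the penalty in $f_k$ is applied pointwise inside the expectation. Consequently $\Esymb\maxbig{c-B-\lambda_k/\rho_k}^2$ need not vanish even for a feasible $\pi$, so the pointwise augmented reward does not reduce exactly to the classical scalar augmented Lagrangian. We would first try to interpret $L_{\rho_k}$ via the scalar quantity $\frac{\rho_k}{2}\maxbig{G(\pi)-\lambda_k/\rho_k}^2$, as in the standard theory, which is what the proof of Theorem~\ref{thm:feasibility} presumably does. If the pointwise form is essential, we would instead state an extra regularity condition, for instance that the comparison class consists of policies with $c\le B$ almost surely, or use Jensen's inequality to compare the two penalty forms. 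A secondary difficulty is the bounded-$\rho_k$ case, which needs complementarity to vanish. We would handle it through the monotone decrease of $|V_k|$ enforced by the $\tau$-contraction rule, which gives $V_k\to0$.
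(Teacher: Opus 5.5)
Your proposal follows the paper's proof essentially step for step. The paper also works with the scalar augmented Lagrangian $F(\theta)-\frac{\rho}{2}\maxbig{G(\theta)-\lambda/\rho}^2$ (the interpretation you flagged; it adopts it without comment even though $f_k$ penalizes pointwise), bounds the feasible comparison point's penalty by $\lambda_k^2/(2\rho_k)$, and splits into $\rho_k\to\infty$ versus bounded $\rho_k$; the only minor difference is that in the bounded case it extracts a limit $\lambda^*$ of $\lambda_k$ and separates $G(\bar\theta)=0$ from $G(\bar\theta)<0$, rather than using $\lambda_k G_k\to 0$ directly.

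One caution on your fallback: Jensen's inequality goes the wrong way. Since $t\mapsto\maxbig{t}^2$ is convex, the pointwise penalty dominates the scalar one. So under the literal pointwise $f_k$ you would need the almost-sure-feasibility restriction, which gives optimality only against a smaller comparison class.
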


\looseness -1 In practice, a \Solver achieving $\epsilon_k \to 0$ is rarely available, and the attainable error bounds required by our guarantees depend strongly on the experimental setting.
Nevertheless, we show in our experiments (Sec.~\ref{sec:results}) that using Adjoint Matching \citep{domingo-enrich_adjoint_2025} is sufficient in practice, consistently yielding near-optimal rewards while satisfying the constraints. The convergence guarantees of \AlgNameShort do not require $r$ or $c$ to be differentiable. Any differentiability assumptions are inherited from the underlying \Solver. Consequently, if a gradient-free \Solver is used, such as DiffusionNFT~\citep{zheng2025diffusionnft} or Flow-GRPO~\citep{liu2026flow}, then \AlgNameShort\ can be applied in settings where $r$ and $c$ are available only through function evaluations.
 
\vspace{-2pt}
\section{Experimental Evaluation} \vspace{-2pt}
\label{sec:results}

\setlength{\abovecaptionskip}{0pt}
\begin{figure*}[t]
    \centering
    \begin{subfigure}[b]{0.24\textwidth}
        \centering
        \includegraphics[width=\textwidth]{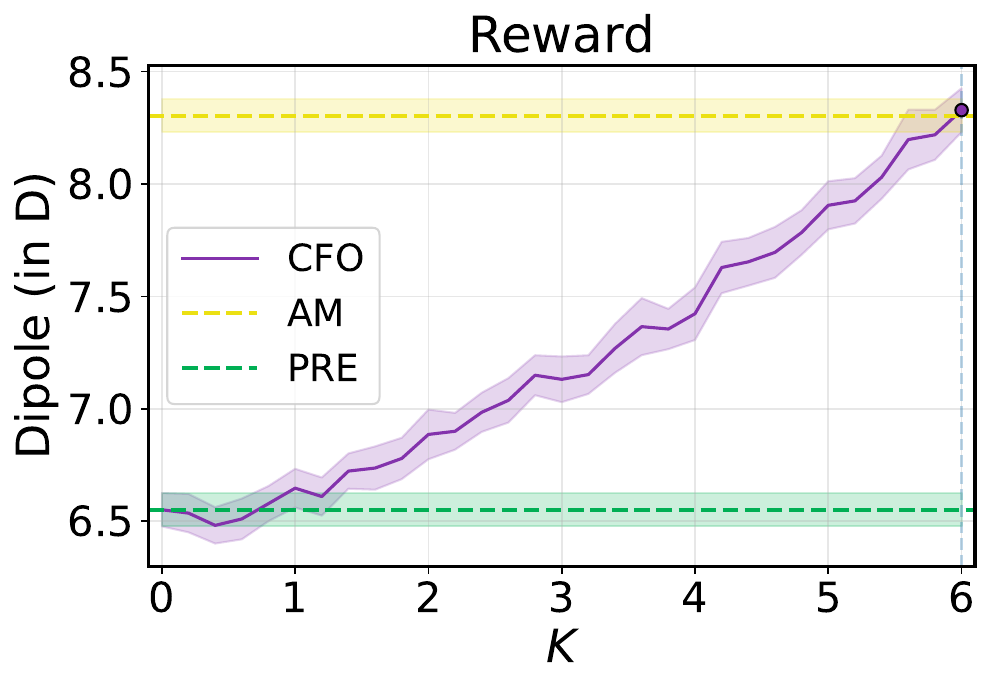}
        \vspace{-15pt}
        \caption{Dipole (D), \AlgNameShort and AM}
        \label{fig:cfo_am_dipole}
    \end{subfigure}
    \hfill
    \begin{subfigure}[b]{0.24\textwidth}
        \centering
        \includegraphics[width=\textwidth]{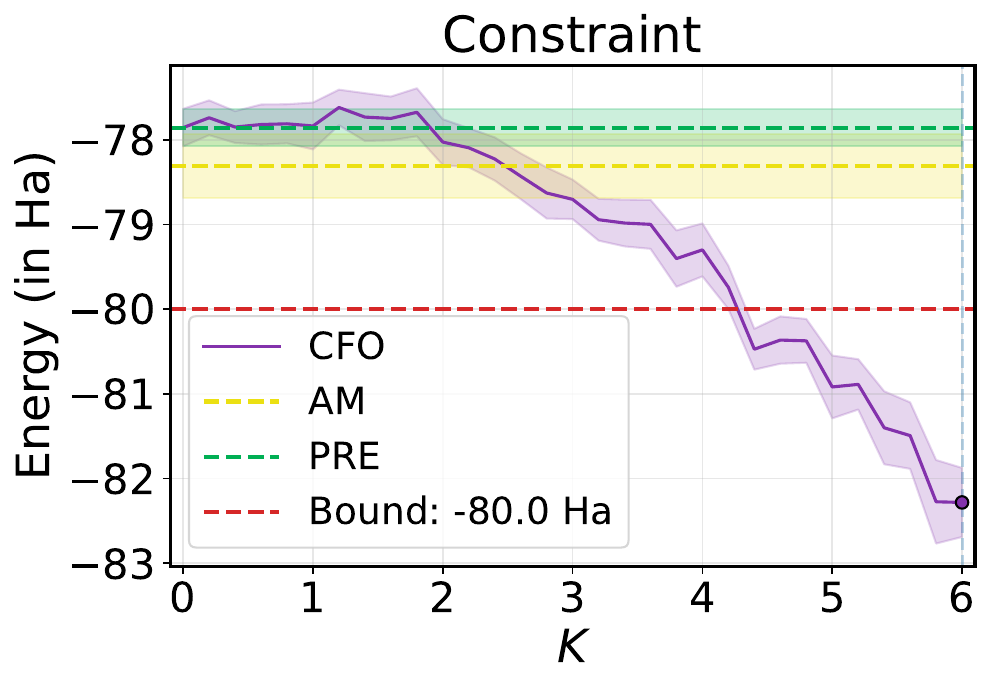}
        \vspace{-15pt}
        \caption{Energy (Ha), \AlgNameShort and AM}
        \label{fig:cfo_am_energy}
    \end{subfigure}
    \hfill
    \begin{subfigure}[b]{0.24\textwidth}
        \centering
        \includegraphics[width=\textwidth]{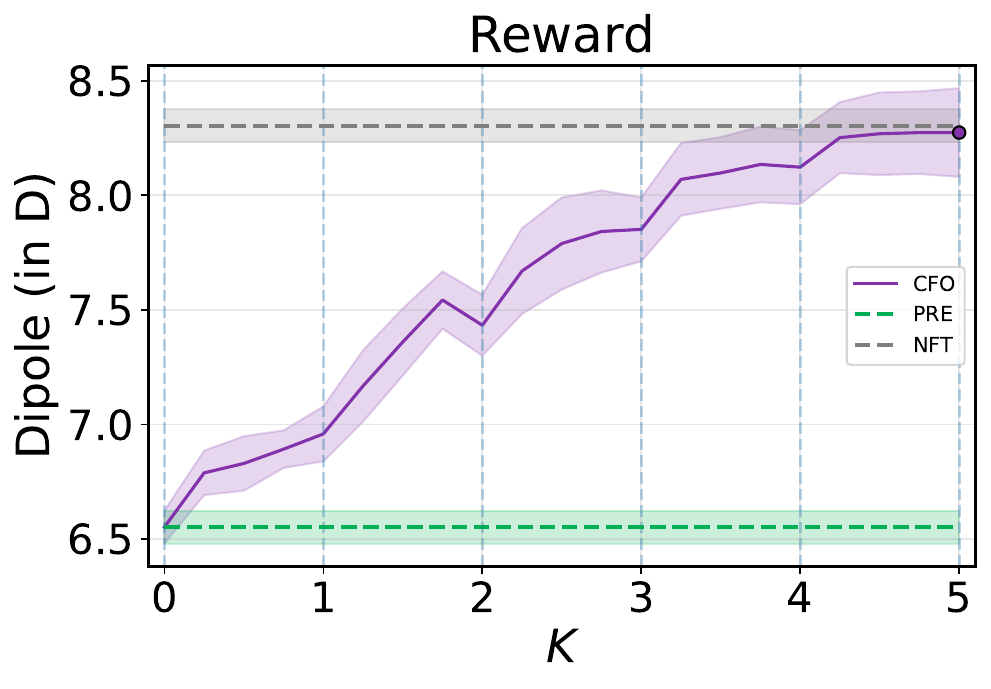}
        \vspace{-15pt}
        \caption{Dipole (D), \AlgNameShort and NFT}
        \label{fig:cfo_nft_dipole}
    \end{subfigure}
    \hfill
    \begin{subfigure}[b]{0.24\textwidth}
        \centering
        \includegraphics[width=\textwidth]{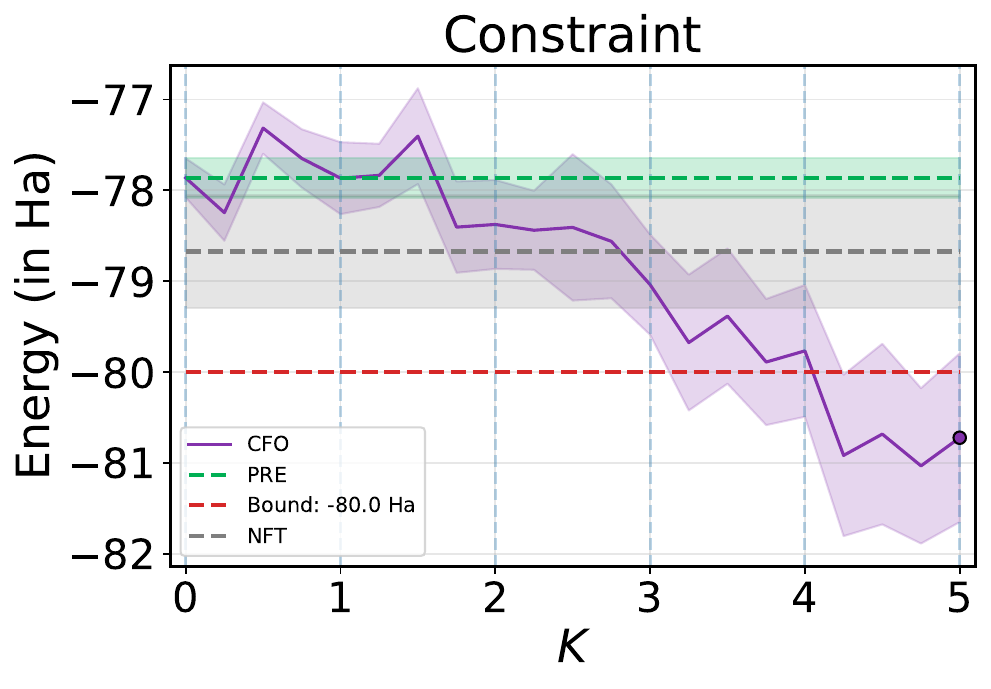}
        \vspace{-15pt}
        \caption{Energy (Ha), \AlgNameShort and NFT}
        \label{fig:cfo_nft_energy}
    \end{subfigure}

    \vspace{8pt}

    \begin{subfigure}[b]{0.8\textwidth}
        \centering
        \footnotesize
        \setlength{\tabcolsep}{4pt}
        \begin{tabular}{lccccc}
        \toprule
        & PRE & \AlgNameShort$_\text{AM}$ & AM & \AlgNameShort$_\text{NFT}$ & DiffusionNFT \\
        \midrule
        $\mathbb{E}[r(x)]\!\uparrow$ & $6.55_{\pm0.07}$ & $8.39_{\pm0.10}$ & $8.37_{\pm0.07}$ & $8.27_{\pm0.19}$ & $8.30_{\pm0.07}$ \\
        $\mathbb{E}[c(x)]$ & $-77.86_{\pm0.22}$ & $-82.28_{\pm0.41}$ & $-78.31_{\pm0.38}$ & $-80.72_{\pm0.93}$ & $-78.67_{\pm0.62}$ \\
        \bottomrule
        \end{tabular}
        \vspace{2pt}
        \caption{Evaluation across both \Solver choices ($95\%$ CI).}
        \label{tab:mol-evaluation}
    \end{subfigure}

    \vspace{4pt}
    \caption{
    Energy-constrained dipole moment maximization of FlowMol \citep{dunn_mixed_2024} on GEOM Drugs \citep{axelrod_geom_2022}. \AlgNameShort attains a dipole moment comparable to the unconstrained baselines, but unlike AM and NFT keeps the expected energy inside the feasible region.
    (\ref{fig:cfo_am_dipole}-\ref{fig:cfo_am_energy}): Evolution of the constraint and reward during \AlgNameShort fine-tuning with ($K\!=\!6$, $N\!=\!10$) in comparison to
    AM \citep{domingo-enrich_adjoint_2025} ($N\!=\!60$), and we show the final iterate in yellow.
    (\ref{fig:cfo_nft_dipole}-\ref{fig:cfo_nft_energy}): Analogous trajectories with DiffusionNFT~\citep{zheng2025diffusionnft} as the inner \Solver; \AlgNameShort ($K\!=\!5$) drives the expected energy below the $-80$~Ha bound that NFT alone fails to meet, while preserving comparable dipole moment.
    }
    \label{fig:mol_dipole_energy}
    \vspace{-10pt}
\end{figure*}
\setlength{\belowcaptionskip}{0pt}

\looseness -1 We demonstrate the ability of \AlgNameLong (Alg.~\ref{alg:cfo}) to solve the \emph{constrained generative optimization} problem (see Eq.~\ref{eq:constrained_gen_problem}) on both low-dimensional illustrative settings, and on molecular design tasks.
In particular, we evaluate: (i) the performance of \AlgNameShort to solve Problem \ref{eq:constrained_gen_problem} given visually interpretable reward and constraint functions, also for (ii) the sub-case of constrained generation, recovered via a constant reward (see Eq.~\ref{eq:constrained_gen_subcase}). We further show that (iii) \AlgNameShort scales to high-dimensional molecular design tasks, and that (iv) it shows promising performances even with an approximate \Solver, or when run with a limited number of iterations $K$.

\mypar{\AlgNameShort reliably solves constrained generative optimization low-dimensional tasks} 
We evaluate \AlgNameShort to solve the \emph{constrained generative optimization} problem (Eq.~\ref{eq:constrained_gen_problem}) on a visually interpretable setting, where $\ppre_1$ is a mixture of two non-overlapping Gaussians as shown in Figure \ref{fig:mog-pre}, enabling visualization of constraint satisfaction during fine-tuning.
In this setting, the reward $r$ is the negative squared distance to the white cross in Figures \ref{fig:mog-pre}-\ref{fig:mog-am} (color-coding in Figure \ref{fig:mog-cfo} and \ref{fig:mog-am}).
The constraint $c$ is zero within the red triangles in Figures \ref{fig:mog-pre}-\ref{fig:mog-am}, and increases linearly outside (color-coding in Figure \ref{fig:mog-pre}). As shown in Figure \ref{fig:mog-cfo}, \AlgNameShort, run with $K\!=\!20$, and $\rho_{\text{init}} \!=\! 0.5$, steers the pre-trained flow model so its induced density $p^*$ is located predominantly within the valid regions (\ie red triangles) where the constraint is fulfilled, while simultaneously optimizing the reward by moving samples toward the inner boundaries of both triangles.
\AlgNameShort increases the mean reward from $-7.62$ to $-4.75$ compared to the base model, while it reduces estimated constraint violations from $0.58$ to $0.12$, as reported in Table \ref{fig:mog-evaluation}. The minor residual violations of \AlgNameShort, in Figure \ref{fig:mog-cfo}, are likely due to Monte Carlo approximation errors during fine-tuning. 
In contrast to \AlgNameShort, Adjoint Matching \citep{domingo-enrich_adjoint_2025}, a well-established reward-guided fine-tuning scheme, which does not take into account any constraint, raises the expected reward to $-2.93$, but significantly degrades the model's ability to satisfy the given constraints, increasing constraint violations from $0.58$ to $2.47$ (Figure \ref{fig:mog-am}).
The same pattern holds when swapping the inner \Solver with a gradient-free alternative, such as DiffusionNFT~\citep{zheng2025diffusionnft}. DiffusionNFT~\citep{zheng2025diffusionnft} alone increases the reward to $-3.59$ but violates the constraint ($1.76$), whereas \AlgNameShort$_\text{NFT}$ attains $-5.28 \pm 0.14$ reward at $0.06 \pm 0.01$ expected constraint (see Table~\ref{fig:mog-evaluation}; qualitative samples in Apx.~Figure~\ref{fig:diffopt-samples}), confirming that \AlgNameShort transfers cleanly across first-order (AM) and zeroth-order (NFT) solvers.
We also benchmarked against DiffOpt~\citep{kong2024diffusion}, a recent inference-time constrained-generation method that keeps model weights fixed: on this task, DiffOpt~\citep{kong2024diffusion} exhibits a per-sample violation rate of $10$--$21\%$ versus $8.5\%$ for \AlgNameShort, while incurring $44$--$55\times$ higher per-sample sampling cost.

\mypar{Constant reward recovers Constrained Generation}
To illustrate the constrained generation (see Eq.~\ref{eq:constrained_gen_subcase}) capabilities, we consider a correlated Gaussian base density $\ppre_1$, visualized in Figure \ref{fig:sg-pre}, and a constraint $c$ penalizing samples outside the red central triangle (see Figure \ref{fig:sg-pre}). 
In the following, we vary the bound $B \! \in \! \{0.0,1.0\}$ (see Eq.~\ref{eq:constrained_gen_problem}) to obtain diverse flow models inducing fine-tuned distributions $p^*$. As shown in Figures \ref{fig:sg-cfob0}–\ref{fig:sg-cfob1}, increasing $B$ causes the resulting densities to visibly expand beyond the zero-constraint region, illustrating the relaxation of constraint enforcement.
Quantitatively, the selected degree of permissible violation (\ie the value of B) is reflected in the mean constraint violations incurred by the respective flow models, obtained by running \AlgNameShort with $K\!=\!20$, and $\rho_{\text{init}} \!=\! 0.5$. As shown in Table \ref{fig:sg-evaluation}, while setting $B\!=\!1$ leads to expected constraint value of $0.01$, choosing $B\!=\!1.0$ renders \AlgNameShort less restrictive, inducing a policy $\pi^*$ with a mean constraint of $0.48$. While the base model exhibits $\Esymb_{\ppre_1}[c(x)]\!=\!0.57$, the violation decreases to $0.48$ under $B\!=\!1.0$ and further to $0.01$ under $B\!=\!0.0$.
These results illustrate how the choice of $B$ controls tolerance to constraint violations, offering a mechanism to adapt \AlgNameShort to domain-specific requirements.

\mypar{\AlgNameShort scales to high-dimensional molecular design tasks}
\looseness -1 To demonstrate the practical relevance of \AlgNameShort in high-dimensional settings, we apply \AlgNameShort to a molecular design task, where satisfying constraints is critical.
Specifically, we adapt a pre-trained flow model, FlowMol \citep{dunn_mixed_2024}, on GEOM Drugs \citep{axelrod_geom_2022}, and maximize the dipole moment \citep{minkin_dipole_1970} under constraint fulfillment. As constraints, we impose an upper bound on the total \texttt{xTB} energy (\ie $-80$ Ha), to be used as a proxy for chemical stability. Further details on the constraint and reward functions employed are provided in Appendix \ref{apx-sec:experiments_details}. Both functions are computed via GNN-based predictors (see Appendix \ref{apx-sec:experiments_details}) trained on \texttt{GFN2-xTB} \citep{bannwarth_gfn2-xtbaccurate_2019}.
We employ differentiable rewards and constraints, because the specific \Solver we use in our implementation, namely Adjoint Matching \citep{domingo-enrich_adjoint_2025}, requires first-order access to these functions. Our method would also be compatible with non-differentiable rewards and constraints (see Sec.~\ref{sec:guarantees}).

\looseness -1 In Figure \ref{fig:mol_dipole_energy}, we show the performance of \AlgNameShort for the energy-constrained dipole moment maximization molecular design task. The optimal policy $\pi^*$ computed by \AlgNameShort ($K\!\!=\!\!6, N\!\!=\!\!10$) increases the dipole moment from $6.55$ Debye of the pre-trained model to $8.39$ Debye (Figure \ref{fig:cfo_am_dipole}). Simultaneously, $\pi^*$ shifts the flow model density to generate predominantly low-energy samples, effectively achieving an expected energy of $-82.28$ Ha, thus satisfying the upper bound B of $-80$ Ha.  
In Figure \ref{fig:mol1}-\ref{fig:mol4}, we present drug-like samples from the fine-tuned model, together with their ground-truth reward and constraint values. 
For reference, running Adjoint Matching ($N\!\!=\!\!60$) \citep{domingo-enrich_adjoint_2025} purely for reward maximization, without enforcing the constraint, achieves a similar reward of $8.37$ Debye, yet results in an expected energy of $-78.31$ Ha, thus not fulfilling the constraint (see Table \ref{tab:mol-evaluation}). In Appendix \ref{apx-sec:experiments_details}, we show that the GNN predictors are accurate throughout the optimization, with ground truth values of reward and constraint being optimized to the same extent.

\setlength{\abovecaptionskip}{0pt}
\begin{figure*}[t]
    \begin{subfigure}{0.20\linewidth}
        \includegraphics[width=\linewidth, keepaspectratio]{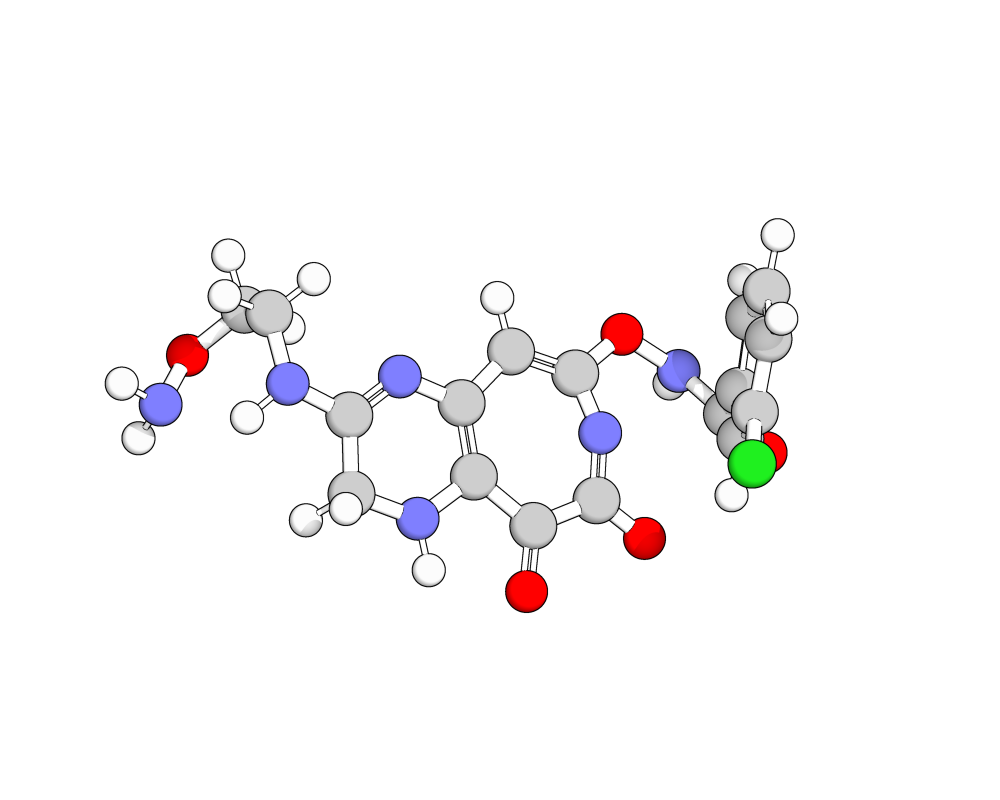}
        \vspace{-15pt}
        \caption{$15.7$~D / $-86.9$~Ha}
        \label{fig:mol1}
    \end{subfigure}
    \hfill
    \begin{subfigure}{0.20\linewidth}
        \includegraphics[width=\linewidth, keepaspectratio]{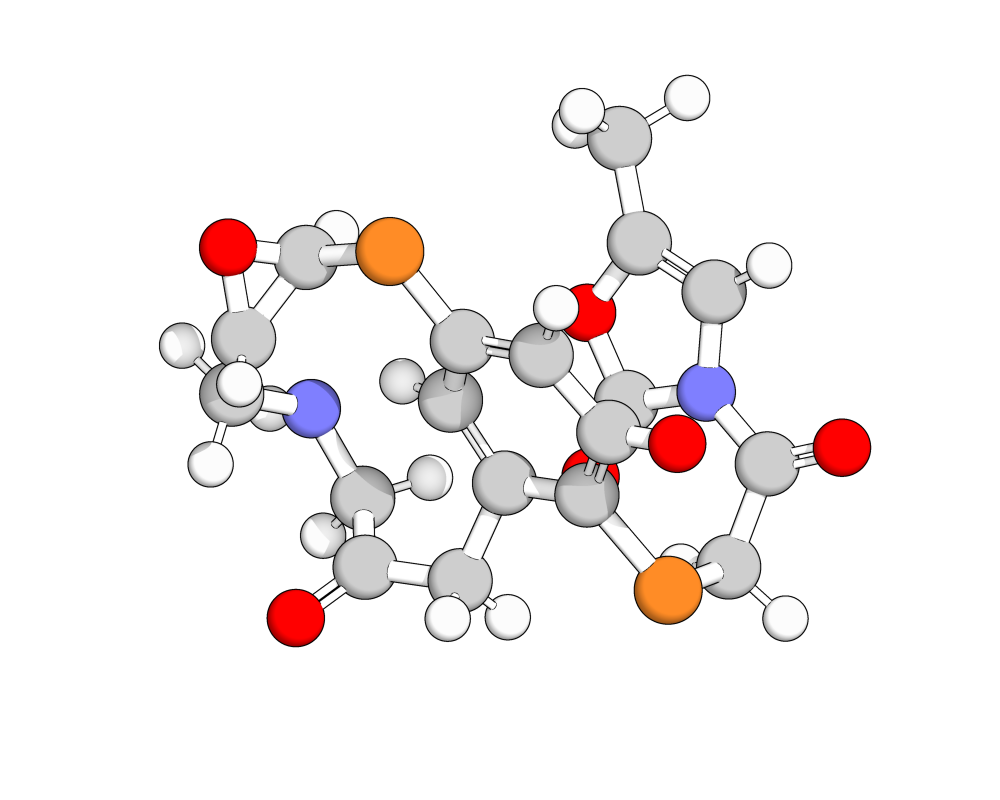}
        \vspace{-15pt}
        \caption{$9.1$~D / $-83.4$~Ha}
        \label{fig:mol3}
    \end{subfigure}
    \hfill
    \begin{subfigure}{0.20\linewidth}
        \includegraphics[width=\linewidth, keepaspectratio]{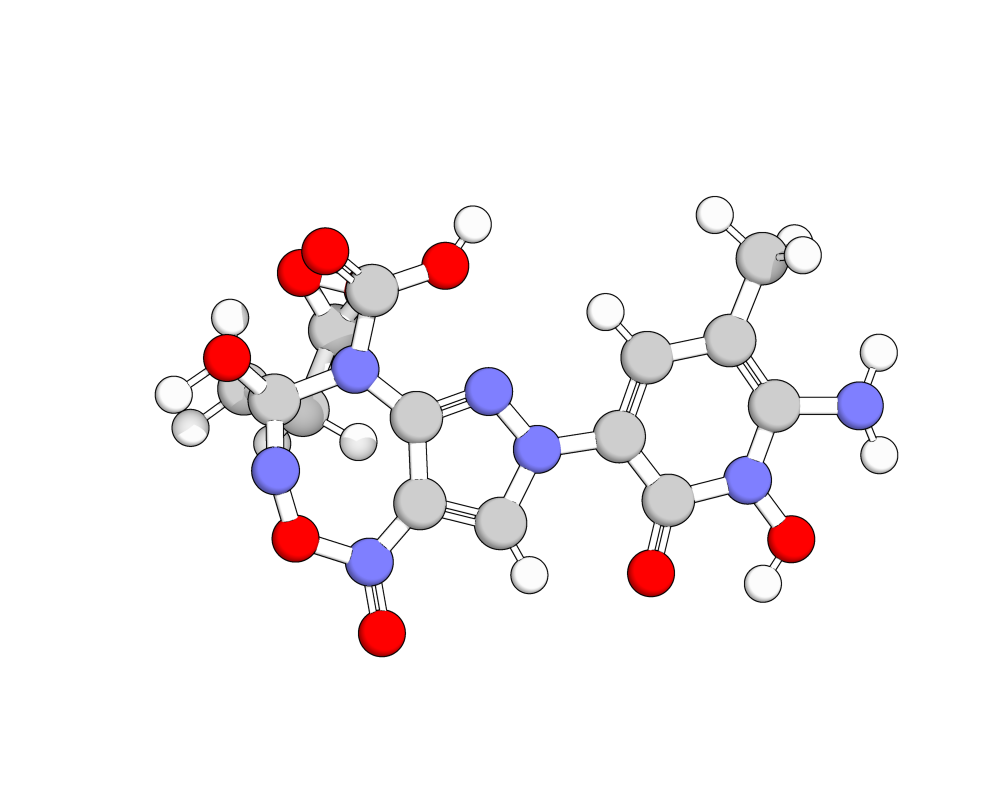}
        \vspace{-15pt}
        \caption{$12.5$~D / $-93.8$~Ha}
        \label{fig:mol4}
    \end{subfigure}
    \hfill
    \begin{subfigure}{0.35\linewidth}
    \centering
    \resizebox{1\textwidth}{!}{%
    \begin{tabular}{lccccc}
        \toprule
        \textbf{Feature} & \textbf{PRE} & \textbf{\AlgNameShort$_\text{AM}$} & \textbf{AM} & \textbf{\AlgNameShort$_\text{NFT}$} & \textbf{DiffusionNFT} \\
        \midrule
        Constraint (Ha) & $-78$   & $-82$    & $-78$    & $-81$    & $-79$    \\
        \midrule
        Validity (\%)      & $34$   & $9$    & $4$    & $16$    & $17$    \\
        \midrule
        Sanitization (\%)  & $47$   & $13$   & $8$    & $21$    & $23$    \\
        \midrule
        QED                & $0.45$ & $0.37$ & $0.34$ & $0.28$  & $0.29$  \\
        \midrule
        Lipinski (\%)      & $88$   & $77$   & $71$   & $49$    & $51$    \\
        \midrule
        logP               & $0.89$ & $-0.51$ & $-0.95$ & $-3.00$ & $-2.70$ \\
        \bottomrule
    \end{tabular}}
    \caption{Molecular Statistic}
    \label{tab:mol-stats}
    \end{subfigure}
    \vspace{10pt}
    \caption{
    (\ref{fig:mol1}-\ref{fig:mol4}) Drug-like molecules sampled from the fine-tuned model, together with ground-truth dipole moments (D) and energies (Ha).
    (\ref{tab:mol-stats}): Molecular statistics for 2000 molecules sampled from polices fine-tuned with \AlgNameShort and AM.
    Validity (Definition in Apx.\ \ref{apx-sec:further_experiments}), RDKit-Sanitization \citep{landrum_rdkit_2025},
    QED \citep{ertl_estimation_2009},
    Lipinski's rule of 5 \citep{lipinski-2004},
    $\log$P \cite{logP}.
    \AlgNameShort preserves \Solver-level molecular statistics (\eg QED, Lipinski) while additionally satisfying the energy budget, \ie constraint satisfaction comes at essentially no cost in chemical quality compared to pure reward maximization.}
    \label{fig:mol_stats}
    \vspace{-10pt}
\end{figure*}
\setlength{\belowcaptionskip}{0pt}

To showcase that \AlgNameShort is agnostic to the inner fine-tuning solver, we additionally run \AlgNameShort with DiffusionNFT~\citep{zheng2025diffusionnft} ($K\!=\!5$) (Figures~\ref{fig:cfo_nft_dipole}-\ref{fig:cfo_nft_energy}). \AlgNameShort with NFT increases the dipole moment from $6.55$ to $8.27$ Debye while reducing the expected energy to $-80.72$ Ha, thus satisfying the $-80$~Ha bound. In contrast, unconstrained NFT alone achieves a comparable reward of $8.30$ Debye but only $-78.67$ Ha in expected energy, violating the constraint. This confirms that \AlgNameShort transfers across both first-order (AM) and gradient-free (NFT) solvers, while consistently steering the policy into the feasible region.

Optimizing molecular properties reduces validity, from $34\%$ for PRE to $9\%$ for \AlgNameShort and to $4\%$ for AM. Since validity is not explicitly enforced but only implicitly learned, fine-tuning steers the model toward sparsely represented regions of chemical space where this notion degrades under our stringent validity criteria (see definition in Apx.\ \ref{apx-sec:further_experiments}); \AlgNameShort still maintains higher validity than AM at comparable reward.
In Appendix \ref{apx-sec:further_experiments}, we discuss how base model improvements and differentiable geometry relaxation could increase the validity of generated molecules.
We also report standard molecular statistics for \AlgNameShort and AM to contextualize reward-guided fine-tuning. Although not optimization targets, these metrics reflect shifts from the initial model, which is trained on GEOM Drugs and then fine-tuned to maximize the dipole under energy constraints. As shown in Table \ref{tab:mol-stats}, \AlgNameShort exhibits slightly smaller shifts compared to AM, \eg a QED of $0.37$ for \AlgNameShort, which is closer to the $0.45$ of PRE  than the $0.34$ for AM.
Beyond the expectation, we also report per-sample feasibility: \AlgNameShort satisfies the energy constraint on $61.4\%$ of generated molecules, compared to $40.6\%$ for unconstrained AM.
To further illustrate \AlgNameShort's flexibility, in Appendix~\ref{apx-sec:further_experiments} we report an additional experiment where the energy constraint is replaced by a learned \texttt{PoseBusters}-based validity criterion~\citep{buttenschoen_posebusters_2024}, which \AlgNameShort uses to reduce the predicted violation rate from $53\%$ to $39\%$ while still increasing the dipole moment.

\mypar{\AlgNameShort outperforms a manually tuned penalty baseline}
We compare \AlgNameShort to a manually tuned fixed-$\mu$ penalty baseline, which optimizes the Lagrangian with a fixed constraint weight $\mu$ (Eq.~\ref{eq:fixed}). To select $\mu$, we run the baseline over 18 values covering 13 orders of magnitude, $\mu\!\in\![10^{-6}, 10^{6}]$ (Figure~\ref{fig:pareto}).
We observe high sensitivity to $\mu$. For small $\mu$ (\eg $\mu \leq 0.01$), the baseline attains high reward but exhibits severe constraint violations (\eg $8.34$ D, $-78.94$ Ha for $\mu\!=\!0.01$) and fails to satisfy the constraint. Conversely, for large $\mu$ ($\geq\!1.0$), the constraint is enforced, but reward degrades substantially ($6.69$ Debye for $\mu\!=\!50.0$), falling below \AlgNameShort.
In contrast, \AlgNameShort satisfies the constraint across all tested hyperparameter settings, while the achieved reward remains stable around $8.39$ Debye (ablation study in Apx.\ \ref{apx-sec:parameter}).
Overall, only two $\mu$ out of 18 values achieve a reward comparable to \AlgNameShort while satisfying the constraint (Figure \ref{fig:counts_dipole}), confirming that manual tuning is unreliable and inefficient (Sec.~\ref{sec:problem_setting}). Since both methods use the same number of gradient steps per run, this exhaustive tuning makes the baseline about $18\times$ more expensive. In contrast, \AlgNameShort adapts the parameters online, yielding a more robust trade-off between reward maximization and constraint satisfaction.

\setlength{\abovecaptionskip}{0pt}
\begin{figure*}[t]
    \begin{subfigure}[b]{0.33\textwidth}
        \centering
        \includegraphics[width=\textwidth]{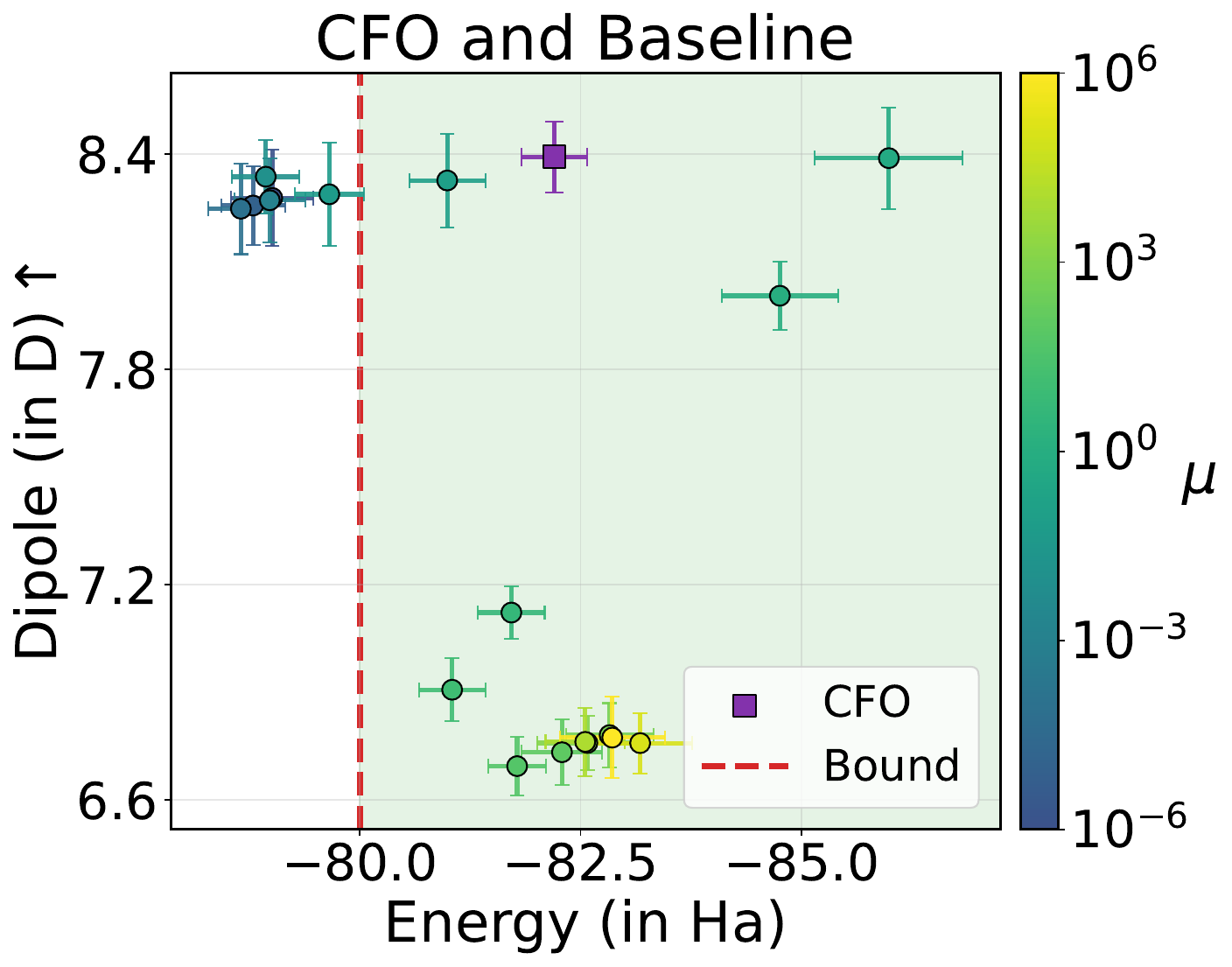}
        \vspace{-15pt}
        \caption{\AlgNameShort and fixed $\mu$-Baseline}
        \label{fig:pareto}
        \vspace{5pt}
    \end{subfigure}
    \hfill
    \begin{subfigure}{0.32\textwidth}
    \centering
    \resizebox{0.8\textwidth}{!}{%
    \begin{tabular}{lcc}
    \toprule
    & \textbf{$\mathbb{E}[r(x)]\uparrow$} & $\mathbb{E}[c(x)]\!\leq\!B$ \\
    \midrule
    PRE & $6.55_{\pm0.07}$ & \xmark \\
    \midrule
    \AlgNameShort & $8.39_{\pm 0.10}$ & \cmark \\
    \midrule
    AM$_{\mu=0.5}$ & $8.38_{\pm0.14}$ & \cmark \\
    AM$_{\mu=0.1}$ & $8.33_{\pm0.13}$ & \cmark \\
    \addlinespace[1pt]
    \specialrule{0.8pt}{0pt}{0pt}
    \addlinespace[3pt]
    \multicolumn{1}{l}{} & \multicolumn{2}{c}{\textbf{Total Runtime}}\\
    \midrule
    \multicolumn{1}{l}{\AlgNameShort} & \multicolumn{2}{c}{$\approx 44$ min}\\
    \midrule
    \multicolumn{1}{l}{AM$_{\mu}$} & \multicolumn{2}{c}{$\approx 727$ min}\\
    \bottomrule
    \end{tabular}
    }
    \vspace{12pt}
    \caption{Evaluation}
    \label{tab:mol-evaluation-fixed}
    \vspace{5pt}
    \end{subfigure}
    \hfill
    \begin{subfigure}[b]{0.315\textwidth}
        \centering
        \includegraphics[width=\textwidth]{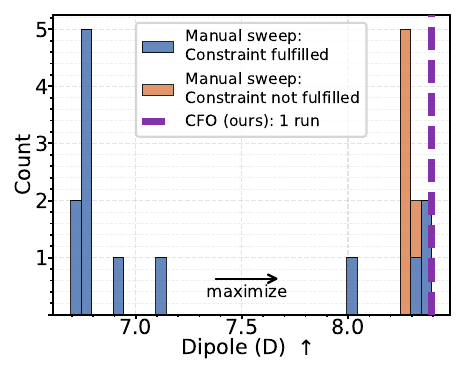}
        \vspace{-18pt}
        \caption{Counts vs. Dipole}
        \label{fig:counts_dipole}
        \vspace{5pt}
    \end{subfigure}
    \caption{
    (\ref{fig:pareto}-\ref{tab:mol-evaluation-fixed}): Pareto comparing \AlgNameShort ($K\!=\!6$, $N\!=\!10$) against fixed-$\mu$ baselines (Eq.\ \ref{eq:fixed}) run with AM ($N\!=\!60$) (\ie same number of gradient steps). The baseline with $18$ values uniformly across $\mu \in [1\mathrm{e}{-6},1\mathrm{e}{6}]$. Manual $\mu$-tuning is unreliable: out of $18$ values of $\mu$, only $2$ simultaneously achieve \AlgNameShort-level dipole moments and satisfy the energy constraint. Histogram shows dipole moments for all baseline runs, separated (in color) by feasibility compared to \AlgNameShort, indicated by the purple dashed line which is feasible without any tuning.
    \ref{tab:mol-evaluation-fixed}: Numeric Evaluation of (\ref{fig:pareto}), ($95\%$ CI) $B=-80$ Ha.
    \ref{fig:counts_dipole}: Distribution of dipole moments for fixed-$\mu$ baselines separated by constraint satisfaction, \AlgNameShort result indicated.
    }
    \label{fig:cfo_baseline}
    \vspace{-14pt}
\end{figure*}
\setlength{\belowcaptionskip}{0pt}

\mypar{\AlgNameShort can run with approximate fine-tuning oracles and a limited number of iterations $K$}
While \AlgNameShort performs $K$ outer iterations, standard fine-tuning solvers \citep{domingo-enrich_adjoint_2025, uehara_feedback_2024} require $N$ inner steps. To avoid a double loop, we fix the total solver budget to $M \!=\! K \cdot N$ in all experiments. Increasing $K$ reallocates compute from a more accurate inner solver to more frequent dual updates, trading solver precision for update frequency.

Under a fixed budget $M\!=\!6000$ (Figures~\ref{fig:mog-pre}–\ref{fig:mog-am}), varying $K$ shows a clear reward-constraint trade-off. Few updates ($K\!=\!3$, $N\!=\!2000$) yield high reward but large constraint violation ($0.40$), while frequent updates ($K\!=\!100$, $N\!=\!60$) nearly eliminate violations ($0.10$) at the cost of reward ($-5.91$). An intermediate setting ($K\!=\!20$) achieves both low violation ($0.12$) and high reward ($-4.75$), see Figure~\ref{fig:KvsN}. Overall, \AlgNameShort effectively allocates a fixed compute budget, balancing solver accuracy and dual update frequency, to match the computational cost of the \Solver.

\looseness -1 Importantly, this observation also holds for the molecular design task in Figure \ref{fig:mol_dipole_energy}. \AlgNameShort ($K\!\!=\!\!6$, $N\!\!=\!\!10$) and AM ($N\!\!=\!\!60$) have comparable computational cost, as both perform $60$ gradient steps. Concretely, \AlgNameShort has a total runtime of $44.5$ min and compares well to the runtime of AM with $40.25$ min. This $5\%$ increase arises from the extra sampling and constraint evaluation performed in Step $3$ of Alg.~\ref{alg:cfo}. Thus demonstrating that \AlgNameShort can operate effectively in high-dimensional domains even with an approximate oracle.
In Appendix~\ref{apx-sec:further_experiments}, we additionally report molecular-design results on QM9~\citep{ramakrishnan_quantum_2014} using a differentiable simulator (\texttt{dxTB}~\citep{friede_dxtbefficient_2024}) as exact reward and constraint functions, complementing the GEOM Drugs experiments shown here.

\vspace{-2pt}
\section{Related Work} \vspace{-2pt}
\label{sec:related_works}

\mypar{Control-based fine-tuning of flow and diffusion models}
\looseness -1 Recent works have tackled fine-tuning of diffusion and flow models to maximize expected rewards under KL regularization as an entropy-regularized optimal control problem~\citep[\eg][]{uehara_fine-tuning_2024, tang_fine-tuning_2024, uehara_feedback_2024,  domingo-enrich_adjoint_2025}. Such methods have been successfully applied to real-world domains such as image generation~\citep{domingo-enrich_adjoint_2025}, molecular design~\citep{uehara_feedback_2024}, or protein engineering~\citep{uehara_feedback_2024}. These methods have also been adopted as subroutines to tackle settings beyond reward maximization, such as manifold exploration~\citep{de2025provable, desanti2026verifierconstrainedflowexpansiondiscovery} or optimization of distributional objectives, such as conditional value at risk~\citep{santi2025flow, wang2026efficient} or reward-guided model merging~\citep{desanti2026unifieddensityoperatorview}. \AlgNameShort extends fine-tuning methods for reward maximization to leverage known constraint functions and can be straightforwardly used as a plug-in oracle in more complex settings (\eg exploration and distributional fine-tuning).
Importantly, \AlgNameShort is agnostic to the underlying data modality (continuous, discrete, or mixed): the choice of inner \Solver determines this. First-order solvers such as Adjoint Matching~\citep{domingo-enrich_adjoint_2025} are well-suited to differentiable rewards and constraints, while gradient-free schemes such as DiffusionNFT~\citep{zheng2025diffusionnft} and Flow-GRPO~\citep{liu2026flow} enable non-differentiable objectives, and discrete-space solvers such as DRAKES~\citep{wang2025fine} or SEPO~\citep{zekri2026fine} unlock direct application to fully discrete generative models (\eg for peptide or protein design).

\mypar{Constrained Generative Modeling and Optimization}
\looseness -1 Most prior work addresses constraint-aware generative modeling, developing tools for handling linear~\citep{graikos_fast_2025}, differentiable~\citep{khalafi_constrained_2024}, and black-box~\citep{kong2024diffusion} constraints. Enforcement spans training-time dual/penalty formulations~\citep{khalafi_constrained_2024} and inference-time strategies such as reward-weighted denoising for non-differentiable objectives~\citep{kong2024diffusion} and classifier or classifier-free guidance for differentiable surrogates~\citep{dhariwal_diffusion_2021,ho_classifier-free_2022}. These techniques have been applied in domains such as molecular design~\citep{kong2024diffusion} and constrained  planning~\citep{ma2025constraintawarediffusionguidancerobotics}. The closest work to ours is arguably~\citet{khalafi_constrained_2024}, with the main difference that our setting is for post-training, \ie at fine-tuning time,  constrained generative optimization rather than a training-time scheme enforcing given constraints.
Concretely, \citet{khalafi_constrained_2024} keep the pre-trained model weights fixed and instead enforce the constraint through inference-time guidance, whereas \AlgNameShort \emph{fine-tunes} the model so the constraint is internalized into the weights, and inference proceeds as standard sampling at base-model cost.

\mypar{Augmented Lagrangian and Dual Methods in Constrained Sampling}
Augmented Lagrangian and dual formulations turn equality and inequality constraints into auxiliary updates that run with the sampler, enabling draws from unnormalized targets while enforcing feasibility either per-sample or in expectation \citep{khalafi2025compositionalignmentdiffusionmodels, blanke2025strictlyconstrainedgenerativemodeling, chamon2025constrainedsamplingprimalduallangevin, smith2026calibratinggenerativemodelsdistributional}. 
For example, \citet{zhang2025constraineddiffuserssafeplanning} employ an augmented Lagrangian method to steer diffusion rollouts toward time-varying safety sets without retraining of the base model.
Dual schemes similarly maintain physical invariants during sampling or data assimilation while still retaining sufficient exploration of feasible states \citep{blanke2025strictlyconstrainedgenerativemodeling}.
In addition to constraint generation or sampling, \AlgNameShort also performs reward-driven optimization with the augmented formulation.

\vspace{-2pt}
\section{Conclusion} \vspace{-2pt}
\label{sec:conclusion}

\looseness -1 This work tackles the problem of \emph{constrained generative optimization} via fine-tuning of pre-trained flow and diffusion models, a relevant and challenging task in discovery applications such as drug discovery. After proposing a constrained optimization formulation of the problem, we introduced \AlgNameLong, a method that transforms the constrained objective into a sequence of fine-tuning steps, and provides feasibility and optimality guarantees. Empirical results on both illustrative settings and molecular design tasks demonstrate the ability of \AlgNameShort to steer pre-trained flow models toward high-reward regions while satisfying the given constraints. Promising directions include adding zero-order oracles to \AlgNameShort beyond the current first-order choice, and developing inference-time constraint handling rather than fine-tuning, and testing on protein engineering tasks. 
\section*{Acknowledgments} \vspace{-2mm}
This publication was made possible by the ETH AI Center doctoral fellowship to Riccardo De Santi.
The project has received funding from the Swiss
National Science Foundation under NCCR Catalysis (grant number 180544 and 225147) and NCCR Automation (grant agreement 51NF40 180545), a National Centre of Competence in Research funded by the Swiss National Science Foundation.
This work was supported by an ETH Zurich Research Grant.

\section*{Impact Statement}
This paper presents work whose goal is to advance the field of generative optimization. There are many potential societal consequences of our work, none which we feel must be specifically highlighted here.

\bibliography{references}
\bibliographystyle{icml2026}

\newpage
\appendix
\onecolumn

\section{Implementation of \SolverBig - Adjoint Matching \citep{domingo-enrich_adjoint_2025}}
\label{apx-sec:am_implementation}

To ensure completeness, below we provide pseudocode for one concrete realization of a \Solver as in Eq.\ \ref{eq:solver}. We describe exactly the version employed in Sec.\ \ref{sec:results}, which builds on the Adjoint Matching framework~\citep{domingo-enrich_adjoint_2025}, casting linear fine-tuning as a stochastic optimal control problem and tackling it via regression.

Let $u^{\text{pre}}$ be the initial, pre-trained vector field, and $u^{\text{finetuned}}$ its fine-tuned counterpart.
We also use $\bar{\alpha}$ to refer to the accumulated noise schedule from~\citet{ho_denoising_2020}, effectively following the flow models notation introduced by Adjoint Matching~\citep[][Sec. 5.2]{domingo-enrich_adjoint_2025}.
The full procedure is in Alg.\ \ref{alg:adjoint_matching}.

\begin{algorithm}[H]
    \caption{\Solver - Adjoint Matching~\citep{domingo-enrich_adjoint_2025}}
    \label{alg:adjoint_matching}
        \begin{algorithmic}[1]
        \STATE \textbf{Input:}
            $N:$ number of iterations, 
            $u^k:$ current finetuned flow vector field,
            $u^{\text{pre}}:$ pre-trained flow vector field, 
            $\alpha$ regularization coefficient (Eq.\ \ref{eq:constrained_gen_problem}), 
            $\nabla f$: objective function gradient, 
            $m$ batch size,
            $h$ step size
        \STATE \textbf{Init:} $u^{\text{finetuned}} \coloneqq u^k$ with parameter $\theta$ 
        \FOR{$n=0, 1, 2, \hdots, N-1$}
            \STATE Sample $m$ trajectories $\{X_t\}_{0 \leq t \leq 1}$ via a memoryless noise schedule $\sigma(t)$~\citep{domingo-enrich_adjoint_2025}, \eg 
            \begin{equation} \label{apx-eq:init_sampling}
                \text{sample } \epsilon_t \sim \mathcal{N}(0,I), \; X_0 \sim \mathcal{N}(0,I) \text{, then:}
            \end{equation}
            \begin{equation}\label{apx-eq:sampling}
                X_{t+h} = X_t + h\left(2u_{\theta}^{\text{finetuned}}(X_t, t) -  \frac{\bar{\alpha}_t}{\alpha_t}X_t\right) + \sqrt{h} \sigma(t) \epsilon_t
            \end{equation}
            \STATE Use objective function gradient: $$\Tilde{a}_1 = - \frac{1}{\alpha} \nabla_{X_1} f(X_1)$$
            \STATE For each trajectory, solve the lean adjoint ODE, \citep[Eq. 38-39]{domingo-enrich_adjoint_2025}, from $t=1$ to $0$:
            \begin{equation} \label{apx-eq:adjoint}
                \Tilde{a}_{t-h} = \Tilde{a}_t + h\Tilde{a}_t^\top \nabla_{X_t}\left(2 u^{\text{pre}}(X_t,t) - \frac{\bar{\alpha}_t}{\alpha_t}X_t \right)
            \end{equation}
            \STATE Where $X_t$ and $\Tilde{a}_t$ are computed without gradients, \ie $X_t = \texttt{stopgrad}(X_t), \Tilde{a}_t = \texttt{stopgrad}(\Tilde{a}_t)$.
            For each trajectory, compute the Adjoint Matching objective \citep[Eq. 37]{domingo-enrich_adjoint_2025}:
            \begin{equation} \label{apx-eq:loss}
                \mathcal{L}_{\theta} = \sum_{t\in \{0, h, \dots, 1-h\}} \norm{\frac{2}{\sigma(t)}\left( u_{\theta}^{\text{finetuned}}(X_t,t) - u^{\text{pre}}(X_t,t) \right) + \sigma(t) \Tilde{a}_t}^2 
            \end{equation}
            \STATE Compute the gradient $\nabla_\theta \mathcal{L}(\theta)$ and update $\theta$. 
        \ENDFOR
        \STATE \textbf{Output: } Fine-tuned flow vector field $u_{\theta}^{\text{finetuned}}$
        \end{algorithmic}
\end{algorithm}

For further implementation details, we refer to \citet[Appendix G]{domingo-enrich_adjoint_2025}.
 \newpage

\section{Further Experiments and Details - Illustrative Examples}
\label{apx-sec:experiments_details}

\mypar{Reward-only rejection sampling}
We also compare against a simple rejection-sampling baseline, complementary to the fixed-$\mu$ baseline in Eq.~\ref{eq:fixed}. We fine-tune a policy purely on the reward signal using Adjoint Matching and then enforce feasibility only by rejecting samples that violate the constraint. On the example in Figure \ref{fig:mog-am}, this reward-only policy attains a constraint satisfaction rate of $13.40\%$, compared to $84.40\%$ for the policy fine-tuned with \AlgNameShort, \ie accounting for the constraint during fine-tuning. Inspecting the samples further reveals that (1) violations under \AlgNameShort occur predominantly near the constraint boundary, and (2) rejection sampling is ineffective when the reward optimum and the constraint region are poorly aligned.

\mypar{Details for visually interpretable settings (Figure \ref{fig:syn})}
The Mixture of Gaussians (Figure \ref{fig:mog-pre}) is generated by 
\begin{equation*}
     p(x) = \frac{1}{2} \left(\mathcal{N}\left(x \mid 
        \begin{bmatrix}
            -7 \\ -2
        \end{bmatrix}, \boldsymbol{\Sigma}\right) + \mathcal{N}\left(x \mid 
        \begin{bmatrix}
            2 \\ 7
        \end{bmatrix}, \boldsymbol{\Sigma}\right)\right), \; \text{ with } \; \boldsymbol{\Sigma} =
        \begin{bmatrix}
            3 & 0 \\ 0 & 3
        \end{bmatrix},
\end{equation*}
We sample $20k$ points ($80/20$ train/validation split) and train a MLP with $3$ hidden layers, each with $256$ nodes, for the vector field $v$.
The same setting is used for the experiment on the correlated Gaussian (Figure \ref{fig:sg-pre}), with:
\begin{equation*}
     p(x) = \mathcal{N}\left(x \mid 
        \begin{bmatrix}
            0.5 \\ 0.5
        \end{bmatrix}, 
        \begin{bmatrix}
        1 & 0.5 \\ 0.5 & 1
        \end{bmatrix}
        \right)
\end{equation*}

The constraint triangles have the following vertices:
\begin{enumerate}
    \item \textbf{MoG:} 
    \begin{equation*}
        \triangle^{I}: \left(\begin{bmatrix}-10\\-4\end{bmatrix},\begin{bmatrix}-5\\-4\end{bmatrix}\begin{bmatrix}-5\\ 2\end{bmatrix}\right)
        \; \text{ and } \; 
        \triangle^{II}:
        \left(\begin{bmatrix} 4\\-1\end{bmatrix},\begin{bmatrix}10\\ 2\end{bmatrix},\begin{bmatrix} 5\\ 4\end{bmatrix} \right)
    \end{equation*}
    \item \textbf{Correlated Gaussian:}
    \begin{equation*}
         \triangle:\; \left(\begin{bmatrix}-1\\-0.5\end{bmatrix},\begin{bmatrix}1\\-0.5\end{bmatrix},\begin{bmatrix}0\\ 1\end{bmatrix}\right)
    \end{equation*}
\end{enumerate}

\mypar{Computational Cost of: \AlgNameShort compared to AM}
We plot the computational cost of \AlgNameShort for a fixed budget $M=6000$ and a varying $K \in \{3, 15, 20, 100\}$ and compare it to AM \citep{domingo-enrich_adjoint_2025} with $N=6000$ (see Sec. \ref{sec:results}).

\setlength{\abovecaptionskip}{0pt}
\begin{figure*}[h!]
  \centering
  \includegraphics[width=0.25\linewidth]{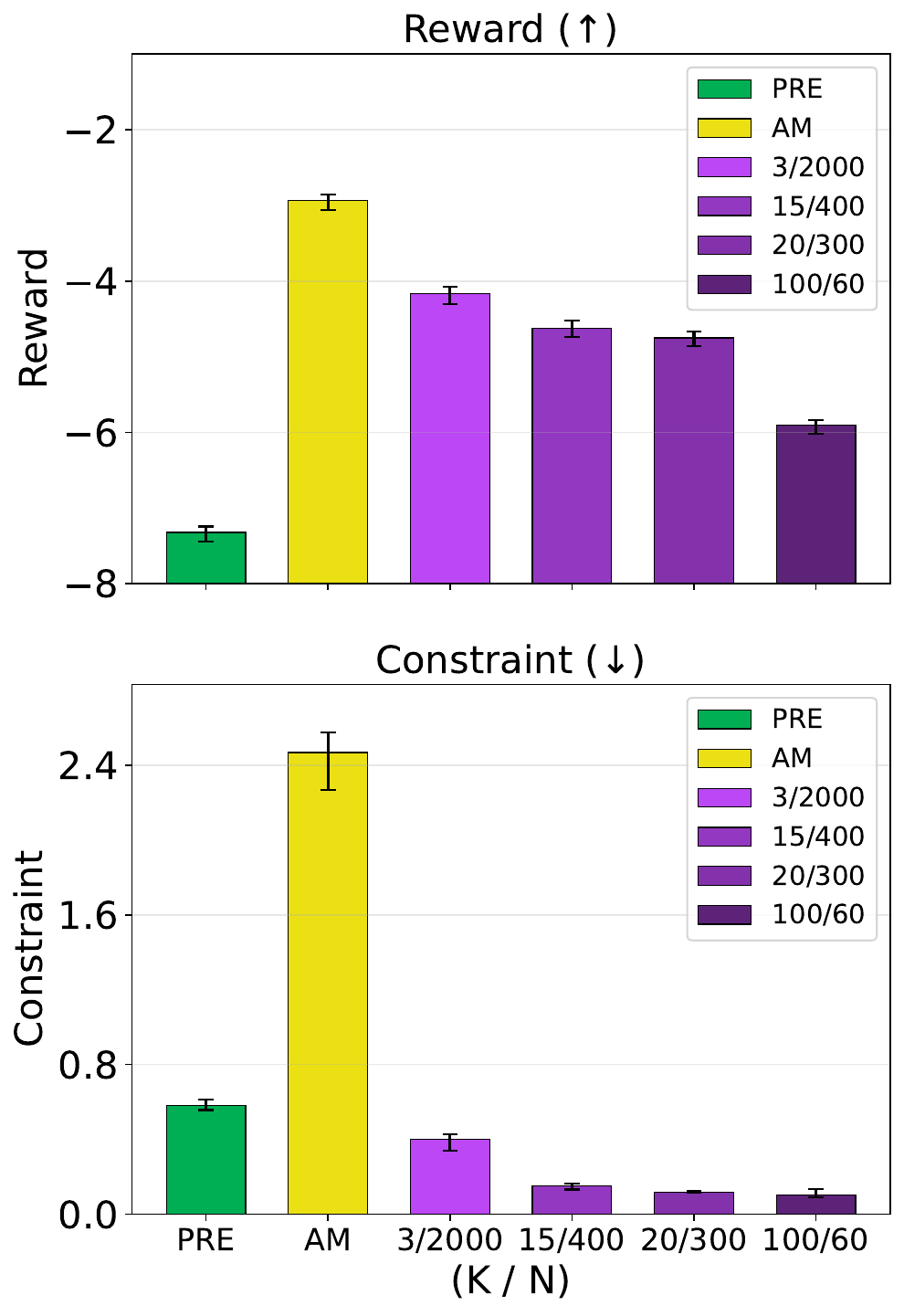}
  \caption{Reward and constraint for different values of (K/N)}
  \label{fig:KvsN}
\end{figure*}
\setlength{\belowcaptionskip}{0pt} 
\mypar{Comparison to DiffOpt~\citep{kong2024diffusion}, an inference-time constrained-generation method}
DiffOpt is a recent inference-time scheme that keeps the pre-trained model weights fixed and instead modifies the sampling procedure to satisfy constraints (a fundamentally different setting from \AlgNameShort, which fine-tunes the model). For completeness, we report a comparison on the illustrative MoG task (Figure~\ref{fig:mog-pre}-\ref{fig:mog-am}); the qualitative sample distributions are shown in Figure~\ref{fig:diffopt-samples}, and numerical results are summarized in Table~\ref{tab:diffopt}.
Across three settings of DiffOpt's guidance and Langevin-MCMC hyperparameters ($\beta_r, \beta_c, L, \eta$) DiffOpt achieves a per-sample violation rate of $10$--$50\%$ versus $8.5\%$ for \AlgNameShort, and incurs a substantially higher per-sample sampling cost (between $44\times$ and $55\times$ slower than \AlgNameShort) because every sample requires repeated guidance steps.
Tuning DiffOpt is therefore essential: \emph{aggressive} settings ($\beta_c \geq 50$) caused sample divergence and were discarded. Even when tuned to either favor reward (``high reward'') or constraint satisfaction (``gentle+long''), DiffOpt either violates the constraint substantially (twice the rate of \AlgNameShort) or, when constraint-tight, sacrifices both reward and sampling efficiency. \AlgNameShort, on the contrary, fine-tunes the policy once, retains the per-sample inference cost of the base model, and reaches a reward of $-4.75$ at $8.5\%$ violation without any per-sample guidance.
\begin{figure*}
    \centering
    \begin{subfigure}[b]{0.24\linewidth}
        \centering
        \includegraphics[width=\linewidth, keepaspectratio]{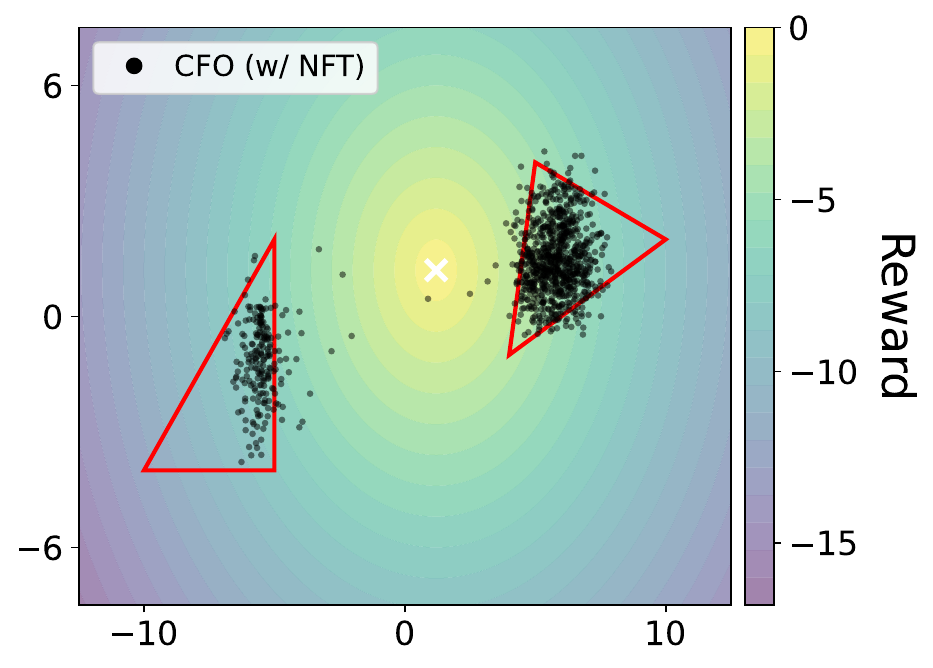}
        \caption{\AlgNameShort\,(NFT)}
        \label{fig:diffopt-cfo-nft}
    \end{subfigure}
    \hfill
    \begin{subfigure}[b]{0.24\linewidth}
        \centering
        \includegraphics[width=\linewidth, keepaspectratio]{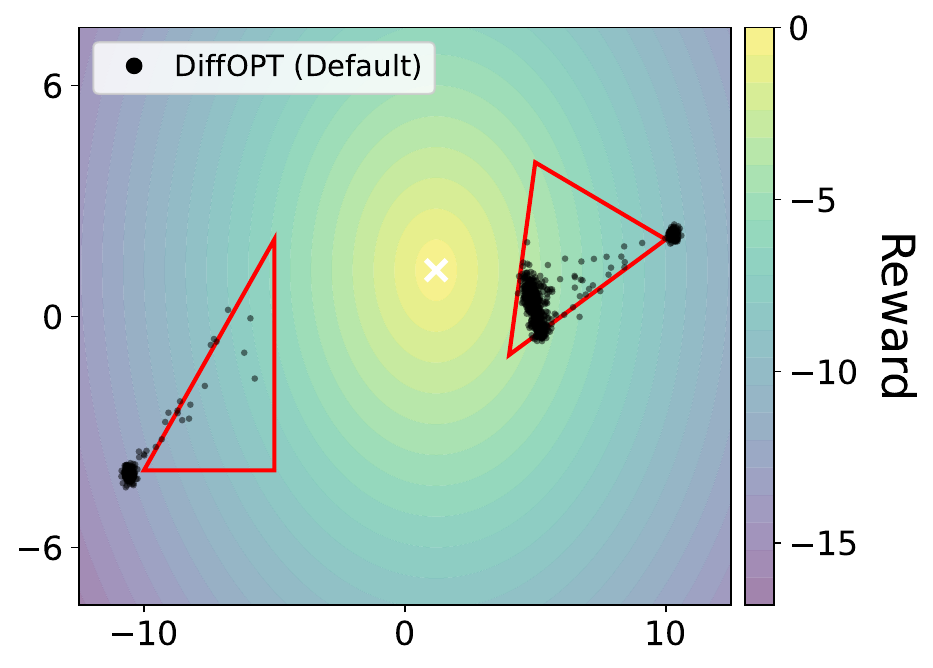}
        \caption{DiffOpt (default)}
        \label{fig:diffopt-default}
    \end{subfigure}
    \hfill
    \begin{subfigure}[b]{0.24\linewidth}
        \centering
        \includegraphics[width=\linewidth, keepaspectratio]{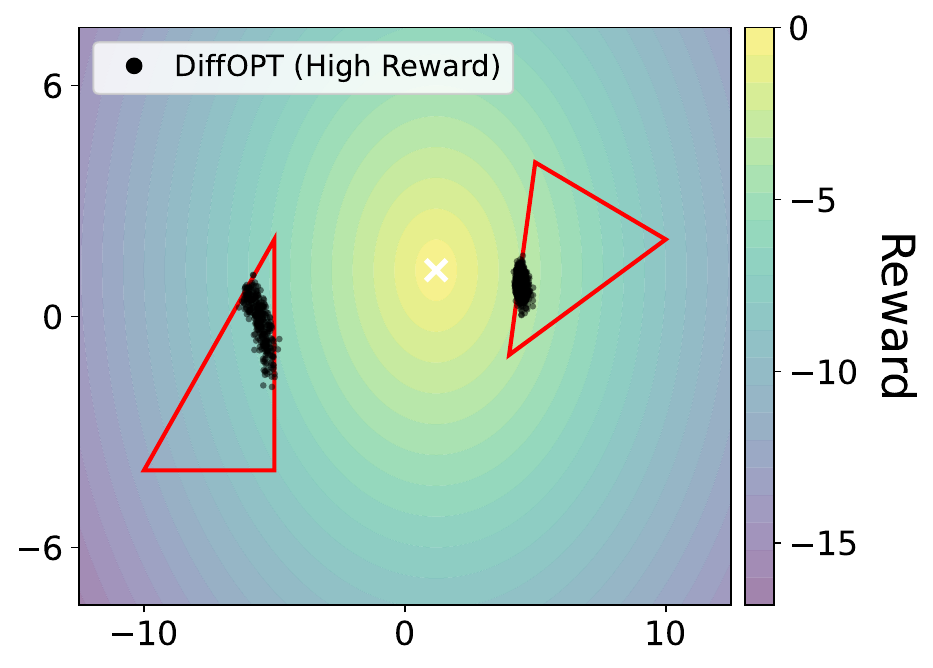}
        \caption{DiffOpt (high reward)}
        \label{fig:diffopt-high-reward}
    \end{subfigure}
    \hfill
    \begin{subfigure}[b]{0.24\linewidth}
        \centering
        \includegraphics[width=\linewidth, keepaspectratio]{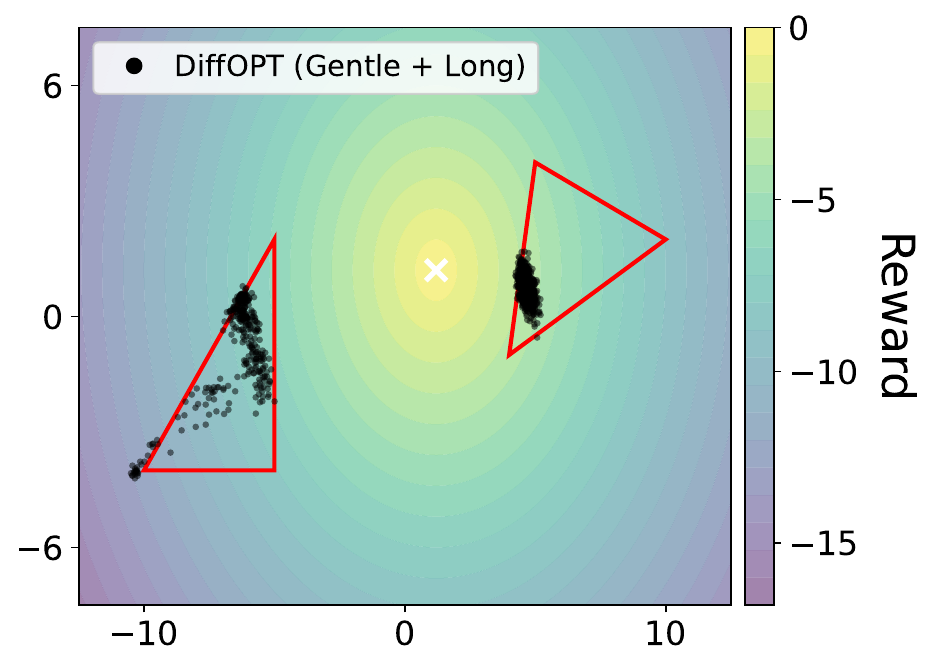}
        \caption{DiffOpt (gentle+long)}
        \label{fig:diffopt-gentle-long}
    \end{subfigure}
    \vspace{10pt}
    \caption{Qualitative comparison of samples on the MoG task. (\ref{fig:diffopt-cfo-nft}) \AlgNameShort with DiffusionNFT~\citep{zheng2025diffusionnft} as the inner \Solver, as a fine-tuning-based reference. (\ref{fig:diffopt-default}-\ref{fig:diffopt-gentle-long}) DiffOpt~\citep{kong2024diffusion} under three hyperparameter regimes; see Table~\ref{tab:diffopt} for the corresponding numerical results.}
    \label{fig:diffopt-samples}
\end{figure*}

\begin{table}
    \centering
    \caption{DiffOpt~\citep{kong2024diffusion} on the MoG task across three hyperparameter regimes, compared against \AlgNameShort. All values are mean $\pm$ $95\%$ CI.}
    \vspace{2pt}
    \begin{tabular}{lcccc}
    \toprule
    \textbf{Method} & \textbf{Settings} & \textbf{Reward $\uparrow$} & \textbf{Violation $\downarrow$} & \textbf{KL $\downarrow$} \\
    \midrule
    \AlgNameShort           & $K\!=\!20$, $N\!=\!300$                          & $-4.75 \pm 0.04$ & $8.5\%$ & --            \\
    \midrule
    DiffOpt (default)       & $\beta_r\!=\!1,\, \beta_c\!=\!10,\, L\!=\!200,\, \eta\!=\!10^{-3}$ & $-7.54 \pm 0.09$ & $49.4\% \pm 0.6\%$ & $2.05 \pm 0.03$ \\
    DiffOpt (high reward)   & $\beta_r\!=\!5,\, \beta_c\!=\!20,\, L\!=\!500,\, \eta\!=\!5\!\cdot\!10^{-4}$ & $-4.52 \pm 0.05$ & $21.4\% \pm 1.7\%$ & $1.48 \pm 0.02$ \\
    DiffOpt (gentle+long)   & $\beta_r\!=\!2,\, \beta_c\!=\!15,\, L\!=\!1000,\, \eta\!=\!3\!\cdot\!10^{-4}$ & $-5.05 \pm 0.05$ & $10.4\% \pm 0.6\%$ & $1.31 \pm 0.02$ \\
    \bottomrule
    \end{tabular}
    \label{tab:diffopt}
\end{table} \newpage

\section{Further Results on Molecular Design Experiments}
\label{apx-sec:further_experiments}
\mypar{Molecular Design} For the molecular design task, we fine-tune FlowMol \citep{dunn_mixed_2024}, which jointly models continuous atomic coordinates and discrete categorical variables (atom types, formal charges, bond orders). We refer to \citet{dunn_mixed_2024} for the sampling of categorical and initial values. We use Gaussian sampling for the experiments on GEOM-Drugs and CTMC for the experiments on QM9.

\mypar{GNN Details and Generalization}
To verify that optimization targets the intended physical objective rather than exploiting the surrogate, we evaluate the ground-truth \texttt{xTB} values for every molecule sampled during the execution of \AlgNameShort and compare their properties to the GNN predictions. For the energy (used as a constraint), surrogate predictions are essentially indistinguishable from \texttt{xTB}, indicating faithful approximation within the explored region. For the dipole moment (the maximization target), the surrogate systematically underestimates the true xTB values by $10\%$, yet the two remain strongly correlated and move in lockstep throughout the fine-tuning. Consequently, improvements under the surrogate translate to larger gains under \texttt{xTB}.
Overall, these checks indicate that \AlgNameShort does not exploit model artifacts and remains within the training distribution.
\setlength{\abovecaptionskip}{0pt}
\begin{figure*}[t]
    \begin{subfigure}[b]{0.3\textwidth}
        \centering
        \includegraphics[width=\textwidth]{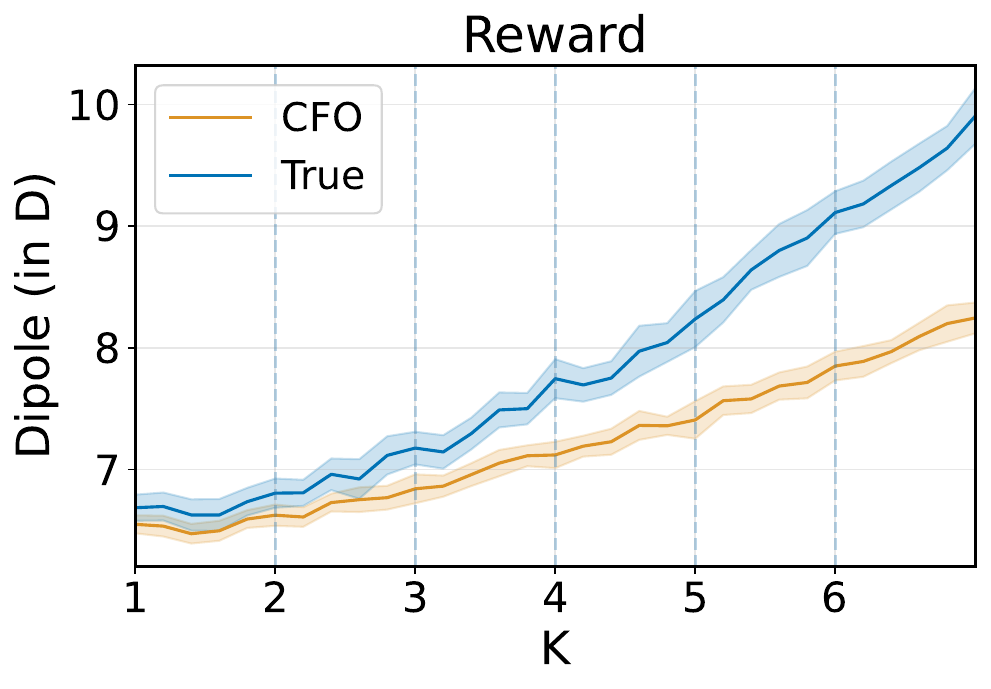}
        \caption{Dipole Moment (in D)}
        \label{fig:dipole_true}
    \end{subfigure}
    \hfill
    \begin{subfigure}{0.3\textwidth}
        \centering
        \includegraphics[width=\textwidth]{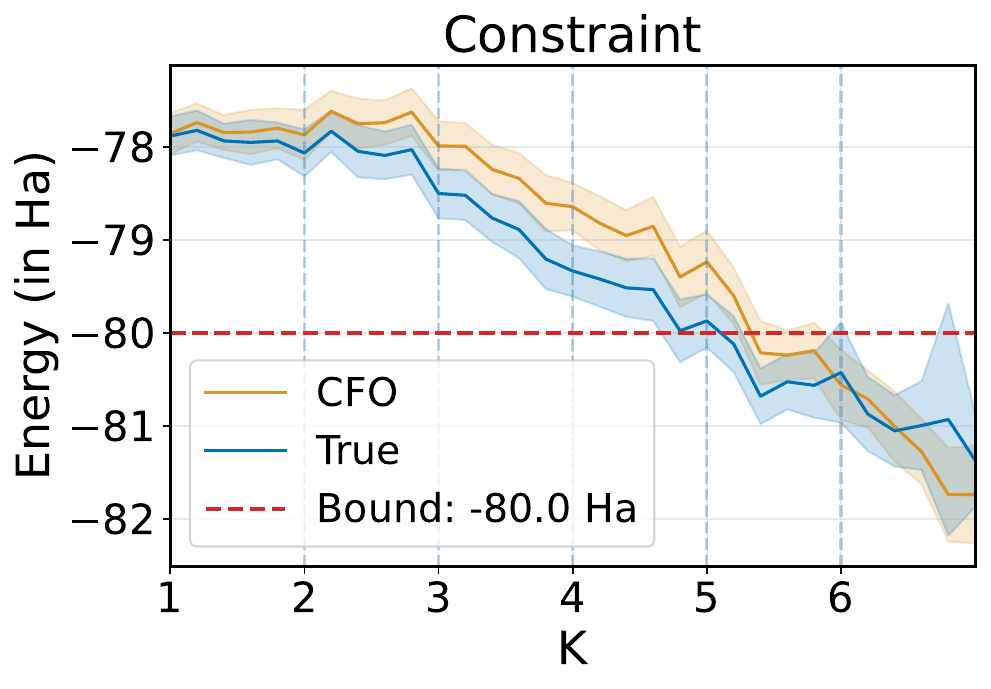}
        \caption{Energy (in Ha)}
        \label{fig:energy_true}
    \end{subfigure}
    \hfill
    \begin{subfigure}{0.30\textwidth}
        \centering
        \resizebox{1\textwidth}{!}{%
            \begin{tabular}{lcc}
            \toprule
            \textbf{MD} & $\mathbb{E}[r(x)]\uparrow$ & $\mathbb{E}[c(x)]\downarrow$ \\
            \midrule
            True PRE & $6.68\pm0.28$ & $-77.71\pm0.55$\\
            \midrule
            \AlgNameShort & $8.37\pm0.26$ & $-81.68\pm0.71$\\
            \midrule
            \texttt{xtb} & $10.31\pm0.46$ & $-81.42\pm0.88$\\
            \bottomrule
            \end{tabular}
        }
        \vspace{15pt}
        \caption{Evaluation}
        \label{tab:mol-evaluation_true}
    \end{subfigure}
    \caption{ 
        Energy-Constrained Dipole Moment Maximization for Molecular Design (MD) (\ref{fig:dipole_true}-\ref{fig:energy_true}): Evolution of the constraint and reward during \AlgNameShort compared to the true \texttt{xtb} Value. \ref{tab:mol-evaluation_true}: Numeric Comparison between of \AlgNameShort and \texttt{xtb}.
    }
    \label{fig:full-mol-true}
    \vspace{-10pt}
\end{figure*}
\setlength{\belowcaptionskip}{0pt}

\setlength{\abovecaptionskip}{0pt}
\begin{wrapfigure}{r}{0.35\textwidth}
    \vspace{-10pt}
    \centering
    \begin{subfigure}[b]{0.9\linewidth}
        \centering
        \includegraphics[width=\linewidth]{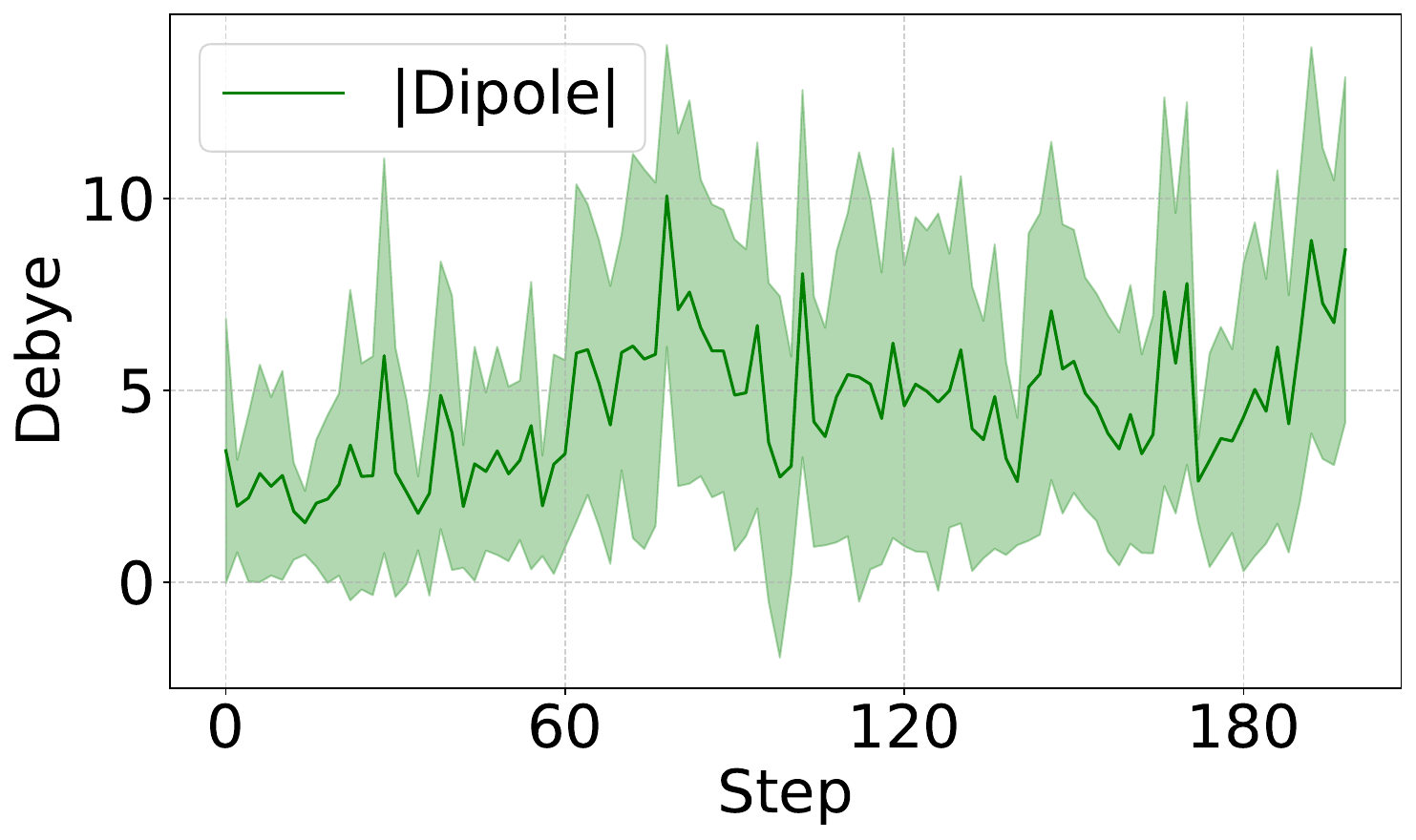}
        \caption{Dipole Moment (in D)}
        \label{fig:dipole-qm9}
    \end{subfigure}
    \begin{subfigure}[b]{0.9\linewidth}
        \centering
        \includegraphics[width=\linewidth]{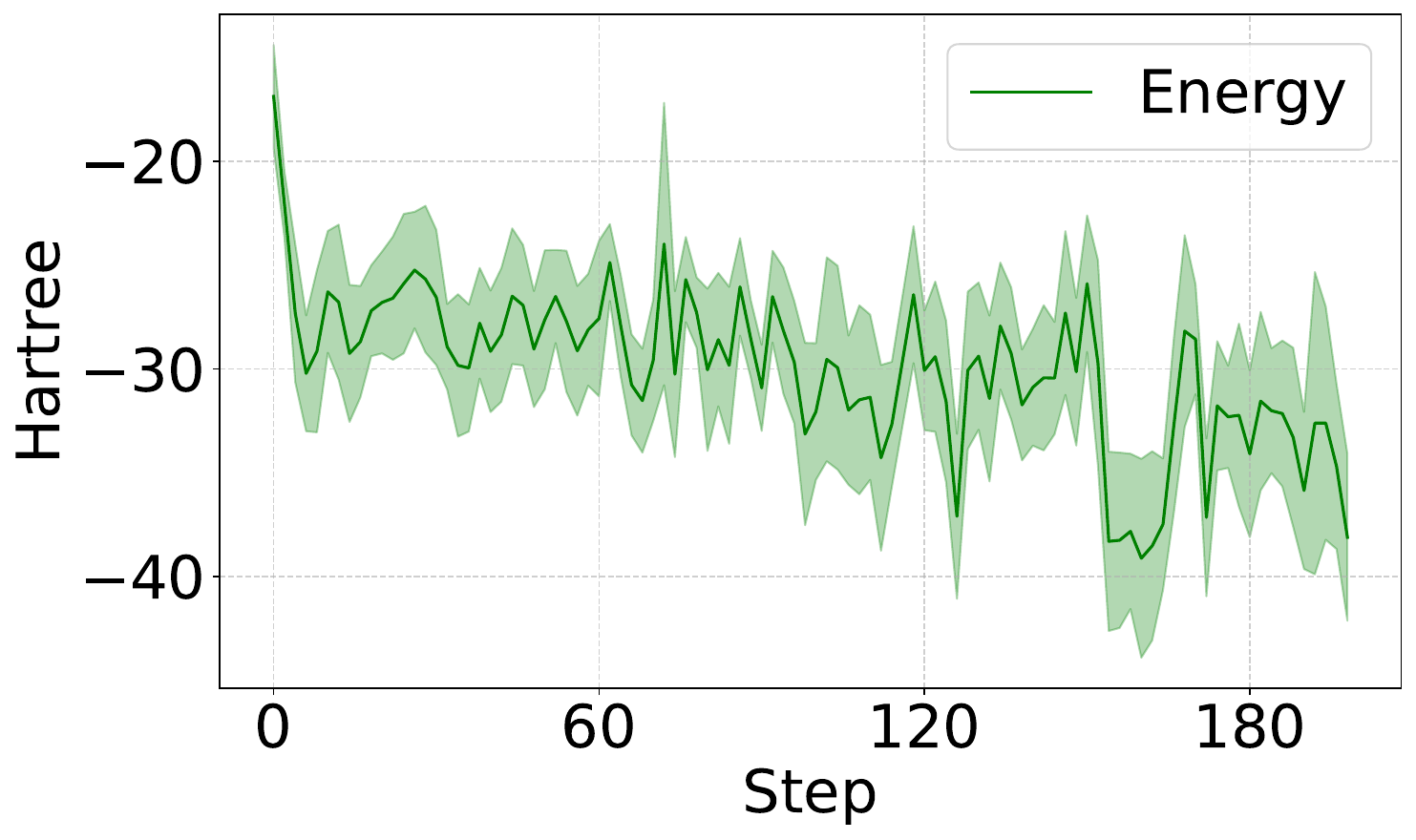}
        \caption{Energy (in Ha)}
        \label{fig:energy-qm9}
    \end{subfigure}
    \caption{ 
        Energy-constrained dipole moment maximization on QM9~\citep{ramakrishnan_quantum_2014} and using \texttt{dxtb}~\citep{friede_dxtbefficient_2024} as reward and constraint functions, with exact gradients of the simulation.}
    \label{fig:simple-gaussian}
\end{wrapfigure}
\setlength{\belowcaptionskip}{0pt} \mypar{Additional Results with Exact Rewards and Constraints using \texttt{dxtb}}
In a complementary experiment, we employ \texttt{dxtb}~\citep{friede_dxtbefficient_2024} instead of neural approximators to obtain rewards and constraints, which offers exact gradients over atomic positions.
For this experiment, we fine-tune FlowMol pre-trained on QM9~\citep{ramakrishnan_quantum_2014}. We again maximize the dipole moment while constraining the total energy to remain below $-18$ Ha, a value that differs from the constraint in the main paper due to the different atomic number distribution. As shown in Table \ref{tab:qm9_results}, the pre-trained model $\pi^{pre}$ violates such a constraint with $65\%$ of samples. In contrast, the model fine-tuned via \AlgNameShort can successfully achieve zero constraint violation (30 Monte Carlo samples, all below the threshold) while increasing the average norm of the dipole moment from $3.43 \pm 3.45$ to $8.66 \pm 4.50$, as shown in Fig. \ref{fig:dipole-qm9}. As a baseline comparison, we compare to just using Adjoint Matching~\citep{domingo-enrich_adjoint_2025}, which increases the dipole to $9.04$D but also the energy to $-15.5$Ha.

\mypar{Results using \texttt{posebuster} validity score function}
To further highlight \AlgNameShort's flexibility, we replace the energy constraint with a molecular-validity criterion based on \texttt{posebuster}~\citep{buttenschoen_posebusters_2024}, while keeping the dipole moment as reward. 
We train a GNN on a custom validation score that equals zero when a molecule is connected and passes the basic \texttt{posebuster} checks, and $1$ otherwise, running  \AlgNameShort with $K=2$, $N=50$, and $B=0.3$.
The pre-trained model attains a dipole moment of $6.92$ D but has a $53\%$ constraint-violation rate. In contrast, \AlgNameShort increases the reward to $9.60$ D while reducing the predicted violation rate to $39\%$.
In contrast to the energy constraints presented in the main text, the predicted violation rate also differs from the ground truth violation rate, which might be circumvented by an online learning of the constraint function.

\mypar{Additional Discussion on Validity of Molecules} 
For the molecular design experiments on drug-like molecules presented in the main text,  we further apply an RDKit validation step, including stereochemistry reassignment, hydrogen count correction, and full sanitization (valences, kekulization, bond orders).
Approximately $7\%$ of final molecules pass, which can be attributed to several reasons: Already in the base FlowMol model, only $34\%$ of molecules fulfill the RDKit validation step, highlighting the need for more diverse pre-training datasets and further base model improvements. Furthermore, the FlowMol-generated geometries used during optimization are not geometrically relaxed, which can lead to invalid bond lengths or angles (see examples in Figure \ref{fig:invalid-mols}).
This motivates the development of fully differentiable geometry relaxation methods for molecular design or the extension of \AlgNameShort to different solvers.

\setlength{\abovecaptionskip}{0pt}
\begin{figure}[ht]
    \centering
    \begin{subfigure}{0.45\linewidth}
        \includegraphics[width=\linewidth, keepaspectratio]{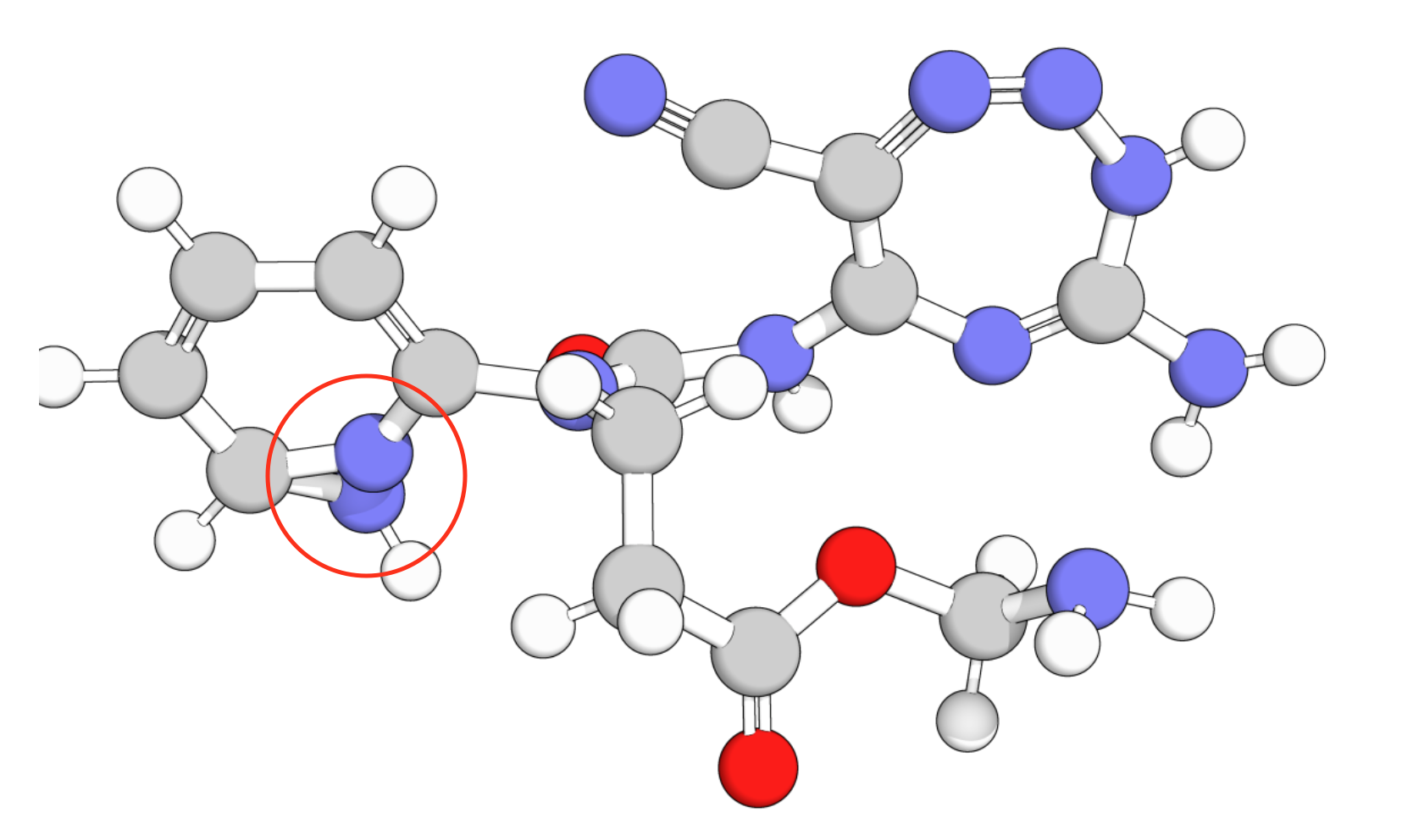}
    \end{subfigure}
    \hfill
    \begin{subfigure}{0.45\linewidth}
        \includegraphics[width=\linewidth, keepaspectratio]{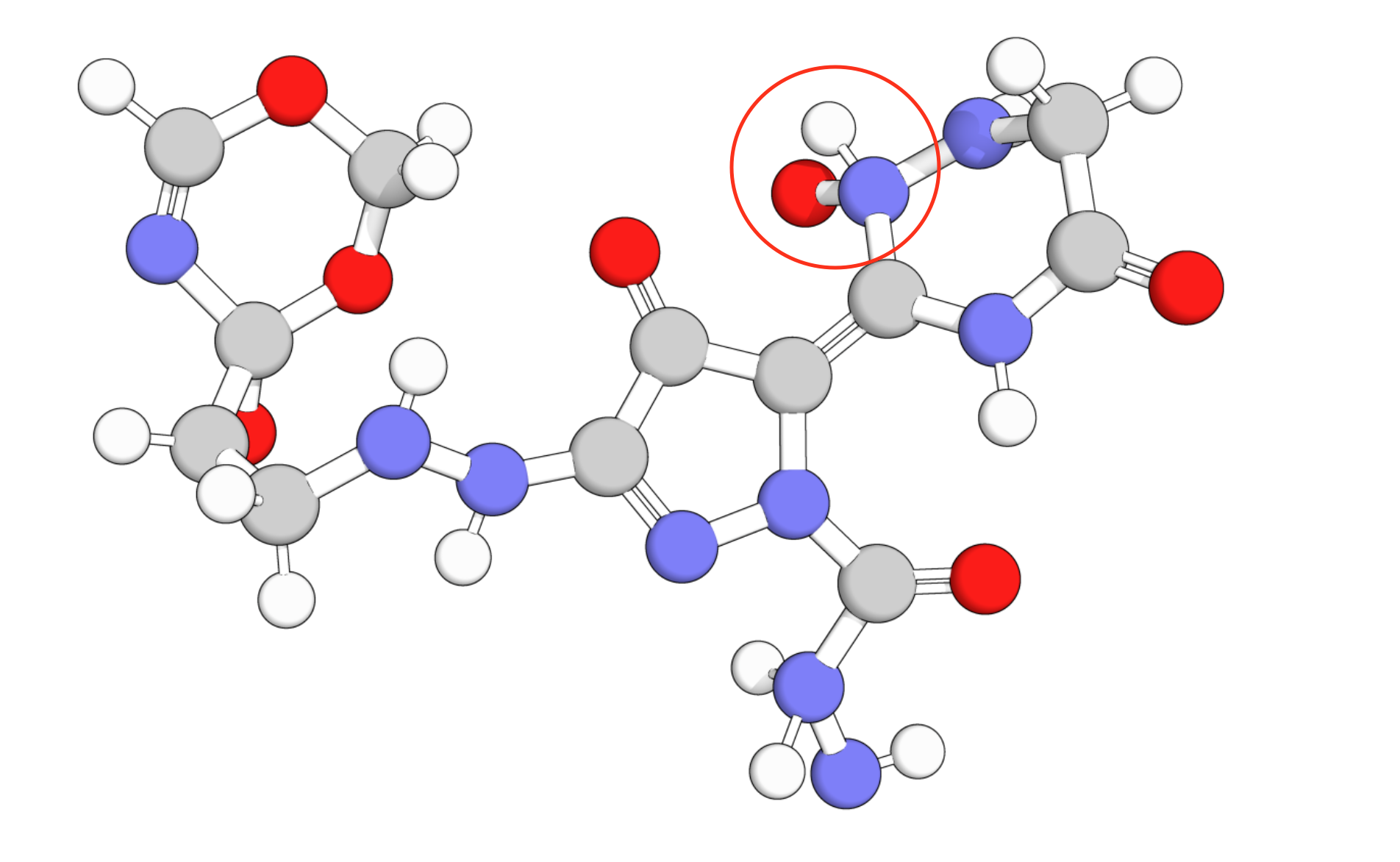}
    \end{subfigure}
    \vspace{18pt}
    \caption{Generated drug-like molecules failing the validity test and showing unreasonable bond lengths and angles, highlighted with red circles.}
    \vspace{1pt}
    \label{fig:invalid-mols}
\end{figure}
\setlength{\belowcaptionskip}{0pt}

\mypar{Definition of stringent validity criteria}
We evaluate validity on a more stringent criterion than pure \texttt{RDKit}-Sanitization. Namely, we add two criteria before that to make the workflow as follows:
\begin{equation*}
    \text{Is sample connected?} \to \text{Can generated conformer can be embedded?} \to \texttt{RDKit}-\text{Sanitization}
\end{equation*}

\begin{table}[!htbp]
\centering
\caption{Numeric results for \AlgNameShort on QM9 using \texttt{dxtb} for dipole and energy.}
\vspace{2pt}
\begin{tabular}{lcc}
\toprule
\textbf{Property} & \textbf{Stage} & \textbf{Value} \\
\midrule
\multirow{2}{*}{Dipole moment}
    & Pre  & $ 3.43 \pm 3.45 D$ \\
    & \AlgNameShort & $ 8.66 \pm 4.50 D$ \\
\midrule
\multirow{2}{*}{Energy}
    & Pre  & $-16.72 \pm 2.48 $ Ha \\
    & \AlgNameShort & $-39.40 \pm 4.01 $ Ha  \\
\midrule
\multirow{2}{*}{Violations}
    & Pre  & $65\%$ \\
    & \AlgNameShort & $0\%$ \\
\bottomrule
\end{tabular}
\label{tab:qm9_results}
\end{table} \newpage

\section{Parameter Details, Ablation Studies, and Algorithmic Extensions for \AlgNameShort and Adjoint Matching}
\label{apx-sec:parameter}

\setlength{\abovecaptionskip}{0pt}
\begin{figure}
    \centering
    \begin{subfigure}[b]{0.38\textwidth}
        \centering
        \includegraphics[width=\textwidth]{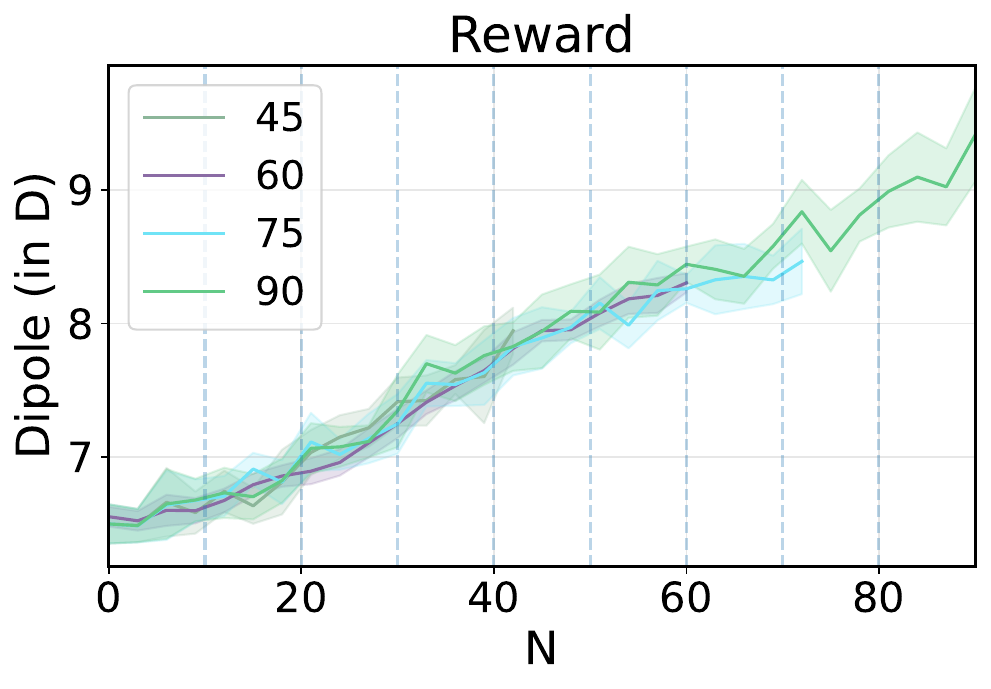}
        \caption{Dipole (in D) for different $N$.}
        \label{fig:am_reward_N}
    \end{subfigure}
    \hspace{20pt}
    \begin{subfigure}{0.38\textwidth}
        \centering
        \includegraphics[width=\textwidth]{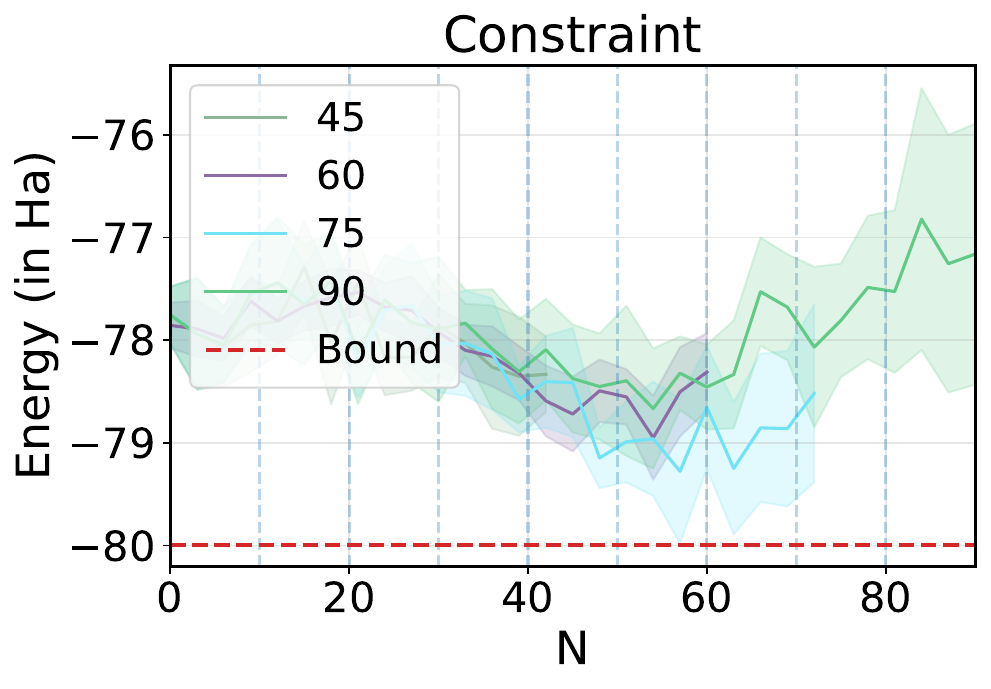}
        \caption{Energy (in Ha) for different $N$.}
        \label{fig:am_constraint_N}
    \end{subfigure}
    \vspace{10pt}
    \caption{
    Unconstrained Dipole maximization of AM \citep{domingo-enrich_adjoint_2025}, \ie $\mu=0$ in Eq.~\ref{eq:fixed}, for different $N$.
    }
    \label{fig:am_total_steps}
    \vspace{-10pt}
\end{figure}
\setlength{\belowcaptionskip}{0pt}

Discussion of the most important Hyperparameters of \AlgNameShort and \Solver:
\begin{itemize}
  \item \textbf{Initial penalty $\rho_{\text{init}}$.}
  A larger $\rho_{\text{init}}$ penalizes constraint violations more strongly, effectively reducing early exploration within the base distribution. Smaller $\rho_{\text{init}}$ does the opposite.

  \item \textbf{Penalty growth rate $\eta\!\ge\!1$.}
  Controls the penalty growth across updates. Larger $\eta$ accelerates enforcement and thus can reduce exploration of high-reward regions. Smaller $\eta$ tightens feasibility more gradually, allowing for early reward progress, but potentially slower constraint satisfaction.

  \item \textbf{Contraction rate $\tau\!\in\!(0,1)$.}
  Determines when to update the penalty parameter $\rho$. Smaller $\tau$ triggers more frequent updates, values near one update conservatively.

  \item \textbf{Multiplier lower bound $\lambda_{\min}\!<\!0$.}
  Safeguards the Lagrange multiplier via clipping. Smaller $\lambda_{\min}$ permits larger corrective signals of the offset, see Sec.~\ref{sec:algo}. If set to a large negative value, its influence on the final output is typically small, since $\lambda_{\min}$ is not achieved.

  \item \textbf{\Solver\ regularization $\alpha$.}
  Trade-off between staying close to the base distribution and reallocating mass. Larger $\alpha$ enforces stronger KL-regularization of the policy. A smaller $\alpha$ allows greater deviation from the base policy.

  \item \textbf{Sampling for constraint estimation (sample count/batch size).}
  Larger samples reduce estimator variance, stabilizing updates and improving feasibility. If the sample size is too small, this yields volatile or biased estimates that can steer \AlgNameShort to off-target solutions.
\end{itemize}

\mypar{Multi-Constraint Extension of \AlgNameShort}
\AlgNameShort straightforwardly extends to multiple constraints $\{c^j\}_{j=1}^J$ with bounds $\{B^j\}_{j=1}^J$ by introducing one Lagrange multiplier $\lambda_k^j$ and penalty $\rho_k^j$ per constraint, yielding the multi-constraint augmented reward
\begin{equation*}
    f_k(x) = r(x) - \sum_{j=1}^{J} \frac{\rho_k^j}{2} \left[\max\!\left(0,\; c^j(x) - B^j - \frac{\lambda_k^j}{\rho_k^j}\right)\right]^2.
\end{equation*}
All steps in Alg.~\ref{alg:cfo} apply coordinate-wise to $(\lambda_k^j, \rho_k^j)$, and the feasibility and optimality guarantees of Sec.~\ref{sec:guarantees} carry over by standard arguments in the augmented Lagrangian literature~\citep{birgin_practical_2014}. This enables, for instance, jointly enforcing an energy constraint and a validity constraint on the molecular design task.

\begin{table}[!h]
 \label{apx-tab:parameters}
\centering
\caption{Hyperparameters for \AlgNameShort and Adjoint Matching}
\vspace{2pt}
\begin{tabular}{lcccc}
\toprule
& \textbf{SG}(\ref{fig:sg-pre}-\ref{fig:sg-cfob1}) & \textbf{MoG}(\ref{fig:mog-pre}-\ref{fig:mog-am}) & \textbf{MD-QM9}(\ref{fig:dipole-qm9}-\ref{fig:energy-qm9}) & \textbf{MD-GEOM}(\ref{fig:cfo_am_dipole}-\ref{fig:cfo_am_energy})\\
\midrule
\textbf{\AlgNameShort}  & & & & \\
\midrule
Lagrangian Updates $K$  & 20     & 20    & 20    & 6 \\
$\rho_{\text{init}}$    & 0.5    & 0.5   & 2     & 1.0 \\
$\eta$                  & 1.25   & 1.25  & 1.1   & 1.25\\
$\tau$                  & 0.99   & 0.99  & 0.99  & 0.99 \\
$\lambda_{\text{min}}$  & -50.0  & -50.0 & -50.0 & -50.0  \\
\midrule
\textbf{Adjoint Matching} & & & & \\
\midrule
($1/\alpha$)             & 1e5   & 1e5  & 1e2 & 50\\
Number of Iterations $N$ & 300   & 300  & 10 & 10\\
Effective Batch Size     & 256   & 256  & 40 & 20\\
Learning Rate            & 5e-6  & 5e-6 & 1e-4 & 5e-6\\
\bottomrule
\end{tabular}
\end{table}

\newpage
\mypar{Ablation study for $\rho_\text{init}, \eta, \tau,$ and $\lambda_\text{min}$}. In the following, we provide an ablation study for the molecular design task (Figure \ref{fig:cfo_am_dipole}-\ref{fig:cfo_am_energy}) as well as the MoG task (Figure \ref{fig:mog-cfo}).

\begin{table}[!ht]
\centering
\caption{Ablation Study for MoG (\ref{fig:mog-cfo}) and Molecular Design Tasks (\ref{fig:cfo_am_dipole}-\ref{fig:cfo_am_energy})}
\label{apx-tab:ablation_study}
\begin{tabular}{c|cc|cc}
\toprule
\multirow{2}{*}{value} &
\multicolumn{2}{c|}{MoG Task} &
\multicolumn{2}{c}{Molecular Design Task} \vspace{4pt}
\\
 & $\mathbb{E}[r(x)]\uparrow$ & $\mathbb{E}[c(x)]\downarrow$ &
   $\mathbb{E}[r(x)]\uparrow$ & $\mathbb{E}[c(x)]\downarrow$ \\
\midrule

\multicolumn{5}{c}{\textbf{PRE}}\\
\midrule
-- & $-7.32 \pm 0.08$ & $0.58 \pm 0.02$ & $6.55 \pm 0.07$ & $-77.86 \pm 0.22$ \\
\midrule

\multicolumn{5}{c}{\boldmath$\rho_{\text{init}}$} \\
\midrule
0.1   & $-4.49 \pm 0.06$ & $0.24 \pm 0.02$ & $8.43 \pm 0.12$ & $-82.21 \pm 0.33$ \\
1.0   & $-4.88 \pm 0.07$ & $0.10 \pm 0.01$ & $8.36 \pm 0.11$ & $-81.99 \pm 0.45$ \\
10.0  & $-5.62 \pm 0.09$ & $0.10 \pm 0.01$ & $8.22 \pm 0.08$ & $-81.91 \pm 0.37$ \\
\midrule

\multicolumn{5}{c}{\boldmath$\eta$} \\
\midrule
1.0   & $-4.56 \pm 0.05$ & $0.17 \pm 0.01$ & $8.33 \pm 0.11$ & $-82.22 \pm 0.39$ \\
1.25  & $-4.75 \pm 0.06$ & $0.12 \pm 0.01$ & $8.30 \pm 0.12$ & $-81.99 \pm 0.34$ \\
2.0   & $-5.34 \pm 0.15$ & $0.10 \pm 0.01$ & $8.39 \pm 0.27$ & $-81.98 \pm 0.46$ \\
\midrule

\multicolumn{5}{c}{\boldmath$\lambda_{\min}$} \\
\midrule
0.0    & $-4.84 \pm 0.04$ & $0.16 \pm 0.01$ & $8.39 \pm 0.10$ & $-81.85 \pm 0.31$ \\
-1.0   & $-4.40 \pm 0.76$ & $0.26 \pm 0.02$ & $8.30 \pm 0.12$ & $-81.80 \pm 0.42$ \\
-10.0  & $-4.75 \pm 0.06$ & $0.12 \pm 0.01$ & $8.26 \pm 0.11$ & $-82.13 \pm 0.39$ \\
-50.0  & $-4.75 \pm 0.06$ & $0.12 \pm 0.01$ & $8.35 \pm 0.13$ & $-82.16 \pm 0.48$ \\
\midrule

\multicolumn{5}{c}{\boldmath$\tau$} \\
\midrule
0.5   & $-5.02 \pm 0.05$ & $0.10 \pm 0.01$ & $8.34 \pm 0.11$ & $-82.06 \pm 0.32$ \\
0.75  & $-4.98 \pm 0.05$ & $0.10 \pm 0.01$ & $8.31 \pm 0.13$ & $-82.05 \pm 0.40$ \\
0.9   & $-4.82 \pm 0.07$ & $0.11 \pm 0.01$ & $8.31 \pm 0.12 $ & $-81.93 \pm 0.40$ \\
0.99  & $-4.75 \pm 0.06$ & $0.12 \pm 0.01$ & $8.39 \pm 0.13$ & $-82.27 \pm 0.33$ \\
\bottomrule
\end{tabular}
\end{table}

Across tasks, CFO's sensitivity to hyperparameters varies: while the MoG task exhibits clear shifts in reward and constraint satisfaction across settings, the molecular design task remains highly robust, with only minor fluctuations. Larger initial $\rho_\text{init}$ and higher $\eta$ consistently tighten constraint satisfaction at the cost of modestly reduced reward, whereas $\lambda_\text{min}$ and $\tau$ have a lower effect. The lower effect of $\lambda_\text{min}$ likely stems from $\lambda$ rarely reaching its lower bound, and the contraction parameter barely impacts updates. A separate batch-size ablation on MoG shows that larger batches significantly improve constraint satisfaction and reward maximization.

\begin{table}[!ht]
\centering
\caption{Ablation Study for the MoG task with different batch sizes}
\begin{tabular}{ccc}
\toprule
value & $\mathbb{E}[r(x)]\uparrow$ & $\mathbb{E}[c(x)]\downarrow$ \\
\midrule

\multicolumn{3}{c}{Batch Size}\\
\midrule
  $8$   & $-5.16 \pm 0.11$ & $0.36 \pm 0.04$ \\
  $32$  & $-4.93 \pm 0.08$ & $0.27 \pm 0.05$ \\
  $128$ & $-4.74 \pm 0.06$ & $0.14 \pm 0.02$ \\
  $512$ & $-4.68 \pm 0.05$ & $0.11 \pm 0.01$ \\
\midrule

\end{tabular}
\end{table}
  \newpage

\newpage
\section{Proofs}
\label{apx-sec:proofs}

Before we present a proof of the theorems in Section \ref{sec:guarantees}.
We will transform the main problem in Eq.\ \ref{eq:constrained_gen_problem} to a simpler form.
First, we recall that the policy $\pi$ is a vector field.
It has been shown before that the ODE in Eq.\ \ref{eq:flow} and a stochastic differential equation (SDE) of the form 
\begin{equation} \label{eq:sde}
    dX_t =  b(X_t,t) dt + \sigma(t) dB_t, \; X_0 \sim p_0,
\end{equation}
with drift $b: \R^d \times [0,1] \to \R^d$, diffusion coefficient $\sigma: [0,1] \to \R_{\geq 0}$ and Brownian motion $B_t$ induce the same marginals $\{p_t\}$.
For an exact definition of $b$ and a proof of this statement, we refer to~\citep{domingo-enrich_adjoint_2025}.
Controlling this SDE can be done by adjusting the drift as follows~\citep{tang_fine-tuning_2024, domingo-enrich_adjoint_2025}:
\begin{equation*}
    dX_t = \left( b(X_t,t) + \sigma(t) u(X_t,t) \right) dt + \sigma(t) dB_t, \; X_0 \sim p_0,
\end{equation*}
where $u: \R^d \times [0,1] \to \R^d$ is a control vector field, this means the pre-trained model is a controlled model with $u \equiv 0$. With these notational changes, we reformulate the optimization problem in Eq.\ \ref{eq:constrained_gen_problem} in terms of the controlled diffusion process $\Xbf^u \sim p^u$:
\begin{equation} \label{eq:opt_control_diffusion}
    \begin{split}
        \max_{u \in \mathcal{U}} \quad &
        \Esymb_{\Xbf^u \sim p^u} \left[ r(X_1) \right] - \alpha D_{KL}(p^u_1(\cdot)||\ppre_1(\cdot))\\
        \text{s.t.} \quad & \Esymb_{\Xbf^u \sim p^u} \left[ c(X_1) \right] \leq B
    \end{split}
\end{equation}
Eq.\ \ref{eq:opt_control_diffusion} may seem the same as Eq.\ \ref{eq:constrained_gen_problem}, but it is in terms of a diffusion process. This way we can calculate the KL efficiently, see~\citep[Eq. 18,][]{domingo-enrich_adjoint_2025}, by using Girsanov's theorem, which gives the relationship between the control process $u$ and the KL-Divergence:
\begin{equation*}
    D_\text{KL}(p^u(\Xbf|X_0) \; || \; \ppre(\Xbf|X_0)) = \Esymb_{\Xbf^u \sim p^u} \left[ \int_0^1 \frac{1}{2}\norm{u(X_t,t)}^2 dB_t \right]
\end{equation*}
Meaning if both processes have the same initial value $X_0$, the KL divergence between the controlled and uncontrolled process is equal to the expected value of the squared norm of the control $u$~\citep{domingo-enrich_adjoint_2025, uehara_fine-tuning_2024, tang_fine-tuning_2024}. This dependence on the initial value can be dropped when using a specific noise schedule~\citep{domingo-enrich_adjoint_2025}.
Recalling that marginals at time $t$ are $p_t(x)$, i.e. $X_t \sim p_t(x)$, then we can equivalently write the optimization problem as:
\begin{equation*}
    \begin{split}
        \max_{u \in \mathcal{U}} \quad &
        \Esymb_{\Xbf^u \sim p^u} \left[ r(X_1) \right] - 
        \alpha \Esymb \left[ \int_0^1 \frac{1}{2} \|u(X_t^u,t)\|^2 dt \right] \\
        \text{s.t.} \quad & \Esymb_{\Xbf^u \sim p^u} \left[ c(X_1) \right] \leq B
    \end{split}
\end{equation*}
Where the expectation is taken over the controlled process $\Xbf^u$.
For numerical optimization, we now assume that the control $u$ is a parametric model, typically a neural network, with parameters $\theta$.
The resulting optimization problem is then:
\begin{equation} \label{apx:main-problem}
\begin{split}
    \max_{\theta \in \R^m} \quad F(\theta) :=& \; F_r(\theta) - \alpha F_{KL}(\theta)\\
    =& \; \Esymb_{x \sim p^{u_\theta}_1} [r(x)] - \alpha \Esymb \left[ \int_0^1 \frac{1}{2} \|u_\theta(X_t,t)\|^2 dt \right]  \\
    \text{s.t.} \quad  
    G(\theta) :=& \; \Esymb_{x \sim p^{u_\theta}_1} [c(x)] - B \leq 0
\end{split}
\end{equation}
For some function $F: \R^m \to \R$ and function $G: \R^m \to \R$. This is finite-dimensional optimization over $\theta$.

Next, we present a proof that Alg.\ \ref{alg:cfo} can find a parameterized policy $\pi_\theta$, with $\theta \in \R^m$ that minimizes the infeasibility while maximizing the reward.
The proof is adapted from ``Practical Augmented Lagrangian Methods for Constrained Optimization''~\citep[][Chapter 5]{birgin_practical_2014}.

The augmented Lagrangian objective in Eq.\ \ref{eq:oracle} becomes:
\begin{equation} \label{apx:aug_lag}
    L_\rho(\theta, \lambda) =  F(\theta) - \frac{\rho}{2} \left[ \max \left(0, G(\theta) - \frac{\lambda}{\rho}\right) \right]^2
\end{equation}
where $\lambda \in \R_{\leq 0}$ is the Lagrange multiplier, $\rho > 0$ is a penalty parameter.

With this notation, the assumption on the solver becomes:
\begin{assumption}[Solver] \label{apx-assumption:solver}
    For all $k \in \N$, we obtain $u$ such that:
    \begin{equation}
        L_{\rho_k}(\theta_k, \lambda_k) \geq L_{\rho_k}(\theta, \lambda_k) - \epsilon_k \quad \forall \; \theta \in \R^m
    \end{equation}
    where the sequence $\{\epsilon_k\} \subseteq \R_+$ is bounded.
\end{assumption}
This corresponds to Assumption 5.1 from~\citep{birgin_practical_2014}.
Assumption \ref{apx-assumption:solver} states that the solver can find an approximate maximizer of the subproblem.

Next we state and prove the main result for the algorithm.
Namely, in the limit, we obtain a minimizer of the infeasibility measure.

\begin{theorem}[Feasibility of \AlgNameLong] \label{apx-thm:feasibility}
    Let $\{\theta_k\}$ be a sequence generated by Alg.\ \ref{alg:cfo} under the solver Assumption \ref{apx-assumption:solver}.
    Let $\bar{\theta}$ be a limit of the sequence $\{\theta_k\}$.
    Then, we have:
    \begin{equation}
        \maxbig{G(\bar{\theta})} \leq \maxbig{G(\theta)} \quad \forall \theta \in \R^m,
    \end{equation}
    where $ G(\theta) := \; \Esymb_{x \sim p^{u_\theta}_1} [c(x)] - B \leq 0$ and $\maxbig{\cdot} := \max \{ 0 , \cdot \}$.
\end{theorem}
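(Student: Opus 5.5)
The plan is to follow the classical infeasibility-minimization argument of \citet[Theorem~5.2]{birgin_practical_2014}, transported to the finite-dimensional reformulation in Eq.~\ref{apx:main-problem}. Throughout I will assume that $F$ and $G$ are continuous in $\theta$, which holds for a smoothly parameterized control $u_\theta$ and integrable $r,c$. I also assume that $F(\theta)$ is finite for every $\theta$. Let $\bar\theta$ be the limit of a subsequence $\{\theta_k\}_{k\in\mathcal K}$. The proof splits according to whether the penalty sequence $\{\rho_k\}$ stays bounded. By Eq.~\ref{eq:rho_update}, $\rho_k$ is nondecreasing, so it either stabilizes or tends to $+\infty$.

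\textbf{Case 1: $\{\rho_k\}$ bounded.} Since each increase multiplies $\rho_k$ by $\eta>1$, only finitely many increases can occur. Hence for all large $k$ the contraction test $V_k\le\tau V_{k-1}$ holds, and $V_k\to 0$ geometrically. I read the test in the sense of \citet{birgin_practical_2014}, namely as contraction of $|V_k|$, where $V_k$ measures the complementarity/infeasibility violation. From $V_k\to0$ I would deduce $\limsup_k G(\theta_k)\le 0$. By continuity this gives $G(\bar\theta)\le 0$, so $\maxbig{G(\bar\theta)}=0\le\maxbig{G(\theta)}$ for all $\theta$, and the claim holds trivially.

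The delicate point in this case is the sign convention. The paper writes $V_k=\min\{G_k,-\lambda_k/\rho_k\}$ with $\lambda_k\le 0$. I would first check that this coincides with the Birgin--Martínez quantity $\max\{G_k,\lambda_k/\rho_k\}$ up to sign, so that its vanishing really forces $\maxbig{G_k}\to 0$. If it does not, I would argue directly with that corrected statistic.

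\textbf{Case 2: $\rho_k\to\infty$.} This is the main step. Assumption~\ref{apx-assumption:solver}, divided by $\rho_k$, gives for any fixed $\theta$
\begin{equation*}
\frac{F(\theta_k)}{\rho_k}-\frac12\Big[\max\Big(0,G(\theta_k)-\frac{\lambda_k}{\rho_k}\Big)\Big]^2 \;\ge\; \frac{F(\theta)}{\rho_k}-\frac12\Big[\max\Big(0,G(\theta)-\frac{\lambda_k}{\rho_k}\Big)\Big]^2-\frac{\epsilon_k}{\rho_k}.
\end{equation*}
Now take $k\to\infty$ along $\mathcal K$. Because $\lambda_k\in[\lambda_{\min},0]$ by the projection in Eq.~\ref{eq:lambda_update}, we get $\lambda_k/\rho_k\to0$. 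Boundedness of $\{\epsilon_k\}$ gives $\epsilon_k/\rho_k\to0$. Continuity of $F$ at $\bar\theta$ makes $F(\theta_k)$ bounded on $\mathcal K$, so $F(\theta_k)/\rho_k\to 0$, and $F(\theta)/\rho_k\to0$ trivially. Continuity of $G$ and of $\max(0,\cdot)^2$ then yields $\maxbig{G(\bar\theta)}^2\le\maxbig{G(\theta)}^2$. Taking square roots of these nonnegative quantities gives the theorem.

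I expect the only genuine obstacles to be the regularity hypotheses (continuity of $F,G$ in $\theta$ and finiteness of $F$) and the sign bookkeeping of $V_k$ in Case~1. The limiting argument of Case~2 itself is routine once $\lambda_k$ is known to be bounded.
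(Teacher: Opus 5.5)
\textbf{Verdict.} Your Case~2 is correct and matches the paper; your Case~1 has a gap, because the check you plan will fail and your fallback changes the algorithm.

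\textbf{Case 2.} This is the direct form of the paper's contradiction argument. You divide the solver inequality by $\rho_k$ and pass to the limit along the subsequence, using $\lambda_k\in[\lambda_{\min},0]$, bounded $\epsilon_k$, and continuity of $F$ and $G$. The paper does the same thing with its constant $c$ and index $k_1$, so nothing is lost or gained.

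\textbf{The gap in Case 1.} The statistic $V_k=\min\{G_k,-\lambda_k/\rho_k\}$ is not $\pm\max\{G_k,\lambda_k/\rho_k\}$. In particular $V_k=0$ whenever $\lambda_k=0$, however large $G_k>0$ is. So $V_k\to0$ alone does not give $\maxbig{G(\theta_k)}\to0$. Arguing with a ``corrected'' statistic, or with the test $|V_k|\le\tau|V_{k-1}|$ instead of the signed test in Eq.~\ref{eq:rho_update}, proves feasibility for a modified method, not for Alg.~\ref{alg:cfo}. The paper's proof only asserts ``$|V_k|\to0$, so $\maxbig{G(\theta_k)}\to0$'', so you have found a real soft spot. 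The missing idea, for the algorithm as written, is to use the multiplier update Eq.~\ref{eq:lambda_update}.

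\textbf{How to close it.} Assume $\eta>1$, as you do; the paper's argument also needs this. Let $\rho_k=\rho$ and $V_k\le\tau V_{k-1}$ for all $k\ge k_0$.
\begin{itemize}
\item If $V_{k_1}<0$ for some $k_1\ge k_0$, then $V_k<0$ for all $k\ge k_1$. Here $V_k$ need not tend to $0$: a constant negative sequence passes the signed test. But $-\lambda_k/\rho\ge0$, so $V_k<0$ forces $G(\theta_k)<0$.
\item Otherwise $V_k\ge0$ for all $k\ge k_0$. Then $0\le V_k\le\tau^{k-k_0}V_{k_0}\to0$ and $G(\theta_k)\ge0$, so the clipped update makes $\lambda_k$ nonincreasing. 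There are two sub-cases.
\begin{itemize}
\item If $\lambda_k=0$ for all $k\ge k_0$, then Eq.~\ref{eq:lambda_update} together with $\lambda_{\min}<0$ forces $G(\theta_k)=0$.
\item Otherwise $\lambda_k\le\lambda_{k_2}<0$ for all $k\ge k_2$. Then $-\lambda_k/\rho$ is bounded away from zero, so $V_k\to0$ forces $V_k=G(\theta_k)$ eventually, i.e.\ $G(\theta_k)\to0$.
\end{itemize}
\end{itemize}
In every case $\limsup_k G(\theta_k)\le0$, and continuity of $G$ gives $\maxbig{G(\bar\theta)}=0$.

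\textbf{Under your $|V_k|$ reading.} A similar argument works. After step $k$, $\lambda$ can increase by at most $\rho\sum_{j\ge k}|V_j|\to0$. Any step with $G(\theta_j)\ge\delta>0$ must have $\lambda_j\to0$, since $|V_j|\to0$. Yet such a step pushes $\lambda_{j+1}\le\max\{\lambda_{\min},-\rho\delta\}$. Hence such steps occur only finitely often.
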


\begin{proof}
    By definition $\R^m$ is closed and $\theta_k \in \R^m$ thus $\bar{\theta} \in \R^m$.
    We consider two cases: $\{\rho_k\}$ bounded and $\rho_k \to \infty$.
    First we assume $\{\rho_k\}$ is bounded, there exists $k_0$ such that $\rho_k = \rho_{k_0}$ for all $k \geq k_0$.
    Therefore, for all $k \geq k_0$, the upper bracket of Eq.\ \ref{eq:rho_update} holds. 
    This implies that $|V_k|\to 0$, so $\maxbig{G(\theta_k)} \to 0$.
    Thus, the limit point is feasible.
    
    Now, assume that $\rho_k \to \infty$.
    Let $K \subseteq \N$ be such that:
    \begin{equation*}
        \theta_k \rightarrow \bar{\theta} \; \text{ for } \; k \in K \; \text{and} \; k \rightarrow \infty
    \end{equation*}
    Assume by contradiction that there exists $\theta \in \R^d$ such that
    \begin{equation*}
        \maxbig{G(\bar{\theta})}^2 > \maxbig{G(\theta)}^2 
    \end{equation*}
    By the continuity of $G$, the boundedness of $\left\{\lambda_k\right\}$, and the fact that $\rho_k \rightarrow \infty$, there exists $c>0$ and $k_0 \in \N$ such that for all $k \in K, k \geq k_0$:
    \begin{equation*}
        \maxbig{G(\theta_k) - \frac{\lambda_k}{\rho_k}}^2 > \maxbig{G(\theta) - \frac{\lambda_k}{\rho_k}}^2 + c
    \end{equation*}
    Therefore, for all $k \in K, k \geq k_0$:
    \begin{equation*}
        F(\theta_k) - \frac{\rho_k}{2} \left[ \maxbig{G(\theta_k) - \frac{\lambda_k}{\rho_k}}^2 \right]
        <
        F(\theta) - \frac{\rho_k}{2} \left[ \maxbig{G(\theta) - \frac{\lambda_k}{\rho_k}}^2 \right] - \frac{\rho_k c}{2} + F(\theta_k) - F(\theta)
    \end{equation*}
    Since $\lim_{k \in K} \theta_k = \bar{\theta}$, the continuity of $F$, and the boundedness of $\{\epsilon_k\}$, there exists $k_1 \geq k_0$ such that, for $k \in K$ $k \geq k_1$:
    \begin{equation*}
        \frac{\rho_k c}{2} - F(\theta_k) + F(\theta) > \epsilon_k
    \end{equation*}
    Therefore,
    \begin{equation*}
        F(\theta_k) - \frac{\rho_k}{2} \left[ \maxbig{G(\theta_k) - \frac{\lambda_k}{\rho_k}}^2 \right]
        < 
        F(\theta) - \frac{\rho_k}{2} \left[ \maxbig{G(\theta) - \frac{\lambda_k}{\rho_k}}^2 \right] - \epsilon_k
    \end{equation*}
    for $k \in K, k \geq k_1$. This contradicts Assumption \ref{apx-assumption:solver}.
\end{proof}
Theorem \ref{apx-thm:feasibility} and its proof correspond to~\citep[][Sec. 5.1]{birgin_practical_2014}.
Theorem \ref{apx-thm:feasibility} establishes that Alg.\ \ref{alg:cfo}, under the iterates given in Assumption \ref{apx-assumption:solver}, identifies minimizers of the infeasibility, i.e.,
$$ \maxbig{G(\theta)} := \; \maxbig{\Esymb_{x \sim p^{u_\theta}_1} [c(x)] - B \leq 0}. $$
Consequently, if the original optimization problem is feasible, then every limit point of the sequence produced by the algorithm is also feasible.

Next, we will see that, assuming that $\epsilon_k$ tends to zero, it is possible to prove that, in the feasible case, the algorithm asymptotically finds a global maximizer of the problem in Eq.\ \ref{eq:constrained_gen_problem}.

\begin{theorem}[Optimality of \AlgNameLong] \label{apx-thm:optimality}
    Let $\{\theta_k\} \subset \R^d$ be a sequence generated by Alg.\ \ref{alg:cfo} under Assumption \ref{apx-assumption:solver} and $\lim_{k\rightarrow \infty} \epsilon_k = 0$.
    Let $\bar{\theta}\in \R^m$ be a limit of the sequence $\{\theta_k\}$.
    Suppose that $\maxbig{G(\theta)} = 0$, then $\bar{\theta}$ is a global maximizer of Eq.\ \ref{eq:constrained_gen_problem}.
\end{theorem}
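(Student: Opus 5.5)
The argument adapts the standard augmented Lagrangian optimality proof of \citet[][Thm.~5.2]{birgin_practical_2014}. Fix a subsequence $K\subseteq\N$ with $\theta_k\to\bar\theta$ for $k\in K$. Since the problem is feasible, Theorem~\ref{apx-thm:feasibility} gives $\maxbig{G(\bar\theta)}=0$, so $\bar\theta$ is feasible. It remains to show $F(\bar\theta)\ge F(z)$ for every feasible $z\in\R^m$, i.e.\ every $z$ with $G(z)\le 0$. Throughout, $F$ and $G$ are treated as continuous in $\theta$, as in the proof of Theorem~\ref{apx-thm:feasibility}.

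\textbf{Step 1: compare the iterate with a feasible $z$.} Assumption~\ref{apx-assumption:solver} applied with $\theta=z$ gives
\[
F(\theta_k)-\tfrac{\rho_k}{2}\maxbig{G(\theta_k)-\tfrac{\lambda_k}{\rho_k}}^2 \;\ge\; F(z)-\tfrac{\rho_k}{2}\maxbig{G(z)-\tfrac{\lambda_k}{\rho_k}}^2-\epsilon_k .
\]
Because $G(z)\le 0$ and $-\lambda_k/\rho_k\ge 0$, we have $\maxbig{G(z)-\lambda_k/\rho_k}\le -\lambda_k/\rho_k$. Dropping the nonnegative penalty on the left then yields
\[
F(\theta_k)\;\ge\; F(z)-\frac{\lambda_k^2}{2\rho_k}-\epsilon_k .
\]

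\textbf{Step 2: split on $\{\rho_k\}$, as in Theorem~\ref{apx-thm:feasibility}.}

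If $\rho_k\to\infty$, then since $\lambda_k\in[\lambda_{\min},0]$ is bounded, $\lambda_k^2/(2\rho_k)\to 0$. Together with $\epsilon_k\to0$ and continuity of $F$, taking limits along $K$ gives $F(\bar\theta)\ge F(z)$.

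If $\{\rho_k\}$ is bounded, then $\rho_k=\bar\rho$ eventually, and the acceptance test $V_k\le\tau V_{k-1}$ holds eventually, so $V_k\to0$. Step 1's crude bound is not enough here: the term $\lambda_k^2/(2\bar\rho)$ need not vanish. Instead I will exploit $V_k\to0$ to show that the penalty terms on both sides of the inequality in Step 1 agree asymptotically. Concretely, $V_k\to 0$ means $\min\{G(\theta_k),-\lambda_k/\bar\rho\}\to0$, so for each $k$ either $\lambda_k\to0$ or $G(\theta_k)\to0$.

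\textbf{Main obstacle: the bounded-$\rho$ case.} I would compute the penalty difference directly:
\[
\tfrac{\bar\rho}{2}\Bigl(\maxbig{G(\theta_k)-\lambda_k/\bar\rho}^2-\lambda_k^2/\bar\rho^2\Bigr).
\]
I claim this difference tends to $0$. When $G(\theta_k)\to 0$, the first term approaches $\lambda_k^2/\bar\rho^2$. When $\lambda_k\to0$, both terms vanish, using $\maxbig{G(\theta_k)}\to0$ from feasibility. Granting the claim, keeping the penalty term in Step 1 gives $F(\theta_k)\ge F(z)-\epsilon_k-o(1)$. Here the penalty bound for $z$, namely $\maxbig{G(z)-\lambda_k/\bar\rho}^2\le\lambda_k^2/\bar\rho^2$, is used rather than discarded. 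Passing to the limit along $K$ gives $F(\bar\theta)\ge F(z)$.

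The delicate point is the mixed case, where $G(\theta_k)$ is strictly negative while $\lambda_k\to0$ only slowly. In that case $\maxbig{G(\theta_k)-\lambda_k/\bar\rho}\le -\lambda_k/\bar\rho\to 0$, which still forces the difference to $0$. I expect this bookkeeping, which mirrors the complementarity argument in \citet{birgin_practical_2014}, to be the crux of the proof.

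Since $z$ was an arbitrary feasible point, $\bar\theta$ is a global maximizer of Eq.~\ref{eq:constrained_gen_problem}.
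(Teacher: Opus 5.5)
Your proposal is correct and follows the paper's proof. It uses the same subsequence $K$, feasibility of $\bar\theta$ from Theorem~\ref{apx-thm:feasibility}, the solver inequality against a feasible point with the bound $\maxbig{G(z)-\lambda_k/\rho_k}^2\le(\lambda_k/\rho_k)^2$, and the identical treatment of the case $\rho_k\to\infty$. The only variation is in the bounded-$\rho$ case: the paper extracts a further subsequence with $\lambda_k\to\lambda^*$ and splits on $G(\bar\theta)=0$ versus $G(\bar\theta)<0$ (which forces $\lambda^*=0$), whereas you show directly that the penalty discrepancy $\tfrac{\bar\rho}{2}\bigl(\maxbig{G(\theta_k)-\lambda_k/\bar\rho}^2-\lambda_k^2/\bar\rho^2\bigr)$ vanishes along $K$, which is the same complementarity argument applied before rather than after the limit and avoids the paper's unstated need for that further subsequence to be nested in $K$.
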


\begin{proof}
    Let $K \subseteq \N$ be such that.
    \begin{equation*}
        \theta_k \rightarrow \bar{\theta} \; \text{ for } \; k \in K \; \text{ and } \; k \rightarrow \infty
    \end{equation*}
    By assumption, the problem is feasible, thus, by Theorem \ref{apx-thm:feasibility}, we have that $\bar{\theta}$ is feasible.
    Let $\theta \in \R^m$ be such that $G(\theta) \leq 0$. By the definition of the algorithm, we have that
    \begin{equation} \label{proof:thm2_1}
        F(\theta_k) - \frac{\rho_k}{2} \left[ \maxbig{G(\theta_k) - \frac{\lambda_k}{\rho_k}}^2 \right]
        \geq
        F(\theta) - \frac{\rho_k}{2} \left[ \maxbig{G(\theta) - \frac{\lambda_k}{\rho_k}}^2 \right] - \epsilon_k
    \end{equation}
    for all $k \in \N$, as well as by assumption $G(\theta) \leq 0$, we have that
    \begin{equation} \label{proof:thm2_2}
        \maxbig{G(\theta) - \frac{\lambda_k}{\rho_k}}^2 \leq \left( \frac{\lambda_k}{\rho_k} \right)^2.
    \end{equation}
    We again consider the two cases: $\rho_k \rightarrow \infty$ and $\{\rho_k\}$ bounded.
   
    In the first case, we assume $\rho_k \rightarrow \infty$. By Eq.\ \ref{proof:thm2_1} and Eq.\ \ref{proof:thm2_2}, we have
    \begin{equation*}
        F(\theta_k) \geq F(\theta_k) - \frac{\rho_k}{2} \left[ \maxbig{G(\theta_k) - \frac{\lambda_k}{\rho_k}}^2 \right]
        \geq 
        F(\theta) - \frac{(\lambda_k)^2}{2\rho_k} - \epsilon_k.
    \end{equation*}
    Taking limits for $k \in K$, and using that $\theta_k \rightarrow \bar{\theta}$, we have that $\lim_{k \in K}(\lambda_k)^2 / \rho_k = 0$ and $\lim_{k \in K}\epsilon_k = 0$, by the continuity of $F$ and the convergence of $\theta_k$, we get
    \begin{equation*}
        F(\bar{\theta}) \geq F(\theta).
    \end{equation*}
    Since $\theta$ is an arbitrary feasible element of $\R^m$, $\bar{\theta}$ is a global optimizer.

    For the second case, we assume $\{\rho_k\}$ is bounded, 
    there exists $k_0 \in \N$ such that $\rho_k = \rho_{k_0}$ for all $k \geq k_0$.
    Therefore, by Assumption \ref{apx-assumption:solver}, Eq.\ \ref{proof:thm2_1} holds for all $k \geq k_0$, 
    and Eq.\ \ref{proof:thm2_2} holds with $\rho= \rho_{k_0}$.
    Thus,
    \begin{equation*}
        F(\theta_k) - \frac{\rho_{k_0}}{2} \left[ \maxbig{G(\theta_k) - \frac{\lambda_k}{\rho_{k_0}}}^2 \right]
        \geq 
        F(\theta) - \frac{(\lambda_k)^2}{2\rho_{k_0}} - \epsilon_k.
    \end{equation*}
    for all $k \geq k_0$.
    Let $K_1 \subseteq \N$ and $\lambda^* \in \R_{\leq0}$ be such that: $\lim_{k \in K_1} \lambda_k = \lambda^*$.
    By the feasibility of $\bar{\theta}$, taking limits in the inequality above for $k \in K_1$, we get
    \begin{equation} \label{proof:thm2_3}
        F(\bar{\theta}) - \frac{\rho_{k_0}}{2} \left[ \maxbig{G(\bar{\theta}) - \frac{\lambda^*}{\rho_{k_0}}}^2 \right]
        \geq 
        F(\theta) - \frac{(\lambda^*)^2}{2\rho_{k_0}} - \epsilon_k.
    \end{equation}
    Now, if $G(\bar{\theta}) = 0$, since $\lambda^* / \rho_{k_0} \geq 0$, we have that
    \begin{equation*}
         \maxbig{G(\bar{\theta}) - \frac{\lambda^*}{\rho_{k_0}}}^2= \left(\frac{\lambda^*}{\rho_{k_0}}\right)^2
    \end{equation*}
    Therefore, by Eq.\ \ref{proof:thm2_3},
    \begin{equation} \label{proof:thm2_4}
        F(\bar{\theta}) - \frac{\rho_{k_0}}{2} \left[ \maxbig{G(\bar{\theta}) - \frac{\lambda^*}{\rho_{k_0}}}^2 \right]
        \geq 
        F(\theta) - \frac{(\lambda^*)^2}{2\rho_{k_0}}.
    \end{equation}
    But, by Eq.\ \ref{eq:stats},
    $\lim_{k \rightarrow \infty} \min\{ G(\theta_k), - \lambda^* / \rho_{k_0}\} = 0$.
    Therefore, if $G(\bar{\theta}) < 0$, we necessarily have that $\lambda^* = 0$.
    Therefore, Eq.\ \ref{proof:thm2_4} implies that $F(\bar{\theta}) \geq F(\theta)$.
    Since $\theta$ is an arbitrary feasible element of $\R^m$, $\bar{\theta}$ is a global optimizer.
\end{proof}

We want to make two remarks about Theorem \ref{apx-thm:optimality}: first, as mentioned in Sec.\ \ref{sec:guarantees},  having access to such a solver is difficult and, in practice, rarely the case.
Secondly, we refer the reader to~\citep[][Sec. 5.2]{birgin_practical_2014} for a discussion about the sets $K$ and $K_1$, how they are connected to the convexity of $F$ and $G$, and the corresponding theorem and proof.

 \newpage

\end{document}